\DeclareMathAlphabet{\pazocal}{OMS}{zplm}{m}{n}
\newcommand{\mathify}[1]{\ifmmode{#1}\else\mbox{$#1$}\fi}
\newcommand{\bea}{\begin{eqnarray}}
\newcommand{\eea}{\end{eqnarray}}
\newcommand{\beas}{\begin{eqnarray*}}
\newcommand{\eeas}{\end{eqnarray*}}
\newcommand{\suchthat}{\;\ifnum\currentgrouptype=16 \middle\fi|\;}
\newcommand{\dbar}{\;\ifnum\currentgrouptype=16 \middle \fi\|\;}
\theoremstyle{plain}
\newtheorem{theorem}{Theorem}[section]
\theoremstyle{definition}
\theoremstyle{remark}
\theoremstyle{conjecture}
\newacronym{iot}{IoT}{Internet of Things}
\newacronym{ml}{ML}{machine learning}
\newacronym{dl}{DL}{Deep Learning}
\newacronym{marl}{MARL}{multi-agent reinforcement learning}
\newacronym{rl}{RL}{reinforcement learning}
\newacronym{decpomdp}{Dec-POMDP}{Decentralized Partially Observable Markov Decision Process }
\newacronym{pomdp}{POMDP}{Partially Observable Markov Decision Process}
\newacronym{uav}{UAV}{Unmanned Aerial Vehicle}
\newacronym{dqn}{DQN}{Deep Q-Network}
\newacronym{dnn}{DNN}{deep neural network}
\newacronym{dial}{DIAL}{Differentiable Inter-Agent Learning}
\newacronym{mdp}{MDP}{Markov decision process}
\newacronym{fov}{FoV}{Field of View}
\newacronym{cnn}{CNN}{Convolutional Neural Network}
\newacronym{nn}{NN}{neural network}
\newacronym{ddql}{DDQL}{Distributed Deep Q-Learning}
\newacronym{pdf}{PDF}{Probability Density Function}
\newacronym{ndpomdp}{ND-POMDP}{Networked Distributed Partially Observable Markov Decision Process}
\newacronym{radam}{RAdam}{Rectified Adam}
\newacronym{cdf}{CDF}{cumulative distribution function}
\newacronym{mpc}{MPC}{Model Predictive Control}
\newacronym{rv}{rv}{Random Variable}
\newacronym{qoe}{QoE}{Quality of Experience}
\newacronym{tlc}{TLC}{Telecommunications}
\newacronym{cml}{CML}{communications for machine learning}
\newacronym{mlc}{MLC}{machine learning for communications}
\newacronym{drl}{DRL}{deep reinforcement rearning}
\newacronym{rf}{RF}{Radio Frequency}
\newacronym{urllc}{URLLC}{Ultra-Reliable and Low-Latency Communications}
\newacronym{fl}{FL}{federated learning}
\newacronym{kpi}{KPI}{Key Performance Indicators}
\newacronym{mec}{MEC}{Mobile Edge Computing}
\newacronym{ei}{EI}{Edge Intelligence}
\newacronym{bs}{BS}{base station}
\newacronym{sdn}{SDN}{Software Defined Networking}
\newacronym{mimo}{MIMO}{Multiple-Input Multiple-Output}
\newacronym{gp}{GP}{Gaussian Process}
\newacronym{iiot}{IIoT}{Industrial Internet of Things}
\newacronym{csi}{CSI}{Channel State Information}
\newacronym{sgd}{SGD}{Stochastic Gradient Descent}
\newacronym{iid}{i.i.d.}{independent and identically distributed}
\newacronym{ofdm}{OFDM}{Orthogonal Frequency Division Multiplexing}
\newacronym{los}{LOS}{Line-of-Sight}
\newacronym{nlos}{NLOS}{Non-Line-of-Sight}
\newacronym{snr}{SNR}{Signal to Noise Ratio}
\newacronym{rb}{RB}{Resource Block}
\newacronym{6g}{6G}{sixth generation}
\newacronym{ai}{AI}{artificial intelligence}
\newacronym{sfl}{SFL}{Synchronous Federated Learning}
\newacronym{frfl}{FRFL}{Fixed Rate Federated Learning}
\newacronym{pgm}{PGM}{Probabilistic Graphical Model}
\newacronym{hmm}{HMM}{Hidden Markov Model}
\newacronym{elbo}{ELBO}{Evidence Lower Bound}
\newacronym{pmf}{PMF}{Probability Mass Function}
\newacronym{smab}{SMAB}{Stochastic Multi-Armed Bandit}
\newacronym{mab}{MAB}{multi-armed bandit}
\newacronym{mc}{MC}{Monte Carlo}
\newacronym{is}{IS}{Importance Sampling}
\newacronym{dms}{DMS}{discrete memoryless source}
\newacronym{ucb}{UCB}{upper confidence bound}
\newacronym{ser}{SER}{Symbol Error Rate}
\newacronym{sc}{SC}{Semantic Communications}
\newacronym{voi}{VoI}{Value of Information}
\newacronym{nlp}{NLP}{natural language processing}
\newacronym{ts}{TS}{Thompson Sampling}
\newacronym{cmab}{CMAB}{contextual multi-armed bandit}
\newacronym{rccmab}{RC-CMAB}{rate-constrained \gls{cmab}}
\newacronym{rcmab}{R-CMAB}{remote \gls{cmab}}
\newacronym{ib}{IB}{information bottleneck}
\newacronym{merl}{MERL}{maximum entropy reinforcement learning}
\newacronym{fedpm}{$\mathtt{FedPM}$}{Federated Probabilistic Mask Training}
\newacronym{lth}{LTH}{Lottery Ticket Hypothesis}
\newacronym{dp}{DP}{differential privacy}
\newacronym{klm}{$\mathtt{KLMS}$}{KL Minimization with Side Information}
\newcolumntype{P}[1]{>{\centering\arraybackslash}p{#1}}
\newcommand{\centered}[1]{\begin{tabular}{c} #1 \end{tabular}}
\let\svthefootnote\thefootnote
\newcommand\freefootnote[1]{%
  \let\thefootnote\relax%
  \footnotetext{#1}%
  \let\thefootnote\svthefootnote%
}
\begin{document}

\runningauthor{Isik, Pase, Gunduz, Koyejo, Weissman, Zorzi}

\twocolumn[


\aistatstitle{Adaptive Compression in Federated Learning via Side Information}

\aistatsauthor{  Berivan Isik$^*$ \And Francesco Pase$^*$ \And Deniz Gunduz }

\aistatsaddress{ Stanford University \And  University of Padova \And Imperial College London } 
\aistatsauthor{Sanmi Koyejo \And Tsachy Weissman \And Michele Zorzi}

\aistatsaddress{Stanford University \And Stanford University \And University of Padova}
]

\freefootnote{$^*$Equal contribution.}
\begin{abstract}

The high communication cost of sending model updates from the clients to the server is a significant bottleneck for scalable federated learning (FL). Among existing approaches, state-of-the-art bitrate-accuracy tradeoffs have been achieved using stochastic compression methods -- in which the client $n$ sends a sample from a client-only probability distribution $q_{\phi^{(n)}}$, and the server estimates the mean of the clients' distributions using these samples. However, such methods do not take full advantage of the FL setup where the server, throughout the training process, has \emph{side information} in the form of a global distribution $p_{\theta}$ that is close to the client-only distribution $q_{\phi^{(n)}}$ \emph{in Kullback–Leibler (KL) divergence}. In this work, we exploit this \emph{closeness} between the clients' distributions $q_{\phi^{(n)}}$'s and the side information $p_{\theta}$ at the server, and propose a framework that requires approximately $D_{KL}(q_{\phi^{(n)}}|| p_{\theta})$ bits of communication. We show that our method can be integrated into many existing stochastic compression frameworks to attain the same (and often higher) test accuracy with up to $\mathbf{82}$ 
\textbf{times smaller bitrate} than the prior work -- corresponding to $\mathbf{2,650}$ \textbf{times overall compression}. 
\end{abstract}
\section{Introduction}
\label{introduction}
\looseness=-2
Federated learning (FL), while enabling model training without collecting clients' raw data, suffers from high communication costs due to the model updates communicated from the clients to the server every round \citep{kairouz2021advances}. To mitigate this cost, several communication-efficient FL strategies have been developed that compress the model updates~\citep{lin2018deep, konevcny2016federated, isik2022information, barnes2020rtop}. Many of these strategies adopt a stochastic approach that requires the client $n$ at round $t$ to send a sample $\mathbf{x}^{(t, n)}$ from a client-only distribution $q_{\mathbf{\phi}^{(t, n)}}$ that is only known by the client $n$ upon local training. In turn, the goal of the server is to estimate $\mathbb{E}_{X^{(t,n)} \sim q_{\mathbf{\phi}^{(t,n)}}, \forall n \in [N]} \left [ \frac{1}{N} \sum_{n=1}^N X^{(t,n)} \right ]$ by taking the average of the samples across clients $\frac{1}{N} \sum_{n=1}^N \mathbf{x}^{(t, n)}$. Here, we denote by $N$ the number of clients and by $[N]$ the set $\{1, \dots, N\}$. We show that in many stochastic FL settings, there exists a global distribution $p_{\theta^{(t)}}$ that is known globally by both the server and the clients. This distribution $p_{\theta^{(t)}}$ is close in KL divergence to the client-only distributions $q_{\mathbf{\phi}^{(t,n)}}$'s, which are unknown by the server.\footnote{As we will exemplify later, this global distribution  $p_{\theta^{(t)}}$ is naturally present in many FL frameworks
, i.e., we do not introduce or require an extra distribution.} The proposed method, KL Minimization with Side Information (KLMS), exploits this closeness to reduce the cost of communicating samples $\mathbf{x}^{(t,n)}$. We briefly summarize three of such stochastic FL frameworks by pointing to the corresponding global $p_{\theta^{(t)}}$ and client-only $q_{\mathbf{\phi}^{(t,n)}}$ distributions in Section~\ref{sec:intro_examples} as examples of different stochastic FL setups. 



\begin{figure*}
     \centering
     \begin{center}
    \includegraphics[width=\textwidth]{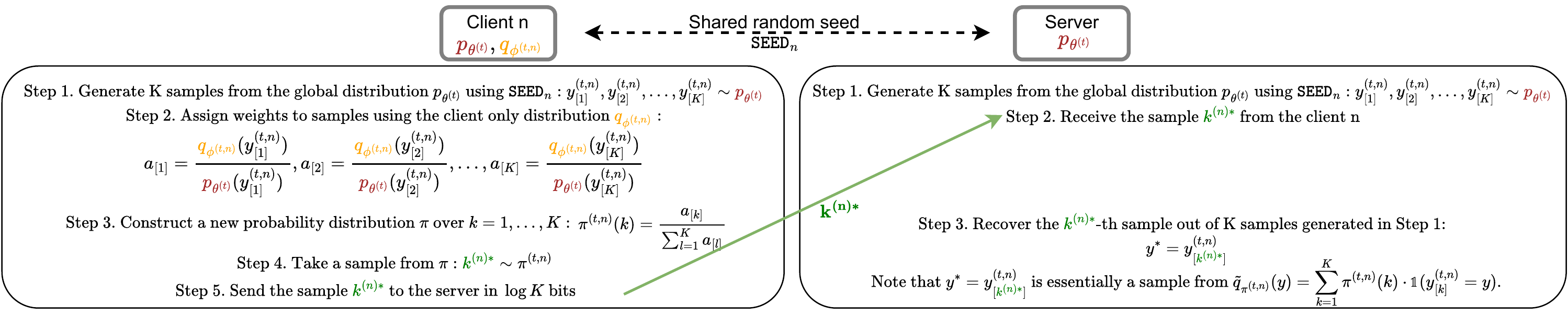}
    \end{center}
     \caption{$\mathtt{KLMS}$ Outline. Note that the final sample $y^*$ is a sample from $\tilde{q}_{\pi^{(t, n)}}(\mathbf{y}) = \sum_{k=1}^K \pi^{(t,n)}(k) \cdot \mathbf{1}(\mathbf{y}_{[k]}^{(t, n)} = \mathbf{y})$.
     } 
     \label{klms_diagram}
\end{figure*}

Before discussing such stochastic FL frameworks $\mathtt{KLMS}$ can be adapted into, we first give a rough outline of how $\mathtt{KLMS}$ actually works in general whenever the global $p_{\theta^{(t)}}$ and client-only $q_{\mathbf{\phi}^{(t,n)}}$ distributions are naturally present in an FL framework. Figure~\ref{klms_diagram} describes the key idea $\mathtt{KLMS}$ relies on (more details in Section~\ref{method} and Appendix~\ref{app_klm}): Instead of communicating the deterministic value of a sample $\mathbf{x}^{(t,n)} \sim q_{\phi^{(t,n)}}$, client $n$ can communicate a sample $\mathbf{y}^{(t,n)}$ from another distribution $\mathbf{y}^{(t,n)} \sim \tilde{q}_{\pi^{(t,n)}}$, which is less costly to communicate compared to $\mathbf{x}^{(t,n)}$, and where the discrepancy due to sampling from this new distribution $\tilde{q}_{\pi^{(t,n)}}$ is not significant. As shown in Figure~\ref{klms_diagram}, to construct $\tilde{q}_{\pi}$, we use the global distribution $p_{\theta^{(t)}}$ (which is known by both the server and the clients) and an importance sampling method as follows: both client $n$ and the server generate $K$ samples from the global distribution $p_{\theta^{(t)}}$ (\textbf{use of side information}); then, the client chooses one of these samples based on the importance weights assigned using the client-only distribution $q_{\phi^{(t,n)}}$ (\textbf{importance sampling}); finally the client sends its choice to the server in $\log K$ bits. We show that this procedure yields an arbitrarily small discrepancy in the estimation when the number of samples in Step 1 in Figure~\ref{klms_diagram} is $K \simeq \exp \left (D_{KL}(q_{\phi^{(n)}} \| p_{\theta}) \right )$, i.e., bitrate is $\log K \simeq D_{KL}(q_{\phi^{(n)}} \| p_{\theta})$ bits, with improvements (specific to the FL setting) over prior work \citep{havasi2018minimal, triastcyn2021dp}.

Clearly, to get the most communication gain out of $\mathtt{KLMS}$, we need global $p_{\theta}$ and client-only $q_{\phi^{(n)}}$ distributions that are close in KL divergence. We show the existence of such distributions in many stochastic FL frameworks with concrete examples in Section~\ref{sec:intro_examples}. Each of these FL frameworks we will cover, namely \texttt{FedPM}~\citep{isik2023sparse}, \texttt{QSGD}~\citep{alistarh2017qsgd}, and Federated \texttt{SGLD}~\citep{vono2022qlsd}; naturally induces a client-only distribution $q_{\mathbf{\theta}^{(t,n)}}$ that clients want to send a sample from, and a global distribution $p_{\theta^{(t)}}$ that is available to both the clients and the server -- playing the role of side information. \textbf{Note that these distributions are already present in the original frameworks without any additional assumption or modification from us.} In each case, these distributions are expected to become closer in KL divergence as training progresses as we will explain in Section~\ref{sec:intro_examples}. We show that $\mathtt{KLMS}$ reduces the communication cost down to this \emph{fundamental quantity} (KL divergence) in each scenario, resulting in up to $\mathbf{82}$ \textbf{times improvement} in communication efficiency over \texttt{FedPM}, \texttt{QLSD}, and \texttt{QSGD} among other non-stochastic competitive baselines. To achieve this efficiency, we use an importance sampling algorithm  \citep{chatterjee2018sample} -- thus extending the previous theoretical guarantees to the distributed setting. Different from prior work that used importance sampling in the centralized setting to compress model parameters \citep{havasi2018minimal} or focused on differential privacy implications \citep{shah2022optimal, triastcyn2021dp}, $\mathtt{KLMS}$ captures the side information that is already present in many FL frameworks by selecting more natural global $p_{\mathbf{\theta}^{(t)}}$ and client-only $q_{\mathbf{\phi}^{(t,n)}}$ distributions, and optimizes the bit allocation across both the training rounds and the model coordinates in an adaptive way to achieve the optimal bitrate, while also eliminating a hyperparameter required by prior work \citep{havasi2018minimal, triastcyn2021dp}. We note that arbitrary choices of global and client-only distributions as in these works lead to suboptimally as the KL divergence between two arbitrary distributions would not be necessarily small. However, we discover natural choices for both distributions in existing FL frameworks \textbf{(without any modification on the frameworks)} that yield significantly smaller KL divergence and hence superior compression than prior work. Our contributions:
\begin{enumerate}
    \item We propose a road map to utilize various forms of side information available to both the server and the clients to reduce the communication cost in FL. We give concrete examples of how to send model updates under different setups. See Section~\ref{sec:intro_examples} for details.
\item We extend the importance sampling results in~\citep{chatterjee2018sample} to the distributed setting. 

\item We propose an adaptive bit allocation strategy that eliminates a hyperparameter required by prior work, and allows a better use of the communication budget across the model coordinates and rounds.

\item We demonstrate the efficacy of \texttt{KLMS} on MNIST, EMNIST, CIFAR-10, and CIFAR-100 datasets with up to $\mathbf{82}$ \textbf{times gains} in bitrate over relevant baselines.  This corresponds to up to an \textbf{overall 2,650 times compression} without a significant accuracy drop.
\end{enumerate}

\section{Related Work}
\label{related}
\textbf{Communication-Efficient FL}: Existing frameworks reduce the communication cost by sparsification~\citep{aji2017sparse, wang2018atomo, lin2018deep}, quantization~\citep{suresh2017distributed, vono2022qlsd, wen2017terngrad, mayekar2021wyner}, low-rank factorization~\citep{basat2022quick, mohtashami2022masked, vogels2019powersgd}, sketching~\citep{rothchild2020fetchsgd, song2023sketching}; or by training sparse subnetworks instead of the full model~\citep{isik2023sparse, li2020lotteryfl, li2021fedmask, liu2021fedprune}. Among them, those based on stochastic updates have shown success over the deterministic ones in similar settings. For instance, as will become clear in Section~\ref{sec:intro_examples}, for finding sparse subnetworks within a large random model, \texttt{FedPM}~\citep{isik2023sparse} takes a stochastic approach by training a probability mask and outperforms other methods that find sparse subnetworks deterministically~\citep{li2021fedmask, mozaffari2021frl, vallapuram2022hidenseek} with significant accuracy and bitrate gains. Similarly, for the standard FL setting (training model parameters), \texttt{QSGD}~\citep{alistarh2017qsgd} is an effective stochastic quantization method -- outperforming most other quantization schemes such as \texttt{SignSGD}~\citep{bernstein2018signsgd} and \texttt{TernGrad}~\citep{wen2017terngrad} by large margins. Lastly, in the Bayesian FL setting, \texttt{QLSD}~\citep{vono2022qlsd} proposes a Bayesian counterpart of \texttt{QSGD}, and performs better than other baselines \citep{chen2021fedbe, plassier2021dg}. While all these stochastic approaches already perform better than the relevant baselines, in this work, we show that they still do not take full advantage of the \emph{side information} (or global distribution) available to the server. We provide a guideline on how to find useful side information under each setting and introduce $\mathtt{KLMS}$ that reduces the communication cost (by $82$ times over the baselines) to the fundamental distance between the client's distribution that they want to communicate samples from and the side information at the server.

\textbf{Importance Sampling:} Our strategy is inspired by the importance sampling algorithm studied in \citep{chatterjee2018sample, harsha2007communication, theis2022algorithms, li2018strong, flamich2024faster}, and later applied for model compression~\citep{havasi2018minimal}, learned image compression~\citep{flamich2020compressing, flamich2022fast}, and compressing differentially private mechanisms~\citep{shah2022optimal, triastcyn2021dp, isik2023exact}. One relevant work to ours is \citep{havasi2018minimal}, which applies the importance sampling strategy to compress Bayesian neural networks. Since the model size is too large to be compressed at once, they compress fixed-size blocks of the model parameters separately and independently. As we elaborate in Section~\ref{method}, this can be done much more efficiently by choosing the block size adaptively based on the information content of each parameter. Another relevant work is \texttt{DP-REC}~\citep{triastcyn2021dp}, which again applies the importance sampling technique to compress the model updates in FL, while also showing differential privacy implications. However, since their training strategy is fully deterministic, the choice of global and client-only distributions is somewhat arbitrary. Instead, in our work, the goal is to exploit the available side information to the full extent by choosing natural global and client-only distributions -- which improves the communication efficiency over \texttt{DP-REC} significantly. Another factor in this improvement is the adaptive bit allocation strategy mentioned above. Our experimental results demonstrate that these two improvements are indeed critical for boosting the accuracy-bitrate tradeoff. Finally, we extend the theoretical guarantees of importance sampling, which quantifies the required bitrate for a target discrepancy (due to compression), to the distributed setting, where we can recover the existing results in \citep{chatterjee2018sample} as a special case by setting $N=1$.

\section{Preliminaries}
\label{sec:intro_examples}

We now briefly summarize three examples of stochastic FL frameworks that $\mathtt{KLMS}$ can be integrated into by highlighting the natural choices for global $p_{\theta}$ and client-only $q_{\phi^{(n)}}$ distributions.

\textbf{FedPM}~\citep{isik2023sparse} freezes the parameters of a randomly initialized network and finds a subnetwork inside it that performs well with the initial random parameters. To find the subnetwork, the clients receive a global probability mask $\theta^{(t)}\in [0, 1]^d$ from the server that determines, for each parameter, the probability of retaining it in the subnetwork; set this as their local probability mask $\phi^{(t, n)} \gets \theta^{(t)}$; and train only this mask (not the frozen random parameters) during local training. At inference, a sample $x^{(t,n)} \in \{0, 1\}^d$ from the Bernoulli distribution $\text{Bern}(\cdot;\phi^{(t,n)})$ is taken, and multiplied element-wise with the frozen parameters of the network, obtaining a pruned random subnetwork, which is then used to compute the model outputs. Communication consists of three stages: (i) clients update their local probability masks $\phi^{(t,n)}$ through local training; (ii) at the end of local training, they send a sample $x^{(t, n)} \sim \text{Bern}(\cdot; \phi^{(t,n)})$ to the server;
(iii) the server aggregates the samples $\frac{1}{N} \sum_{n=1}^N x^{(t,n)}$, updates the global probability mask $\theta^{(t+1)}$, and broadcasts the new mask to the clients for the next round. \texttt{FedPM} achieves state-of-the-art results in accuracy-bitrate tradeoff with around $1$ bit per parameter (bpp). (full description in Appendix~\ref{appendix:fedpm}). 
As the model converges, the global probability mask $\theta^{(t)}$ and clients' local probability masks $\mathbf{\phi}^{(t,n)}$ get closer to each other (see Figures~\ref{kl_layer_round} and~\ref{fig:ablation_adaptve} for the trend of $D_{KL}(q_{\mathbf{\phi}^{(t,n)}}||p_{\theta^{(t)}}$) over time). However, no matter how close they are, \texttt{FedPM} employs approximately the same bitrate for communicating a sample from $\text{Bern}(\cdot;\phi^{(t,n)})$ to the server that knows $p_{\theta^{(t)}}$. We show that this strategy is suboptimal and applying $\mathtt{KLMS}$ with the global probability distribution $\text{Bern}(\cdot;\theta^{(t)})$ as the global distribution $p_{\theta^{(t)}}$, and the local probability distribution $\text{Bern}(\cdot;\phi^{(t,n)})$ as the client-only distribution $q_{\phi^{(t,n)}}$, provides up to $82$ times gain in compression over \texttt{FedPM}.

\textbf{QSGD}~\citep{alistarh2017qsgd}, unlike the stochastic approach in \texttt{FedPM} to train a probabilistic mask, is proposed to train a deterministic set of parameters. However, \texttt{QSGD} is itself a stochastic quantization operation. More concretely, \texttt{QSGD} quantizes each coordinate $\mathbf{v}^{(t,n)}_i$ using the following \texttt{QSGD} distribution $p_{\texttt{QSGD}}(\cdot)$):

\resizebox{\columnwidth}{!}{
  \begin{minipage}{\columnwidth}
    \begin{align}
    \begin{split}
    & p_{\texttt{QSGD}}\left(\mathbf{\hat{v}}^{(t,n)}_i\right) = \\
    &\begin{cases} \frac{s |\mathbf{v}^{(t,n)}_i|}{\|\mathbf{v}^{(t,n)}\|} -  \left\lfloor \frac{s |\mathbf{v}^{(t,n)}_i|}{\|\mathbf{v}^{(t,n)}\|} \right\rfloor  \ & \text{if} \   \mathbf{\hat{v}}^{(t,n)}_i = A(\mathbf{v}^{(t,n)}_i) \\
     \ 1 - \frac{s |\mathbf{v}^{(t,n)}_i|}{\|\mathbf{v}^{(t,n)}\|} +  \left\lfloor \frac{s |\mathbf{v}^{(t,n)}_i|}{\|\mathbf{v}^{(t,n)}\|} \right\rfloor \ & \text{if} \ \mathbf{\hat{v}}^{(t,n)}_i = B(\mathbf{v}^{(t,n)}_i)
   \end{cases},
\end{split}
   \label{eq:qsgd}
\end{align}
\end{minipage}
}
where 

\resizebox{0.8\columnwidth}{!}{
  \begin{minipage}{0.8\columnwidth}
\begin{align*}
A(\mathbf{v}^{(t,n)}_i) &= \frac{\|\mathbf{v}^{(t,n)}\| \cdot \text{sign}(\mathbf{v}^{(t,n)}_i)}{s} \left ( \left\lfloor \frac{s |\mathbf{v}^{(t,n)}_i|}{\|\mathbf{v}^{(t,n)}\|} \right\rfloor + 1\right ), \\
B(\mathbf{v}^{(t,n)}_i) &=\frac{\|\mathbf{v}^{(t,n)}\| \cdot \text{sign}(\mathbf{v}^{(t,n)}_i)}{s} \left\lfloor \frac{s |\mathbf{v}^{(t,n)}_i|}{\|\mathbf{v}^{(t,n)}\|} \right\rfloor,
\end{align*}
\end{minipage}}

and $s$ is the number of quantization levels (full description in Appendix~\ref{appendix:qsgd}). \texttt{QSGD} takes advantage of the empirical distribution of the quantized values (large quantized values are less frequent) by using Elias coding to encode them -- which is the preferred code when the small values to encode are much more frequent than the larger values \citep{elias1975universal}. However, \texttt{QSGD} still does not fully capture the distribution of the quantized values since Elias coding is not adaptive to the data. We fix this mismatch by applying $\mathtt{KLMS}$ with the \texttt{QSGD} distribution $p_{\texttt{QSGD}}(\cdot)$ as the client-only distribution $q_{\mathbf{\phi}^{(t,n)}}$, and the empirical distribution induced by the historical updates at the server from the previous round as the global distribution $p_{\theta^{(t)}}$. These two distributions are expected to be \emph{close} to each other due to the temporal correlation across rounds, as previously reported by \cite{jhunjhunwala2021leveraging, ozfatura2021time}. We demonstrate that $\mathtt{KLMS}$ exploits this closeness and outperforms vanilla \texttt{QSGD} with $10$ times lower bitrate.

\textbf{Federated SGLD}~\citep{lsd2021} is a Bayesian FL framework that learns a global posterior distribution $p_{\mathbf{\theta}}$ over the model parameters using clients' local posteriors $q_{\phi^{(n)}}$. A state-of-the-art method~\citep{vono2022qlsd} is the FL counterpart of Stochastic Gradient Langevin Dynamics (\texttt{SGLD}) \citep{welling2011bayesian}, which uses a Markov Chain Monte Carlo algorithm. Concretely, the global posterior distribution is assumed to be proportional to the product $ p_{\theta^{(t)}} \sim \prod_{n=1}^N e^{-U(\phi^{(t, n)})}$ of $N$ local unnormalized posteriors associated with each client, expressed as potential functions $\{U(\phi^{(t, n)})\}_{n=1}^N$. At each round, the clients' local posteriors are initialized with the global posterior $\phi^{(n,t)} \gets \theta^{(t)}$. Then, the clients compute an unbiased estimate of their gradients $H(\phi^{(t, n)}) = \frac{|D^{(n)}|}{|\mathcal{S}^{(t,n)}|} \sum_{j \in \mathcal{S}^{(t, n)}} \nabla U_j(\phi^{(t, n)})$, where $|D^{(n)}|$ is the size of the local dataset of client $n$, and $\mathcal{S}^{(t,n)}$ is the batch of data used to estimate the gradient. They then communicate these estimates to the server to compute

\begin{align}
    \label{eq:sgld_agg}
    \theta^{(t+1)} = \theta^{(t)} - \gamma \sum_{n=1}^N H(\phi^{(t, n)}) + \sqrt{2\gamma} \xi^{(t)}, 
\end{align}

where $\xi^{(t)}$ is a sequence of i.i.d. standard Gaussian random variables. (See Appendix~\ref{appendix:qlsd} for the details.) As reported in \citep{lsd2021, vono2022qlsd}, the sequence of global updates $\theta^{(t)}$ converges to the posterior sampling. Notice that the clients communicate their gradient vectors $H(\phi^{(t, n)})$ to the server at every round, which is as large as the model itself. To reduce this communication cost, \cite{vono2022qlsd} propose a compression algorithm called \texttt{QLSD} that stochastically quantizes the updates with essentially the Bayesian counterpart of \texttt{QSGD} \citep{alistarh2017qsgd}. However, neither \texttt{QLSD} nor the other compression baselines in the Bayesian FL literature \citep{chen2021fedbe, el2020distributed, plassier2021dg} take full advantage of the stochastic formulation of the Bayesian framework, where the server and the clients share side information (the global posterior $p_{\theta^{(t)}}$) that could be used to improve the compression gains. Instead, they quantize the updates ignoring this side information. This approach is suboptimal since (i) the precision is already degraded in the quantization step, and (ii) the compression step does not account for the side information $p_{\theta^{(t)}}$. We show that we can exploit this inherent stochastic formulation of Bayesian FL by applying $\mathtt{KLMS}$ with the global posterior distribution as the global distribution $p_{\mathbf{\theta}^{(t)}}$, and the local posterior distribution as the client-only distribution $q_{\mathbf{\phi}^{(t,n)}}$. In addition to benefiting from the side information, $\mathtt{KLMS}$ does not restrict the message domain to be discrete (as opposed to the baselines) and can reduce the communication cost by $5$ times, while also achieving higher accuracy than the baselines.
\looseness=-1

\section{KL Divergence Minimization with Side Information (\texttt{KLMS})}
\label{method}
We first describe our approach, $\mathtt{KLMS}$, in Section~\ref{sec:klm_stochastic_fl} together with theoretical guarantees; then, in Section~\ref{sec:klm_adaptive}, we introduce our adaptive bit allocation strategy to optimize the bitrate across training rounds and model coordinates to reduce the compression rate; finally, in Section~\ref{sec:klm_examples}, we give four concrete examples where $\mathtt{KLMS}$ significantly boosts the accuracy-bitrate tradeoff.

\subsection{\texttt{KLMS} for Stochastic FL Frameworks} \label{sec:klm_stochastic_fl}

We propose \texttt{KLMS} as a general recipe to be integrated into many existing (stochastic) FL frameworks to improve their accuracy-bitrate performance \emph{significantly}. The main principle in $\mathtt{KLMS}$ is grounded in three ideas:

\begin{enumerate}
    \item In many existing FL frameworks, the updates from clients to the server are samples drawn from some optimized client-only distributions, e.g., \texttt{QSGD} and \texttt{FedPM}. 

    \item Sending a \emph{random} sample from a distribution can be done much more efficiently than first taking a sample from the same distribution, and then sending its \emph{deterministic} value~\citep{theis2022algorithms}.

    \item The knowledge acquired from the historical updates, available both at the server and the clients, can help reduce the communication cost drastically by playing the role of \emph{temporal} side information. 
\end{enumerate}

$\mathtt{KLMS}$ is designed to reduce the communication cost in FL by taking advantage of the above observations. It relies on shared randomness between the clients and the server in the form of a shared random \texttt{SEED} (i.e., they can generate the same pseudo-random samples from a given distribution) and on the side information available to the server and the clients. Without restricting ourselves to any specific FL framework (we will do this in Section~\ref{sec:klm_examples}), suppose the server and the clients share a global distribution $p_{\theta^{(t)}}$ and each client has a client-only distribution $q_{\phi^{(t,n)}}$ after local training. As stated in Section~\ref{introduction}, the  server aims to compute $\mathbb{E}_{X^{(t,n)} \sim q_{\mathbf{\phi}^{(t,n)}}, \forall n \in [N]} \left [ \frac{1}{N} \sum_{n=1}^N X^{(t, n)} \right ]$ after each round. While this can be done by simply communicating samples $\mathbf{x}^{(t,n)} \sim q_{\mathbf{\phi}^{(t,n)}}$, it is actually sufficient for the server to obtain any other set of samples from the same distribution $q_{\mathbf{\phi}^{(t,n)}}$ (or another distribution that is close to $q_{\mathbf{\phi}^{(t,n)}}$ in KL divergence). Therefore, instead of a specific realization $\mathbf{x}^{(t, n)} \sim q_{\mathbf{\phi}^{(t, n)}}$, $\mathtt{KLMS}$ sends a sample $\mathbf{y}^{(t, n)}$ from some other distribution $\tilde{q}_{\mathbf{\pi}^{(t, n)}}$ such that (i) it is less costly to communicate a sample from $\tilde{q}_{\mathbf{\pi}^{(t, n)}}$ than $q_{\mathbf{\phi}^{(t, n)}}$ and (ii) the discrepancy

\begin{align}
\begin{split}
    E = \Bigg | & \mathbb{E}_{Y^{(t,n)} \sim \tilde{q}_{\mathbf{\pi}^{(t,n)}}, \forall n \in [N]} \left [\frac{1}{N} \sum_{n=1}^N Y^{(t, n)}\right ] - \\ & \mathbb{E}_{X^{(t,n)} \sim q_{\mathbf{\phi}^{(t,n)}}, \forall n \in [N]} \left [ \frac{1}{N} \sum_{n=1}^N X^{(t, n)} \right ] \Bigg | 
\end{split}
    \label{eq:discrepancy}
\end{align}

is sufficiently small. Motivated by this, $\mathtt{KLMS}$ runs as follows (by referring to the steps in Figure~\ref{klms_diagram}): 

 \textbf{Step 1 at the server \& client (side information):} The server and the client generate the same $K$ samples from the global distribution with a shared random seed.

\textbf{Steps 2-4 at the client (importance sampling):} The client assigns importance weights to each of the $K$ samples to construct a new distribution over them. It then chooses one of the $K$ samples from this new distribution to send its index in $\log K$ bits to the server. 

\textbf{Steps 2-3 at the server (importance sampling):} The server picks the sample with the received index from the $K$ samples it generated in Step 1.

In Theorem~\ref{thm2_main}, we show that the discrepancy in (\ref{eq:discrepancy}) is small when $K \simeq \exp{ \left ( D_{KL}(q_{\phi} || p_{\theta}) \right )}$. We actually prove it for a general measurable function $f(\cdot)$ over $Y^{(t, n)}$'s, for which the discrepancy in (\ref{eq:discrepancy}) is a special case when $f(\cdot)$ is the identity. We note that previous results on the single-user scenario ($N=1$) \citep{chatterjee2018sample, havasi2018minimal} are special cases of our more general framework with $N$ users.

\begin{theorem} \label{thm2_main} Let $p_{\theta}$ and $q_{\phi^{(n)}}$ for $n=1, \dots, N$ be probability distributions over set $\mathcal{X}$ equipped with some sigma-algebra. Let $X^{(n)}$ be
an $\mathcal{X}$-valued random variable with law $q_{\phi^{(n)}}$. Let $r \geq 0$ and $\tilde{q}_{\pi^{(n)}}$ for $n=1, \dots, N$ be discrete distributions each constructed by $K^{(n)}= \exp{ \left (D_{KL}(q_{\phi^{(n)}} || p_{\theta} )  + r  \right )}$ samples $\{\mathbf{y}^{(n)}_{[k]}\}_{k=1}^{K^{(n)}}$ from $p_{\theta}$ defining $\pi^{(n)}(k) = \frac{q_{\phi^{(n)}}(\mathbf{y}^{(n)}_{[k]}) / p_{\theta}(\mathbf{y}^{(n)}_{[k]})}{\sum_{l=1}^{K^{(n)}} q_{\phi^{(n)}}(\mathbf{y}^{(n)}_{[l]}) / p_{\theta}(\mathbf{y}^{(n)}_{[l]})} $. Furthermore, for measurable function $f(\cdot)$, let $\|f\|_{\mathbf{q_{\phi}}} = \sqrt{\mathbb{E}_{X^{(n)} \sim q_{\phi^{(n)}}, \forall n \in [N]} [ (\frac{1}{N}\sum_{n=1}^N f(X^{(n)}))^2] }$ be its 2-norm under $\mathbf{q_{\phi}} = q_{\phi^{(1)}}, \dots, q_{\phi^{(N)}}$ and let

\resizebox{0.06\linewidth}{!}{
 \begin{minipage}{0.06\linewidth}
\begin{align}
    \epsilon = &\left( e^{-Nr/4} + 2 \sqrt{  \prod_{n=1}^N \mathbb{P} ( \log ( q_{\phi^{(n)}}/p_{\theta} ) > D_{KL}(q_{\phi^{(n)}} \| p_{\theta} ) + r/2) }\right )^{1/2}.
\end{align}
\end{minipage}}

Defining $\tilde{q}_{\pi^{(n)}}$ over $\{\mathbf{y}_{[k]}^{(n)}\}_{k=1}^{K^{(n)}}$ as $\tilde{q}_{\pi^{(n)}}(\mathbf{y}) = \sum_{k=1}^{K^{(n)}} \pi^{(n)}(k) \cdot \mathbf{1}(\mathbf{y}_{[k]}^{(n)} = \mathbf{y})$, it holds that

\resizebox{0.9\linewidth}{!}{
 \begin{minipage}{0.9\linewidth}
\begin{align*}
    \mathbb{P} & \Bigg ( \Bigg | \displaystyle \mathop{\mathbb{E}}_{Y^{(n)} \sim \tilde{q}_{\mathbf{\pi}^{(n)}}, \forall n } \left [\frac{1}{N} \sum_{n=1}^Nf(Y^{(n)}) \right ] - \\
   &  \displaystyle \mathop{\mathbb{E}}_{X^{(n)} \sim q_{\mathbf{\phi}^{(n)}}, \forall n} \left [ \frac{1}{N} \sum_{n=1}^Nf(X^{(n)}) \right ]  \Bigg | \geq \frac{2 \|f\|_{\mathbf{q_{\phi}}} \epsilon}{1 - \epsilon} \Bigg ) \leq 2 \epsilon,
\end{align*}
\end{minipage}
}

where $\tilde{q}_{\pi^{(n)}}$ is defined over $\{\mathbf{y}_{[k]}^{(n)}\}_{k=1}^{K^{(n)}}$ as $\tilde{q}_{\pi^{(n)}}(\mathbf{y}) = \sum_{k=1}^{K^{(n)}} \pi^{(n)}(k) \cdot \mathbf{1}(\mathbf{y}_{[k]}^{(n)} = \mathbf{y})$.
\end{theorem}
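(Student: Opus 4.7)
The plan is to extend the importance-sampling tail bound of Chatterjee--Diaconis (2018) from the single-user case ($N=1$) to the $N$-user distributed setting, leveraging independence across clients to obtain both the $e^{-Nr/4}$ exponent and the $\sqrt{\prod_n \mathbb{P}(\cdot)}$ product structure appearing inside $\epsilon$. First, I would recast the per-client estimator in self-normalized form: setting $w^{(n)}_k = q_{\phi^{(n)}}(\mathbf{y}^{(n)}_{[k]})/p_{\theta}(\mathbf{y}^{(n)}_{[k]})$, the conditional expectation given the samples is
\[
T_n \;:=\; \mathbb{E}_{Y^{(n)} \sim \tilde{q}_{\pi^{(n)}}}\bigl[f(Y^{(n)})\bigr] \;=\; \frac{\sum_{k=1}^{K^{(n)}} w^{(n)}_k f(\mathbf{y}^{(n)}_{[k]})}{\sum_{k=1}^{K^{(n)}} w^{(n)}_k},
\]
with $\mathbb{E}_{p_\theta}[w^{(n)}_k] = 1$ and $\mathbb{E}_{p_\theta}[w^{(n)}_k f(\mathbf{y}^{(n)}_{[k]})] = \mu_n := \mathbb{E}_{q_{\phi^{(n)}}}[f(X^{(n)})]$. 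The object to control is therefore $\bigl|\frac{1}{N}\sum_n T_n - \frac{1}{N}\sum_n \mu_n\bigr|$, regarded as a random variable on the product sample space $\prod_n p_\theta^{\otimes K^{(n)}}$.

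Following the Chatterjee--Diaconis truncation strategy, I would introduce, for each client $n$, the good event $\mathcal{G}_n = \{\max_k \log w^{(n)}_k \le D_{KL}(q_{\phi^{(n)}}\|p_\theta) + r/2\}$, whose complement has probability at most $K^{(n)}\,\mathbb{P}(\log(q_{\phi^{(n)}}/p_\theta) > D_{KL}(q_{\phi^{(n)}}\|p_\theta) + r/2)$ by a per-client union bound. Because the samples are drawn independently across clients, the events $\mathcal{G}_n$ are mutually independent, and a Cauchy--Schwarz split of the bad-event contribution produces precisely the factor $2\sqrt{\prod_n \mathbb{P}(\log(q_{\phi^{(n)}}/p_\theta) > D_{KL}(q_{\phi^{(n)}}\|p_\theta) + r/2)}$ appearing inside $\epsilon$. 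This is the step where the $N$-user result deviates structurally from the single-user one: the joint tail probability factorises rather than being controlled by a single union bound.

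On the joint good event $\bigcap_n \mathcal{G}_n$, I would decompose $T_n - \mu_n = (N_n - \mu_n D_n)/D_n$, with $N_n = \frac{1}{K^{(n)}}\sum_k w^{(n)}_k f(\mathbf{y}^{(n)}_{[k]})$ and $D_n = \frac{1}{K^{(n)}}\sum_k w^{(n)}_k$, and bound the second moment of the averaged centred numerator $\frac{1}{N}\sum_n (N_n - \mu_n D_n)$. Under truncation each weight is capped by $e^{D_{KL}(q_{\phi^{(n)}}\|p_\theta) + r/2}$, and since $K^{(n)} = e^{D_{KL}(q_{\phi^{(n)}}\|p_\theta) + r}$ the per-client second moment collapses to order $\|f\|_{\mathbf{q_{\phi}}}^2 e^{-r/2}$. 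Independence across the $N$ clients contributes the additional reduction giving the aggregated bound of order $\|f\|_{\mathbf{q_{\phi}}}^2 e^{-Nr/4}$ after the Markov-style exponentiation that underlies the Chatterjee--Diaconis proof; the joint norm $\|f\|_{\mathbf{q_{\phi}}}$ is, by its very definition, the natural second-moment size of the averaged $\frac{1}{N}\sum_n f(X^{(n)})$ that enters here.

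The principal difficulty will be the random denominators $D_n$: to avoid conditioning on $D_n \approx 1$ (which would weaken the bound), I would replicate the Chatterjee--Diaconis trick of directly bounding $\mathbb{P}(|T_n - \mu_n| > 2\|f\|\delta/(1-\delta))$ via a small-ball estimate on $(N_n, D_n)$ viewed jointly, then adapt it so that the small-ball event holds simultaneously for all $N$ clients. Once the two error sources --- the $e^{-Nr/4}$ term from the truncated second moment and the $\sqrt{\prod_n \mathbb{P}(\cdot)}$ term from the truncation tail --- are summed and $\epsilon$ defined as the square root of the result, the stated inequality $\mathbb{P}(\cdot \geq 2\|f\|_{\mathbf{q_{\phi}}}\epsilon/(1-\epsilon)) \leq 2\epsilon$ follows by the usual algebraic rearrangement of the self-normalized ratio. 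Specialising to $N=1$ should recover the original Chatterjee--Diaconis guarantee verbatim, providing a sanity check.
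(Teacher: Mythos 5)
Your overall architecture --- truncate the importance weights, bound the truncated second moment, control the truncation tail, and finally convert a bound on the unnormalized estimator into one on the self-normalized ratio via two Markov inequalities and the algebra $|I_K(f)/I_K(\mathbf{1}) - I(f)| \le (\delta + |I(f)|\epsilon)/(1-\epsilon)$ --- matches the paper's proof, and your last paragraph is essentially their derivation of the theorem from the intermediate expectation bound. The gap is in the middle: both $N$-dependent factors in $\epsilon$ come from the \emph{product} structure of the joint unnormalized estimator
\[
I_K(f)=\frac{1}{\prod_n K^{(n)}}\sum_{k^{(1)}}\cdots\sum_{k^{(N)}}\Bigl(\tfrac1N\textstyle\sum_n f(\mathbf{y}^{(n)}_{[k^{(n)}]})\Bigr)\prod_n \rho_n(\mathbf{y}^{(n)}_{[k^{(n)}]}),
\]
not from averaging $N$ independent per-client errors. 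Your decomposition $\frac1N\sum_n(N_n-\mu_n D_n)$ is additive across clients, so its variance is $\frac{1}{N^2}\sum_n O(\|f\|^2 e^{-r/2})=O(\|f\|^2 e^{-r/2}/N)$; that is a $1/N$ improvement, not the multiplicative $e^{-Nr/2}$ (square-rooting to $e^{-Nr/4}$) that the paper extracts from $\prod_n \rho_n^2\le \prod_n a^{(n)}\cdot\prod_n\rho_n$, each truncated factor contributing one $e^{-r/2}$. Likewise, the bad event in your scheme is the \emph{union} $\bigcup_n\mathcal{G}_n^c$, whose probability behaves like $\sum_n\mathbb{P}(\mathcal{G}_n^c)$; no amount of independence or Cauchy--Schwarz turns a union into the product $\prod_n\mathbb{P}(\cdot)$, which is the probability that \emph{all} clients are simultaneously bad and is vastly smaller.

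In fact the paper only reaches the product form by imposing an explicit extra hypothesis (their inequality (29) in the appendix): after partitioning on the exact subset $Q\subseteq[N]$ of clients whose density ratio exceeds the threshold, they \emph{assume} that every partial-subset contribution is dominated by the all-clients-bad term, and then sum the resulting partition probabilities to $1$. Without that assumption the natural bound involves a sum or maximum of the individual tails rather than their product, so your proposal as written cannot produce the stated $\epsilon$; you would need either to adopt the same assumption or to supply a genuinely new argument. Two further mismatches: your per-client union bound over $k$ yields tails of $\log\rho_n$ under $p_\theta$ multiplied by $K^{(n)}$, whereas the theorem's tails are under $q_{\phi^{(n)}}$ with no factor of $K^{(n)}$ (the paper gets these via the change of measure $\mathbb{E}_{p_\theta}[\rho_n\,\mathbf{1}\{\rho_n>a\}]=\mathbb{P}_{q_{\phi^{(n)}}}(\rho_n>a)$ followed by Cauchy--Schwarz under $q_{\phi^{(n)}}$), and translating between the two costs an extra $e^{r/2}$.
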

See Appendix~\ref{appendix_proofs} for the proof. This result implies that when $K^{(n)} \simeq \exp \left( D_{KL}(q_{\phi^{(t,n)}} \| p_{\theta^{(t)}} ) \right )$, the discrepancy in (\ref{eq:discrepancy}) is small. In practice, as we explain in Section~\ref{sec:klm_adaptive}, we work on blocks of parameters such that $D_{KL}(q_{\phi^{(t,n)}} \| p_{\theta^{(t)}} )$ for each block is the same for all clients $n \in [N]$. Hence, we omit the superscript $(n)$ from $K^{(n)}$  and denote the number of samples by $K$ for each client. In Appendix~\ref{sec:sampling_experiments}, we experiment on a toy model and observe that, for a fixed $K$, the discrepancy in (\ref{eq:discrepancy}) gets smaller as the number of clients $N$ increases, gaining from the client participation in each round.

\subsection{Adaptive Block Selection for Optimal Bit Allocation} \label{sec:klm_adaptive}

Prior works that have applied importance sampling for Bayesian neural network compression \citep{havasi2018minimal}, or for differentially private communication in FL \citep{triastcyn2021dp} split the model into several fixed-size blocks of parameters, and compress each block separately and independently to avoid the high computational cost -- which exponentially increases with the number of parameters $d$. After splitting the model into fixed-size blocks with $S$ parameters each, \cite{havasi2018minimal, triastcyn2021dp} choose a single fixed $K$ (number of samples generated from $p_{\theta^{(t)}}$) for each block no matter what the KL divergence is for different blocks. This yields the same bitrate $\frac{\log K}{S}$ for every model parameter. Furthermore, \cite{triastcyn2021dp} use the same $K$ throughout training without considering the variation in KL divergence over rounds. However, as illustrated in Figure~\ref{kl_layer_round}, KL divergence varies significantly across different model layers and across rounds. Hence, spending the same bitrate $\frac{\log K}{S}$ for every parameter at every round is highly suboptimal since it breaks the condition in Theorem~\ref{thm2_main}. 

To fix this, we propose an adaptive block selection mechanism, where the block size is adjusted such that the KL divergence for each block is the same and equal to a target value, $D_{KL}^{\text{target}}$. This way, the optimal $K$ for each block is the same and approximately equal to $D_{KL}^{\text{target}}$, and we do not need to set the block size $S$ ourselves, which was a  hyperparameter to tune in \citep{havasi2018minimal, triastcyn2021dp}. Different from the fixed-size block selection approach in \citep{havasi2018minimal, triastcyn2021dp}, the adaptive approach requires describing the locations of the adaptive-size blocks, which adds overhead to the communication cost. However, exploiting the temporal correlation across rounds can make this overhead negligible. More specifically, we first let each client find their adaptive-size blocks, each having KL divergence equal to $D_{KL}^{\text{target}}$, in the first round. Then the clients communicate the locations of these blocks to the server, which are then aggregated to find the new global indices to be broadcast to the clients. At later rounds, the server checks if, on average, the new KL divergence of the previous blocks is still sufficiently close to the target value $D_{KL}^{\text{target}}$. If so, the same adaptive-size blocks are used in that round. Otherwise, the client constructs new blocks, each having KL divergence equal to $D_{KL}^{\text{target}}$, and updates the server about the new locations. Our experiments indicate that this update occurs only a few times during the whole training. Therefore, it adds only a negligible overhead on the average communication cost across rounds. We provide the pseudocodes for $\mathtt{KLMS}$ with both fixed- and adaptive-size blocks in Appendix~\ref{app_klm}.

\subsection{Examples of $\mathtt{KLMS}$ Adaptated to Well-Known Stochastic FL Frameworks} \label{sec:klm_examples}

In this section, we provide four concrete examples illustrating how $\mathtt{KLMS}$ can be naturally integrated into different FL frameworks with natural choices of global and client-only distributions. Later, in Section~\ref{experiments}, we present experimental results showing the empirical improvements $\mathtt{KLMS}$ brings in all these cases. The corresponding pseudocodes are given in Appendix~\ref{app_details_klm_examples}.

\textbf{FedPM-KLMS:} As described in Section~\ref{sec:intro_examples}, in \texttt{FedPM}, the server holds a global probability mask, which parameterizes a probability distribution over the mask parameters -- indicating for each model parameter its probability of remaining in the subnetwork. Similarly, each client obtains a local probability mask after local training -- parameterizing their locally updated probability assignment for each model parameter to remain in the subnetwork. Parameterizing the global distribution $p_{\theta^{(t)}}$ by the global probability mask $\theta^{(t)}$ and the client-only distribution $q_{\phi^{(t,n)}}$ by the local probability mask $\phi^{(t,n)}$ is only natural since the goal in \texttt{FedPM} is to send a sample from the local probability distribution $\text{Bern}(\cdot; \phi^{(t,n)})$ with as few bits as possible. This new framework, \texttt{FedPM-KLMS}, provides $\mathbf{82}$ \textbf{times reduction in bitrate over vanilla \texttt{FedPM}} -- corresponding to an \textbf{overall} $\mathbf{2,650}$ \textbf{times compression rate}.

\begin{figure}
     \centering
     \begin{center}
    \includegraphics[width=0.4\textwidth]{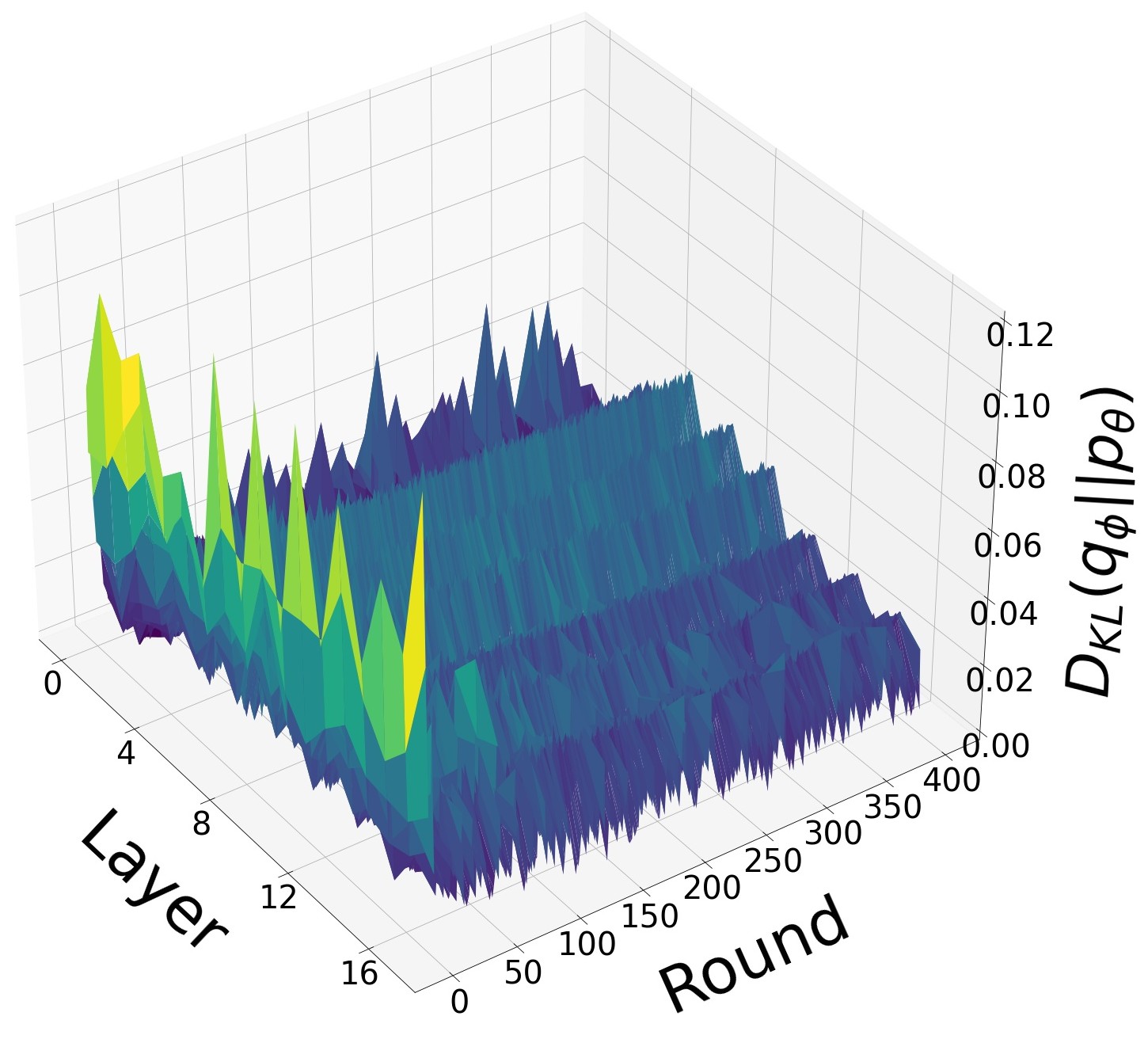}
    \end{center}
     \caption{Average KL divergence between the client-only and global distributions, for different layers and rounds (\texttt{FedPM} used to train \texttt{CONV6} on CIFAR-$10$). 
     } 
     \vspace{-0.2in}
     \label{kl_layer_round}
\end{figure}

\begin{table*}[]
\centering
\caption{\texttt{FedPM-KLMS}, \texttt{QSGD-KLMS}, and \texttt{SignSGD-KLMS} against \texttt{FedPM}, \texttt{QSGD}, \texttt{SignSGD}, \texttt{TernGrad}, \texttt{DRIVE}, \texttt{EDEN}, \texttt{MARINA}, \texttt{FedMask}, and \texttt{DP-REC} with i.i.d. split and full client participation. Note that the bitrate is $32$ bpp without any compression. Overall, \texttt{FedPM-KLMS} achieves the highest accuracy with the lowest bitrate around $0.014$ for each dataset -- corresponding to an overall $2,300$ times compression. Other \texttt{KLMS} integrations (\texttt{QSGD-KLMS}, \texttt{SignSGD-KLMS}) similarly reduce the bitrate up to $66$ times over the vanilla frameworks (\texttt{QSGD}, \texttt{SignSGD}). 
    }
\resizebox{\linewidth}{!}{
\begin{tabular}{lcccccccc}
             & \multicolumn{2}{c}{CIFAR-10 (\texttt{CONV6})} & \multicolumn{2}{c}{CIFAR-100 (ResNet-18)} & \multicolumn{2}{c}{MNIST (\texttt{CONV4})} & \multicolumn{2}{c}{EMNIST (\texttt{CONV4})} \\
             \toprule
      & \multicolumn{1}{c}{Acc.} & \multicolumn{1}{c}{Bitrate (bpp)}               & \multicolumn{1}{c}{Acc.}  & \multicolumn{1}{c}{Bitrate (bpp)}                 & \multicolumn{1}{c}{Acc.}  & \multicolumn{1}{c}{Bitrate (bpp)}                & \multicolumn{1}{c}{Acc.}  & \multicolumn{1}{c}{Bitrate (bpp)}      \\
      \toprule
\texttt{TernGrad}      & 0.680 &  1.10             &    0.220          &   1.07            & 0.980              &     1.05         &  0.870          & 1.1    \\
\texttt{DRIVE}      & 0.760 & 0.89              &    0.320          &  0.54             &      0.994         &   0.91            &    0.883         &   0.90  \\
\texttt{EDEN}     & 0.760 & 0.89               &    0.320          &    0.54           &      0.994         &  0.91            &  0.883          &  0.90   \\
\texttt{MARINA} & 0.690 & 2.12               &    0.260          &    2.18           &      0.991         &  2.01            &  0.867          &  2.04   \\
\texttt{FedMask}     &  0.620 & 1.00              &    0.180          &    1.00           & 0.991              &    1.00          &     0.862       & 1.00    \\
\texttt{DP-REC}     & 0.720 & 1.12               &    0.280          &  1.06             &     0.991          &       1.00       &   0.885         &  1.10   \\
\midrule
\texttt{SignSGD}     & 0.705 & 0.993               &    0.230           &  0.999             &    0.990           &   0.999           &   0.873         &  1.000   \\
\texttt{SignSGD-KLMS}(ours) &   \textbf{0.745}            & \textbf{0.040  ($\mathbf{\times 25}$ lower)}       & \textbf{0.259}            &     \textbf{0.042   ($\mathbf{\times 25}$ lower)}      &   \textbf{0.9930}       & \textbf{0.041  ($\mathbf{\times 24}$ lower)}  & \textbf{0.880}   &  \textbf{0.044 ($\mathbf{\times 23}$ lower)}   \\
\texttt{SignSGD-KLMS}(ours) &    \textbf{0.739}           &\textbf{0.015 ($\mathbf{\times 66}$ lower)}          & \textbf{0.250}           &  \textbf{0.018  ($\mathbf{\times 56}$ lower)}      &  \textbf{0.9918}        & \textbf{0.023  ($\mathbf{\times 43}$ lower)}  &  \textbf{0.875} &   \textbf{0.025 ($\mathbf{\times 40}$ lower)} \\
\midrule
\texttt{QSGD} &    0.753             &     0.072         &  0.335             &   0.150            &   0.994            &  0.130    & 0.884   & 0.150     \\
\texttt{QSGD-KLMS}(ours)           &   \textbf{  0.761  }           & \textbf{0.035  ($\mathbf{\times 2}$ lower)}       & \textbf{0.327}           &    \textbf{0.074  ($\mathbf{\times 2.2}$ lower)}        &     \textbf{0.9940}         & \textbf{0.041  ($\mathbf{\times 3.2}$ lower)} & \textbf{0.884}   &  \textbf{0.042 ($\mathbf{\times 3.6}$ lower)} \\
\texttt{QSGD-KLMS}(ours) &        \textbf{0.755 }    &      \textbf{0.014  ($\mathbf{\times 5}$ lower)}  & \textbf{0.320}        &   \textbf{0.020    ($\mathbf{\times 7.5}$ lower)}     & \textbf{0.9935}        & \textbf{0.019 ($\mathbf{\times 6.8}$ lower)}   &  \textbf{0.883} &  \textbf{0.022   ($\mathbf{\times 6.8}$ lower)} \\
\midrule
\texttt{FedPM}     & 0.787  &  0.845             &    0.470          &   0.88            &  0.995             &  0.99            &   0.890         &  0.890   \\
\texttt{FedPM-KLMS}(ours) &    \textbf{0.787  }           &    \textbf{0.070   ($\mathbf{\times 12}$ lower)}      &  \textbf{0.469}          &  \textbf{0.072    ($\mathbf{\times 13}$ lower)}      &     \textbf{0.9945}    &  \textbf{0.041 ($\mathbf{\times 24}$ lower)}   & \textbf{0.888}   & \textbf{0.034 ($\mathbf{\times 26}$ lower)}    \\
\texttt{FedPM-KLMS}(ours)           &   \textbf{0.786}             &   \textbf{0.014 ($\mathbf{\times 60}$ lower)}       & \textbf{0.455}             &   \textbf{0.018    ($\mathbf{\times 49}$ lower)}      &        \textbf{0.9943}    &  \textbf{0.014   ($\mathbf{\times 71}$ lower)}  & \textbf{0.885}    &  \textbf{0.017 ($\mathbf{\times 52}$ lower)}\\   
\bottomrule
\end{tabular} 
}
\vspace{-0.2in}
\label{tab:acc_bitrate_iid_nonbayesian}
\end{table*}

\textbf{QSGD-KLMS:} As explained in Section~\ref{sec:intro_examples}, \texttt{QSGD} is a stochastic quantization method for FL frameworks that train deterministic model parameters, which outperforms many other baselines. 
Focusing on the most extreme case when the number of quantization levels is $s=1$,  \texttt{QSGD} distribution in (\ref{eq:qsgd}) can be expressed as: 

\begin{align*}
    p_{\texttt{QSGD}}(\mathbf{\hat{v}}^{(t,n)}_i) =
\end{align*}
\begin{align}
\begin{cases}\max \left\{ \frac{ - \mathbf{v}^{(t,n)}_i}{\|\mathbf{v}^{(t,n)}\|}, 0  \right\}    \ &\text{if } \  \mathbf{\hat{v}}^{(t,n)}_i = - \|\mathbf{v}^{(t,n)}\| \\
    \max \left\{ \frac{ \mathbf{v}^{(t,n)}_i}{\|\mathbf{v}^{(t,n)}\|}, 0  \right\}   \ &\text{if } \  \mathbf{\hat{v}}^{(t,n)}_i = \|\mathbf{v}^{(t,n)}\| \\
    1 -  \max \left\{ \frac{ - \mathbf{v}^{(t,n)}_i}{\|\mathbf{v}^{(t,n)}\|}, \frac{ \mathbf{v}^{(t,n)}_i}{\|\mathbf{v}^{(t,n)}\|}, 0  \right\} \ &\text{if} \ \mathbf{\hat{v}}^{(t,n)}_i = 0
    \end{cases}
    \label{eq:qsgd_s1}
\end{align}

which is again a very natural choice for client-only distribution $q_{\phi^{(t,n)}}$ since vanilla \texttt{QSGD} requires the clients to take a sample from $p_{\texttt{QSGD}}(\cdot)$ in (\ref{eq:qsgd_s1}) and communicate the deterministic value of that sample to the server. As for the global distribution, exploiting the temporal correlation in FL, we use the empirical frequencies of the historical updates the server received in the previous round. In other words, in every round $t$, the server records how many clients communicated a negative value ($ - \|\mathbf{v}^{(t,n)}\|$), a positive value ($ \|\mathbf{v}^{(t,n)}\|$), or $0$ per coordinate, and constructs the global distribution $p_{\theta^{(t)}}$ from these empirical frequencies for the next rounds. This new framework, \texttt{QSGD-KLMS}, yields $10$ times reduction in bitrate over vanilla \texttt{QSGD}.

\textbf{SignSGD-KLMS:} Since \texttt{SignSGD}~\citep{bernstein2018signsgd} is not a stochastic quantizer, we first introduce some stochasticity to the vanilla \texttt{SignSGD} algorithm and then integrate $\mathtt{KLMS}$ into it. Instead of mapping the updates to their signs $\pm 1$ deterministically as in vanilla \texttt{SignSGD}, the stochastic version we propose does this mapping by taking a sample from 
\begin{align}
p_{\texttt{SignSGD}}
(\mathbf{\hat{v}}^{(t,n)}_i) =\begin{cases} \sigma ( \frac{ \mathbf{v}^{(t,n)}_i}{M})  \ &\text{if } \ \mathbf{\hat{v}}^{(t,n)}_i = 1 \\
    1- \sigma( \frac{ \mathbf{v}^{(t,n)}_i}{M})  \ &\text{if } \ \mathbf{\hat{v}}^{(t,n)}_i = -1
    \end{cases},
\label{eq:stochastic_sign_sgd}
\end{align}

for some $M>0$, where $\sigma(z) = \frac{1}{1 + e^{-z}}$ is the Sigmoid function. Instead of taking a sample from $p_{\texttt{SignSGD}}(\cdot)$ and sending the deterministic value of the sample by spending $1$ bit per parameter, we can take advantage of the sign symmetry in the model update (about half of the coordinates have positive/negative signs in the update) and reduce the communication cost. For this, we choose $p_{\texttt{SignSGD}}(\cdot)$ in (\ref{eq:stochastic_sign_sgd}) as the client-only distribution $q_{\phi^{(t,n)}}$, and the uniform distribution $U(0.5)$ from the support $\{-1, 1\}$ as the global distribution $p_{\theta^{(t)}}$. This new method, \texttt{SignSGD-KLMS}, achieves higher accuracy than vanilla \texttt{SignSGD} with $\mathbf{66}$ \textbf{times smaller bitrate}.

\textbf{SGLD-KLMS:} From the Bayesian FL family, we focus on the recent \texttt{SGLD} framework \citep{vono2022qlsd} as an example since it provides state-of-the-art results. As discussed in Section~\ref{sec:intro_examples}, due to the stochastic formulation of the Bayesian framework, it is natural to choose the local posterior distributions as the client-only distributions $q_{\phi^{(t,n)}}$, and the global posterior distribution at the server as the global distribution $p_{\theta^{(t)}}$. While extending the existing \texttt{SGLD} algorithm (see Section~\ref{sec:intro_examples}) with \texttt{KLMS}, we inject Gaussian noise locally at each client and scale it such that when all the samples are averaged at the server, the aggregate noise sample $\xi^{(t)}$ (see (\ref{eq:sgld_agg})) is distributed according to $\mathcal{N}(0, \bm{I}_d)$ (more details in Appendix~\ref{app_details_klm_examples}). This new framework, \texttt{SGLD-KLMS}, provides both accuracy and bitrate gains over \texttt{QLSD} \citep{vono2022qlsd} -- the state-of-the-art compression method for Federated \texttt{SGLD}.
\looseness=-2

\section{Experiments}
\label{experiments}

\begin{table*}[t!]
\centering
\caption{\texttt{FedPM-KLMS}, \texttt{QSGD-KLMS}, and \texttt{SignSGD-KLMS} against \texttt{FedPM}, \texttt{QSGD}, 
\texttt{SignSGD},
\texttt{DRIVE}, \texttt{EDEN}, and \texttt{DP-REC} with non i.i.d. split and $20$ out of $100$ clients participating every round. The bitrate is $32$ bpp without any compression. \texttt{FedPM-KLMS} again achieves the highest accuracy with the lowest bitrate -- corresponding to an overall $2,650$ times compression. \texttt{QSGD-KLMS} and \texttt{SignSGD-KLMS} similarly reduce the bitrate up to $56$ times over \texttt{QSGD} and \texttt{SignSGD}.
    }
\resizebox{0.85\linewidth}{!}{
\begin{tabular}{lcccc}
             & \multicolumn{2}{c}{CIFAR-10 (\texttt{CONV6})} & \multicolumn{2}{c}{CIFAR-100 (ResNet-18)} \\
             \toprule
      & \multicolumn{1}{c}{Acc.} & \multicolumn{1}{c}{Bitrate (bpp)}               & \multicolumn{1}{c}{Acc.}  & \multicolumn{1}{c}{Bitrate (bpp)}    \\
      \toprule
\texttt{DRIVE}      & 0.526 &   0.89            &    0.424         &  0.81   \\
\texttt{EDEN}      & 0.528 & 0.89               &    0.425          &    0.81       \\

\texttt{DP-REC}      & 0.530 & 1.08               &    0.424          &  1.00   \\
\midrule
\texttt{QSGD} &    0.552             &     0.140         &  0.429             &   0.150     \\
\texttt{QSGD-KLMS}(ours)           &   \textbf{0.552}           & \textbf{0.071 ($\mathbf{\times 2}$ lower)}       & \textbf{0.429}           &    \textbf{0.072  ($\mathbf{\times 2}$ lower)}  \\
\texttt{QSGD-KLMS}(ours) &        \textbf{0.545}    &      \textbf{0.014  ($\mathbf{\times 10}$ lower)}  & \textbf{0.419}        &   \textbf{0.017    ($\mathbf{\times 8.8}$ lower)}  \\
\midrule
\texttt{SignSGD}      & 0.470 & 1.000               &   0.371          &  0.999   \\
\texttt{SignSGD-KLMS}(ours) &   \textbf{0.530}            & \textbf{0.074  ($\mathbf{\times 14}$ lower)}       & \textbf{0.421 }            &     \textbf{0.044 ($\mathbf{\times 23}$ lower)}   \\
\texttt{SignSGD-KLMS}(ours) &    \textbf{0.520}           &\textbf{0.018 ($\mathbf{\times 56}$ lower)}          & \textbf{0.415}           &  \textbf{0.020  ($\mathbf{\times 50}$ lower)}  \\
\midrule
\texttt{FedPM}      & 0.612  &  0.993             &    0.488          &   0.98       \\
\texttt{FedPM-KLMS}(ours) &    \textbf{0.612}           &    \textbf{0.073 ($\mathbf{\times 12}$ lower)}      &  \textbf{0.488}          &  \textbf{0.074 ($\mathbf{\times 13}$ lower)}   \\
\texttt{FedPM-KLMS}(ours)           &   \textbf{0.599}             &   \textbf{0.016 ($\mathbf{\times 62}$ lower)}       & \textbf{0.480}             &   \textbf{0.012    ($\mathbf{\times 82}$ lower)}\\   
\bottomrule
\end{tabular} 
}

\label{tab:acc_bitrate_noniid_nonbayesian}
\end{table*}

We focus on four $\mathtt{KLMS}$ adaptations we covered in Section~\ref{sec:klm_examples} to empirically demonstrate $\mathtt{KLMS}$'s improvements. We consider four datasets: CIFAR-10 \citep{krizhevsky2009learning}, CIFAR-100 \citep{krizhevsky2009learning}, MNIST \citep{deng2012mnist}, and EMNIST \citep{cohen2017emnist} (with 47 classes). For CIFAR-100, we use ResNet-18~\citep{he2016deep}; for CIFAR-10, a 6-layer CNN \texttt{CONV6}; for MNIST a 4-layer CNN \texttt{CONV4} and LeNet; and for EMNIST, again \texttt{CONV4}. Additional details on the experimental setup and more detailed results with confidence intervals can be found in Appendices~\ref{sec:additional_details_app} and~\ref{sec:additional_results_app}. We first compare \texttt{FedPM-KLMS}, \texttt{QSGD-KLMS}, and \texttt{SignSGD-KLMS} with \texttt{FedPM}~\citep{isik2023sparse}, \texttt{QSGD}~\citep{alistarh2017qsgd}, \texttt{SignSGD}~\citep{bernstein2018signsgd}, \texttt{TernGrad}~\citep{wen2017terngrad}, \texttt{DRIVE}~\citep{vargaftik2021drive}, \texttt{EDEN}~\citep{vargaftik2022eden}, \texttt{MARINA}~\citep{gorbunov2021marina}, \texttt{FedMask}~\citep{li2021fedmask}, and \texttt{DP-REC}~\citep{triastcyn2021dp} on non-Bayesian FL setting in Section~\ref{sec:non_bayesian}. We then provide a comparison of \texttt{SGLD-KLMS} with \texttt{QLSD}~\citep{vono2022qlsd} on the Bayesian FL setting in Section~\ref{sec:bayesian}. Finally, in Section~\ref{sec:ablation}, we present a key ablation study to show how the adaptive block selection strategy in Section~\ref{sec:klm_adaptive} optimizes the bit allocation and helps achieve a smaller bitrate. Clients perform $3$ and $1$ local epochs in the non-Bayesian and Bayesian settings, respectively. We provide multiple (accuracy, bitrate) pairs for \texttt{KLMS} results by varying $D_{KL}^{\text{target}}$. Results are averaged over $3$ runs. 

The codebase is open-sourced at \url{https://github.com/FrancescoPase/Federated-KLMS}.

\subsection{Non-Bayesian Federated Learning} \label{sec:non_bayesian}
\textbf{i.i.d. Data Split:} For the i.i.d. dataset experiments in Table~\ref{tab:acc_bitrate_iid_nonbayesian}, we set the number of clients to $N=10$ and consider full client participation. Table~\ref{tab:acc_bitrate_iid_nonbayesian} shows that \texttt{FedPM-KLMS} and \texttt{SignSGD-KLMS} provide up to $71$ times reduction in communication cost compared to \texttt{FedPM} and \texttt{SignSGD}, respectively (with accuracy boost over vanilla \texttt{SignSGD}). \texttt{QSGD-KLMS}, on the other hand, reduces the communication cost by $10$ times over vanilla \texttt{QSGD}. Overall, \texttt{FedPM-KLMS} requires the smallest bitrate with the highest accuracy among all the frameworks considered. It achieves an \textbf{overall} $\mathbf{2,300}$ \textbf{times compression (compared to 32-bit no-compression case)} by improving the bitrate of vanilla \texttt{FedPM} by $\mathbf{71}$ \textbf{times}. This sets a new standard in the communication-efficient FL literature and marks the significance of side information in FL.  The significant improvements over \texttt{DP-REC} (in both bitrate and accuracy) justify the importance of (i) carefully chosen global and client-only distributions and (ii) the adaptive block selection that optimizes the bit allocation. 

\textbf{Non-i.i.d. Data Split:} For the non-i.i.d. experiments in Table~\ref{tab:acc_bitrate_noniid_nonbayesian}, we compare against \texttt{FedPM}, \texttt{QSGD}, \texttt{SignSGD}, \texttt{DRIVE}, \texttt{EDEN}, and \texttt{DP-REC}. We set the number of clients to $N=100$ and let randomly sampled $20$ of them participate in each round. See Appendix~\ref{sec:additional_details_app} for the non-i.i.d. split strategy. Let $c_{\text{max}}$ be the maximum number of classes each client can see due to the non-i.i.d. split. In the experiments in Table~\ref{tab:acc_bitrate_noniid_nonbayesian}, we set $c_{\text{max}}=40$ for CIFAR-100 and $c_{\text{max}}=4$ for CIFAR-10. See Appendix~\ref{sec:additional_results_app} for other $c_{\text{max}}$ values. Table~\ref{tab:acc_bitrate_noniid_nonbayesian} shows similar gains over the baselines as the i.i.d. experiments in Table~\ref{tab:acc_bitrate_iid_nonbayesian}; in that, \texttt{KLMS} adaptations provide up to $\mathbf{82}$ \textbf{times reduction in the communication cost} compared to the baselines (and $\mathbf{2,650}$ \textbf{times compression compared to 32-bit non-compression case}) with final accuracy as high as (if not higher) the best baseline. This indicates that the statistical heterogeneity level in the data split, while reducing the performance of the underlying training schemes, does not affect the improvement brought by \texttt{KLMS}. We further corroborate this observation with additional experiments in Appendix~\ref{sub:estimation_r_n_het}.

\subsection{Bayesian Federated Learning} \label{sec:bayesian}
Figure~\ref{fig:ablation_adaptve}-(top) compares \texttt{SGLD-KLMS} with \texttt{QLSD}. We consider i.i.d. data split and full client participation with the number of clients $N=10$. It is seen that \texttt{SGLD-KLMS} can reduce the communication cost by $5$ times more than \texttt{QLSD} with higher accuracy on MNIST, where in this case the accuracy is a Monte Carlo average obtained by posterior sampling after convergence. 
\looseness=-1

\begin{figure}[h!]
    \centering
  \includegraphics[width=0.75\columnwidth]{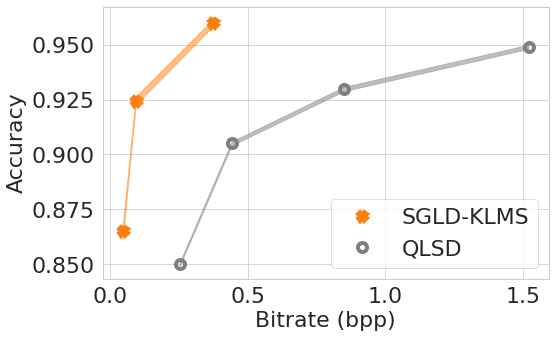}
\includegraphics[width=0.75\columnwidth]{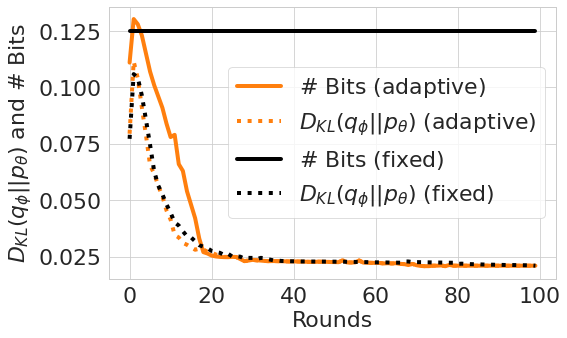}
    \caption{\textbf{(top)} \texttt{SGLD-KLMS} against \texttt{QLSD} using LeNet on i.i.d. MNIST dataset. \textbf{(bottom)} \texttt{FedPM-KLMS (fixed)} against \texttt{FedPM-KLMS (adaptive)} on how well the number of bits approaches the fundamental quantity, KL divergence -- using \texttt{CONV6} on i.i.d. CIFAR-10. Both KL divergence and the number of bits are normalized by the number of parameters.  
    }
    \label{fig:ablation_adaptve}
\end{figure}

\looseness=-1
\subsection{Ablation Study: The Effect of the Adaptive Bit Allocation Strategy } 
\label{sec:ablation}
We conduct an ablation study to answer the following question: \emph{Does adaptive bit allocation strategy really help optimize the bit allocation and reduce $\#$ bits down to KL divergence?} To answer this question, in Figure~\ref{fig:ablation_adaptve}-(bottom), we show how the average per-parameter KL divergence and $\#$ bits spent per parameter change over the rounds for \texttt{FedPM-KLMS} with fixed- and adaptive-size blocks. We adjust the hyperparameters such that the final accuracies differ by only $0.01\%$ on CIFAR-10. For the fixed-size experiments, since we fix $K$ (number of samples per block) and the block size for the whole model and across all rounds, $\#$ bits per parameter stays the same while the KL divergence shows a decreasing trend. On the other hand, in the adaptive-size experiments, the block size changes across the model parameters and the rounds to guarantee that each block has the same KL divergence. Since all blocks have the same KL divergence, we spend the same $\#$ bits for each block (with adaptive size) as suggested by Theorem~\ref{thm2_main}, which adaptively optimizes the bitrate towards the KL divergence. This is indeed justified in Figure~\ref{fig:ablation_adaptve}-(bottom) since the $\#$ bits curve quickly approaches the KL divergence curve.
\looseness=-1

\section{Discussion \& Conclusion}
\label{conclusion}

We leveraged side information that is naturally present in many existing FL frameworks to reduce the bitrate by up to $\mathbf{82}$ \textbf{times over the baselines} without an accuracy drop. This corresponds to an \textbf{overall} $\textbf{2,650}$ \textbf{times compression} compared to the no-compression case. We discovered highly natural choices for side information (global distribution at the server) in popular stochastic FL frameworks without requiring any change, i.e., the side information naturally arises in the original frameworks. 
We believe the proposed way of using side information will set a new standard in communication-efficient FL as it can provide similar bitrate reduction in many FL frameworks. 

\section{Acknowledgements}
\label{acknowledgement}

The authors would like to thank Saurav Chatterjee for his help in the theoretical extension of \citet{chatterjee2018sample} to the $N$-user case; also Yibo Zhang, Xiaoyang Wang, Enyi Jiang, and Brando Miranda for their feedback on an earlier draft. This work is partially supported by a Google Ph.D. Fellowship, a Stanford Graduate Fellowship, NSF III 2046795, IIS 1909577, CCF 1934986, NIH 1R01MH116226-01A, NIFA award 2020-67021-32799, the Alfred P. Sloan Foundation, Meta, Google, and the European Union under the Italian National Recovery and Resilience Plan (NRRP) of NextGenerationEU, partnership on “Telecommunications of the Future” (PE0000001 - program ''RESTAR``).


\bibliography{refs}

\clearpage
\onecolumn
\appendix
\appendix
\section{Additional Details on Prior Work}
\label{appendix_prior}

\subsection{\texttt{FedPM} \citep{isik2023sparse}} \label{appendix:fedpm}
We provide the pseudocode for \texttt{FedPM} in Algorithms~\ref{algorithm:fedpm} and~\ref{algorithm:aggregation}. See \citep{isik2023sparse} for more details.

\begin{algorithm}[!h]
        {\bf Hyperparameters:} local learning rate $\eta_L$, minibatch size $B$, number of local iterations $\tau$.\\
        {\bf Inputs:} local datasets $\mathcal{D}_i$, $i=1, \dots, N$, number of iterations $T$. \\
        {\bf Output:} random  \texttt{SEED} and binary mask parameters $\bm{m}^{\text{final}}$.\\
        \begin{algorithmic}
            \STATE{At the server, initialize a random network with weight vector $\bm{w}^{\text{init}} \in \mathbb{R}^d$ using a random \texttt{SEED}, and broadcast it to the clients. }
            \STATE{At the server, initialize the random score vector $\bm{s}^{(0, g)} \in \mathbb{R}^d$, and compute $\theta^{(0,g)} \gets \text{Sigmoid}(\bm{s}^{(0, g)})$.}
            \STATE{At the server, initialize Beta priors $\bm{\alpha}^{(0)} = \bm{\beta}^{(0)} = \bm{\lambda}_0$.}
            \FOR{$t=1, \dots, T$} 
            \STATE{Sample a subset $\mathcal{C}_t \subset \{1, \dots, N\}$ of $|\mathcal{C}_t|=C$ clients without replacement.}
            \STATE{\textbf{On Client Nodes:}}
            \FOR{$c \in \mathcal{C}_t$}
            \STATE{Receive $\theta^{(t-1, g)}$ from the server and set $\bm{s}^{(t, c)} \gets \text{Sigmoid}^{-1}(\theta^{(t-1, g)})$.}
            \FOR{$l=1, \dots, \tau$}
            \STATE{$\phi^{(t, c)} \gets \text{Sigmoid}(\bm{s}^{(t, c)})$ }
            \STATE{Sample binary mask $\bm{m}^{(t, c)} \sim q_{\bm{m}^{(t, c)}} = \text{Bern}(\phi^{(t, c)})$. }
            \STATE{$\dot{\bm{w}}^{(t, c)} \gets \bm{m}^{(t, c)} \odot \bm{w}^{\text{init}}$ }
            \STATE{$g_{\bm{s}^{(t, c)}} \gets \frac{1}{B} \sum_{b=1}^{B} \nabla \ell(\dot{\bm{w}}^{(t, c)}; \mathcal{S}_b^{c})$; where  $\{ \mathcal{S}_b^{c}\}_{b=1}^{B}$ are uniformly chosen from $\mathcal{D}_c$}
            \STATE{$\bm{s}^{(t, c)} \gets \bm{s}^{(t, c)} - \eta_L \cdot g_{\bm{s}^{(t, c)}}$}
            \ENDFOR \\
            \STATE{$\phi^{(t, c)} \gets \text{Sigmoid}(\bm{s}^{(t, c)})$}
            \STATE{Sample a binary mask $\bm{m}^{(t, c)} \sim \text{Bern}(\phi^{(t, c)})$.}
            \STATE{Send the arithmetic coded binary mask $\bm{m}^{(t, c)}$ to the server.}
            \ENDFOR \\
            \STATE
            \STATE{\textbf{On the Server Node:}}
            \STATE{Receive $\bm{m}^{(t, c)}$'s from $C$ client nodes.}
            \STATE{$\theta^{(t, g)} \gets $ BayesAgg( $\{\bm{m}^{(t, c)}\}_{c \in \mathcal{C}_t}$, $t$) $\quad$ // See Algorithm~\ref{algorithm:aggregation}.}
            \STATE{Broadcast $\theta^{(t, g)}$ to all client nodes.}
            \ENDFOR \\
            \STATE{Sample the final binary mask $\bm{m}^{\text{final}} \sim \text{Bern}(\theta^{(T, g)})$.}
            \STATE{Generate the final model: $\dot{\bm{w}}^{\text{final}} \gets \bm{m}^{\text{final}} \odot \bm{w}^{\text{init}}$.}
            \end{algorithmic}
        \caption{Federated Probablistic Mask Training (\texttt{FedPM}) \citep{isik2023sparse}.}
        \label{algorithm:fedpm}
    \end{algorithm}

\begin{algorithm}[!h]
        {\bf Inputs:} clients' updates $\{\bm{m}^{(t, c)}\}_{c \in \mathcal{C}_t}$, and round number $t$ \\
        {\bf Output:} global probability mask $\bm{\pi}^{(t)}$\\
        \begin{algorithmic}
        \IF{ResPriors($t$)}
        \STATE{$\bm{\alpha}^{(t-1)} \gets \bm{\beta}^{(t-1)} = \bm{\lambda}_0 $}
        \ENDIF \\
        \STATE{Compute $\bm{m}^{(t, \text{agg})} = \sum_{k \in \mathcal{C}_t} \bm{m}^{(t, c)}$.}
        \STATE{$\bm{\alpha}^{(t)} \gets \bm{\alpha}^{(t-1)} + \bm{m}^{(t, \text{agg})}$}
        \STATE{$\bm{\beta}^{(t)} \gets \bm{\beta}^{(t-1)} + C \cdot \bm{1} - \bm{m}^{(t, \text{agg})}$}
        \STATE{$\bm{\pi}^{(t)} \gets \frac{\bm{\alpha}^{(t - 1)}}{\bm{\alpha}^{(t)} + \bm{\beta}^{(t)} - 2}$}
        \STATE{Return $\bm{\pi}^{(t)}$}
        \end{algorithmic}
\caption{BayesAgg. \citep{isik2023sparse}}
        \label{algorithm:aggregation}
\end{algorithm}

\subsection{\texttt{QSGD} \citep{alistarh2017qsgd}} \label{appendix:qsgd}
We provide the pseudocode for \texttt{QSGD} in Algorithm~\ref{algorithm:qsgd}. See \citep{alistarh2017qsgd} for more details.

\begin{algorithm}[!h]
        {\bf Hyperparameters:} server learning rate $\eta_S$, local learning rate $\eta_L$, number of quantization levels $s$, minibatch size $B$.\\
        {\bf Inputs:} local datasets $\mathcal{D}_n$, $n=1, \dots, N$, number of iterations $T$.\\
        {\bf Output:} final model $\bm{w}^{(T)}$.
        \begin{algorithmic}
            \STATE{At the server, initialize a random network with weight vector $\bm{w}^{(0,g)} \in \mathbb{R}^d$ and broadcast it to the clients. }
            \FOR{$t=1, \dots, T$} 
            \STATE{Sample a subset $\mathcal{C}_t \subset \{1, \dots, N\}$ of $|\mathcal{C}_t| = C$ clients without replacement.}
            \STATE{\textbf{On Client Nodes:}}
            \FOR{$c \in \mathcal{C}_t$}
            \STATE{Receive $\bm{w}^{(t-1, g)}$ from the server and set the local model parameters $\bm{w}^{(t, c)} \gets \bm{w}^{(t, g)}$.}
            \FOR{$l=1, \dots, \tau $}
            \STATE{$g_w^{(t, c)} \gets\frac{1}{B} \sum_{b=1}^{B} \nabla \ell(\bm{w}^{(t, c)}; \mathcal{S}_b^{c})$; where  $\{ \mathcal{S}_b^{c}\}_{b=1}^{B}$ are uniformly chosen from $\mathcal{D}_c$} 
            \STATE{$\bm{w}^{(t, c)} \gets \bm{w}^{(t, c)} - \eta_L \cdot g_w^{(t, c)}$}
            \ENDFOR \\
        \STATE{$\mathbf{v}^{(t,c)} \gets \bm{w}^{(t,c)} - \bm{w}^{(t, g)}$}
            \FOR{$i = 1, \dots, d$}
            \STATE{Find integer $0 \leq q \leq s$ such that $|\mathbf{v}^{(t,c)}_i| / \|\mathbf{v}^{(t,c)}\|_2 \in [ q/s, (q+1)/s]$.}
            \STATE{Take a sample $z \sim \text{Bern}(1 - (\frac{|\mathbf{v}^{(t, c)}_i|}{\|\mathbf{v}^{(t,c)}\|_2} s -q))$.}
            \IF{$z = 1$}
            \STATE{$\mathbf{\kappa}^{(t,c)}_i \gets q/s$.}
            \ELSE
            \STATE{$\mathbf{\kappa}^{(t,c)}_i \gets (q+1)/s$.}
            \ENDIF
            \ENDFOR \\
            \STATE{Send vectors $\mathbf{\kappa}^{(t,c)}$, $\text{sign}(\mathbf{v}^{(t,c)})$, and norm $\|\mathbf{v}^{(t,n)}\|_2$ to the server using Elias coding~\citep{elias1975universal} as in~\citep{alistarh2017qsgd}.}
            \ENDFOR \\
            \STATE
            \STATE{\textbf{On the Server Node:}}
            \STATE{Receive $\mathbf{\kappa}^{(t,c)}$, $\text{sign}(\mathbf{v}^{(t,c)})$, and norm $\|\mathbf{v}^{(t,c)}\|_2$ from the clients $c \in \mathcal{C}_t$.}
            \FOR{$c \in \mathcal{C}_t$}
            \FOR{$i = 1, \dots, d$}
            \STATE{Reconstruct $\mathbf{\hat{v}}^{(t,c)}_i \gets \|\mathbf{v}^{(t,c)}\|_2 \cdot \text{sign}(\mathbf{v}^{(t,c)}_i) \cdot \mathbf{\kappa}^{(t,c)}_i$}.
            \ENDFOR
            \ENDFOR
            \STATE{Aggregate and update $\bm{w}^{(t, g)} \gets \bm{w}^{(t-1, g)} - \eta_S \frac{1}{C} \sum_{c \in \mathcal{C}_t} \mathbf{\hat{v}}^{(t, c)}$.}
            \STATE{Broadcast $\bm{w}^{(t, g)}$ to the clients.}
            \ENDFOR \\
            \end{algorithmic}
        \caption{Quantized Stochastic Gradient Descent (\texttt{QSGD}) \citep{alistarh2017qsgd}.}
        \label{algorithm:qsgd}
    \end{algorithm}

\newpage

\subsection{\texttt{QLSD} \citep{vono2022qlsd}} \label{appendix:qlsd}
We provide the pseudocode for \texttt{QLSD} in Algorithm~\ref{algorithm:qlsd}. See \citep{vono2022qlsd} for more details.

\begin{algorithm}[!h]
        {\bf Hyperparameters:} server learning rate $\eta_S$, number of quantization levels $s$, minibatch size $B$.\\
        {\bf Inputs:} local datasets $\mathcal{D}_n$, $n=1, \dots, N$, number of iterations $T$.\\
        {\bf Output:} samples $\left\{ \theta^{(t)}\right\}_{t=1}^T.$\\
        \begin{algorithmic}
            \STATE{At the server, initialize a random network with weight vector $\theta^{(0)} \in \mathbb{R}^d$ and broadcast it to the clients. }
            \FOR{$t=1, \dots, T$} 
            \STATE{Sample a subset $\mathcal{C}_t \subset \{1, \dots, N\}$ of $|\mathcal{C}_t|=C$ clients without replacement.}
            \STATE{\textbf{On Client Nodes:}}
            \FOR{$c \in \mathcal{C}_t$}
            \STATE{Receive $\theta^{(t-1)}$ from the server and set the local model parameters $\phi^{(t, c)} \gets \theta^{(t-1)}$.}
            \STATE{Sample a minibatch $\mathcal{S}^c$ s.t. $\left|\mathcal{S}^c \right| = B$ uniformly from $\mathcal{D}_c$.}
            \STATE{Compute a stochastic gradient of the potential $H(\phi^{(t, c)}) \gets \frac{|D_j^{(c)}|}{B} \sum_{j \in \mathcal{S}^{c}} \nabla U_j(\phi^{(t, c)})$.}
            \FOR{$i = 1, \dots, d$}
            \STATE{Find integer $0 \leq q \leq s$ such that $\frac{\left|H_i(\phi^{(t, c)})\right|}{\|H(\phi^{(t, c)})\|_2} \in [ q/s, (q+1)/s]$.}
            \STATE{Take a sample $z \sim \text{Bern}(1 - (\frac{|H_i(\phi^{(t, c)})|}{\|H(\phi^{(t, c)})\|_2} s -q))$.}
            \IF{$z = 1$}
            \STATE{$\mathbf{\kappa}^{(t,c)}_i \gets q/s$.}
            \ELSE
            \STATE{$\mathbf{\kappa}^{(t,c)}_i \gets (q+1)/s$.}
            \ENDIF
            \ENDFOR \\
            \STATE{Send vectors $\mathbf{\kappa}^{(t,c)}$, $\text{sign}(H(\phi^{(t, c)}))$, and norm $\|H(\phi^{(t, c)})\|_2$ to the server using Elias coding~\citep{elias1975universal} as in~\citep{alistarh2017qsgd}.}
            \ENDFOR \\
            \STATE
            \STATE{\textbf{On the Server Node:}}
            \STATE{Receive $\mathbf{\kappa}^{(t,c)}$, $\text{sign}(H(\phi^{(t, c)}))$, and norm $\|H(\phi^{(t, c)})\|_2$ from the clients $c \in \mathcal{C}_t$.}
            \FOR{$c \in \mathcal{C}_t$}
            \FOR{$i = 1, \dots, d$}
            \STATE{Reconstruct $\hat{H}_i(\phi^{(t, c)}) \gets \|H(\phi^{(t, c)})\|_2 \cdot \text{sign}(H_i(\phi^{(t, c)})) \cdot \mathbf{\kappa}^{(t,c)}_i$}.
            \ENDFOR
            \ENDFOR
            \STATE{Compute $\hat{H}(\phi^{(t)}) \gets \frac{N}{C} \sum_{c \in \mathcal{C}_t} \hat{H}(\phi^{(t, c)})$.}
            \STATE{Sample $\xi^{(t)} \sim \mathcal{N}(\bm{0}_d, \bm{I}_d)$.}
            \STATE{Compute $\theta^{(t)} \gets \theta^{(t-1)} - \eta_S \hat{H}(\phi^{(t)}) + \sqrt{2\gamma} \xi^{(t)}$.}
            \STATE{Broadcast $\theta^{(t)}$ to the clients.}
            \ENDFOR \\
            \end{algorithmic}
        \caption{Quantised Langevin Stochastic Dynamics (\texttt{QLSD}) \citep{vono2022qlsd}.}
        \label{algorithm:qlsd}
    \end{algorithm}

\newpage

\section{\texttt{KLMS} Pseudocode} \label{app_klm}
In this section, we provide pseudocodes for both versions of \texttt{KLMS}: Algorithm~\ref{algorithm:fixed_sample_index} with fixed-sized blocks (\texttt{Fixed-KLMS}), and Algorithm~\ref{algorithm:adaptive_sample_index} with adaptive-sized blocks (\texttt{Adaptive-KLMS}). The algorithms are standalone coding modules that can be applied to different FL frameworks (see Appendix~\ref{app_details_klm_examples}). In the experiments in Section~\ref{experiments}, we used \texttt{Adaptive-KLMS} and called it \texttt{KLMS} for simplicity. The decoding approach at the server is outlined in Algorithm~\ref{algorithm:klms_decoder}.

\begin{algorithm}[!h]
        {\bf Inputs:} client-only $q_{\phi^{(t, c)}}$ and global $p_{\theta^{(t)}}$ distributions, block size $S$, number of per-block samples $K$. \\
        {\bf Output:} selected indices for each block $\{k^{(c)*}_{[m]}\}_{m=1}^M$, where $M = \lceil \frac{d}{S} \rceil$ is the number of bloks.\\
        \begin{algorithmic}
        \STATE{Define $\{ q_{\phi^{(t, c)}_{[m]}} \}_{m=1}^{M}$ and $\{ p_{\theta^{(t, c)}_{[m]}} \}_{m=1}^{M}$ splitting $q_{\phi^{(t, c)}}$ and $p_{\theta^{(t)}}$ into $M$ distributions on $S$-size parameters blocks.}
        \FORALL{$m \in \{1, \dots, M\}$}
        \STATE{$I \gets [(m-1) S: mS]$.}
        \STATE{Take $K$ samples from the global distribution: $\{\mathbf{y}_{[k]}\}_{k=1}^K \sim p_{\theta^{(t)}_{[I]}}$.}
        \STATE{$\alpha_{[k]} \gets \frac{q_{\phi^{(t, c)}_{[I]}}\left(\mathbf{y}_{[k]}\right)}{p_{\theta^{(t)}_{[I]}}\left(\mathbf{y}_{[k]}\right)}$ $\forall k \in \{ 1, \dots, K\}$.}
        \STATE{$\pi(k) \gets \frac{\alpha_{[k]}}{\sum_{k'=1}^K \alpha_{[k']}}$ $\forall k \in \{ 1, \dots, K\}$.}
        \STATE{Sample an index $k_{[m]}^{(c)*} \sim \pi(k)$.}
        \ENDFOR
        \STATE{Send the selected indices $\{k^{(c)*}_{[m]}\}_{m=1}^M$ with $ M \cdot \log_2 K$ bits overall for $M$ blocks.}
        \end{algorithmic}
\caption{\texttt{Fixed-KLMS}.}
        \label{algorithm:fixed_sample_index}
\end{algorithm}

\begin{algorithm}[!h]
        {\bf Inputs:} client-only $q_{\phi^{(t, c)}}$ and global $p_{\theta^{(t)}}$ distributions, block locations $M$ (a list of start indices of each block), number of per-block samples $K$, target KL divergence $D_{KL}^{\text{target}}$, the flag \texttt{UPDATE} indicating whether the block locations will be updated, the maximum block size allowed \texttt{MAX\_BLOCK\_SIZE}. \\
        {\bf Output:} selected indices for each block $\{k^{(c)*}_{[m]}\}_{m=1}^M$, where the number of blocks $M$ may vary each round.  \\
        \begin{algorithmic}
        \IF{$\texttt{UPDATE}$}
        \STATE{Construct the sequence of per-coordinate KL-divergence of size $d$: $\mathbf{D} \gets \left [ D_{KL}(q_{\phi^{(t, c)}_1} \| p_{\theta^{(t)}_1}), D_{KL}(q_{\phi^{(t, c)}_2} \| p_{\theta^{(t)}_2}), \dots, D_{KL}(q_{\phi^{(t, c)}_d} \| p_{\theta^{(t)}_d})\right ]$.}
        \STATE{Divide $\mathbf{D}$ into subsequences of $\{ \mathbf{D}[i_1=1:i_2], \mathbf{D}[i_2:i_3], \dots, \mathbf{D}[i_{M}:i_{M+1}=d] \}$ such that for all $m=1, \dots, M$,  $\sum_{l=i_{m}}^{i_{m+1}}\mathbf{D}[l] \approx D_{KL}^{\text{target}}$ or $i_{m+1}- i_{m} = $ \texttt{MAX\_BLOCK\_SIZE}. Here $M$, i.e, the number of blocks, may vary each round.}
        \STATE Construct new block locations: $I_m \gets [i_m: i_{m+1}]$ for $m=1, \dots, M$.
        \ELSE
        \STATE{Keep the old block locations $I$.}
        \ENDIF
        \STATE{Construct per-block client-only $\{ q_{\phi^{(t, c)}_{[I_m]}} \}_{m=1}^{M}$ and global $\{ p_{\theta^{(t)}_{[I_m]}} \}_{m=1}^{M}$ distributions.}
        \FORALL{$m \in \{1, \dots, M\}$}
        \STATE{Sample $\{\mathbf{y}_{[k]}\}_{k=1}^K \sim p_{\theta^{(t)}_{[I_m]}}$.}
        \STATE{$\alpha_{[k]} \gets \frac{q_{\phi^{(t, c)}_{[I_m]}}\left(\mathbf{y}_{[k]}\right).}{p_{\theta^{(t)}_{[I_m]}}\left(\mathbf{y}_{[k]}\right)}$ $\forall k \in \{ 1, \dots, K\}$.}
        \STATE{$\pi (k) \gets \frac{\alpha_{[k]}}{\sum_{k'=1}^K \alpha_{[k']}}$ $\forall k \in \{ 1, \dots, K\}$.}
        \STATE{Sample $k_{[m]}^{(c)*} \sim \pi (k)$.}
        \ENDFOR
        \IF{\texttt{UPDATE}}
        \STATE{Return the selected indices $\{k^{(c)*}_{[m]}\}_{m=1}^M$ and the new block locations $I$ spending $\approx D_{KL}^{\text{target}} + \log_2($\texttt{MAX\_BLOCK\_SIZE}$)$ bits per block (block sizes are different for each block).}
        \ELSE
        \STATE{Return the selected indices $\{k^{(c)*}_{[m]}\}_{m=1}^M$ spending $\approx D_{KL}^{\text{target}}$ bits per block (block sizes are different for each block).}
        \ENDIF
        \end{algorithmic}
\caption{\texttt{Adaptive-KLMS}.}
        \label{algorithm:adaptive_sample_index}
\end{algorithm}

\begin{algorithm}[!h]
        {\bf Inputs:} client block locations $\{ I^{(t, c)}\}_{c \in \mathcal{C}_t}$.\\
        {\bf Output:} new global block locations $I^{(t)}$.\\
        \begin{algorithmic}
        \STATE{Define empty $I^{(t)}$.}
            \STATE{$m_{\text{max}} \gets \max_{c \in \mathcal{C}_t} \left\{ \text{length}(I^{(t, c)})\right\}$.}
            \FOR{$m \in \{1, 2, \dots, m_{\text{max}}\}$}
                \STATE{$\tilde{i}_m \gets 0$.}
                \STATE{$l \gets 0$.}
                \FOR{$ c \in \mathcal{C}_t$}
                    \IF{$\text{length}(I^{(t, c)}) \geq m$}
                        \STATE{$\tilde{i}_m  \gets \tilde{i}_m + I^{(t, c)}_{i_m}$.}
                        \STATE{$l \gets l + 1$.}
                    \ENDIF
                \ENDFOR
                \STATE{$\Bar{i}_m \gets \lceil \tilde{i}_m / l \rceil$.}
                \STATE{Add $\Bar{i}_m$ to $I^{(t)}$.}
            \ENDFOR
        \STATE{Return $I^{(t)}$.}
        \end{algorithmic}
\caption{\texttt{Aggregate-Block-Locations}.}
        \label{algorithm:aggregate_indices}
\end{algorithm}

\begin{algorithm}[!h]
        {\bf Inputs:}  global $p_{\theta^{(t)}}$ distribution, block locations $I$ of $M$ blocks, number of per-block samples $K$, selected indices for each block $\{k^{(c)*}_{[m]}\}_{m=1}^M$, where $M = \lceil \frac{d}{S} \rceil$ is the number of blocks. \\
        {\bf Output:} The selected samples $\{\mathbf{y}^{*}_{[m]}\}_{m=1}^M$ for each block. \\
        \begin{algorithmic}
        \STATE{Define $\{ p_{\theta^{(t)}_{[I_m]}} \}_{m=1}^{M}$ splitting $p_{\theta^{(t)}}$ into $M$ distributions with block locations in $I$.}
        \FORALL{$m \in \{1, \dots, M\}$}
        \STATE{Take $K$ samples from the global distribution: $\{\mathbf{y}_{[k]}\}_{k=1}^K \sim p_{\theta^{(t)}_{[I_m]}}$.}
        \STATE{Recover $\mathbf{y}^{*}_{[m]} \gets \mathbf{y}_{k^{(c)*}_{[m]}}$.  (Recall that $k^{(c)*}_{[m]}$ for each block $m$ was received from the client.)}
         \ENDFOR
        \STATE{Return the selected samples $\{\mathbf{y}^{*}_{[m]}\}_{m=1}^M$ for each block.}
        \end{algorithmic}
\caption{\texttt{KLMS}-Decoder.}
        \label{algorithm:klms_decoder}
\end{algorithm}

\newpage

\clearpage
\section{Proofs}
\label{appendix_proofs}
In this section, we provide the proof for Theorem~\ref{thm2_main}. But before that, we first define the formal problem statement, introduce some new notation, and give another theorem (Theorem~\ref{thm1}) that will be required for the proof of Theorem~\ref{thm2_main}.

We consider a scenario where $N$ distributed nodes and a centralized server share a prior distribution $p_{\theta}$ over a set $\mathcal{X}$ equipped with some sigma algebra. Each node $n$ also holds a posterior distribution $q_{\phi^{(n)}}$ over the same set. The server wants to estimate $\mathbb{E}_{X^{(n)} \sim q_{\phi^{(n)}} \forall n \in [N]} [\frac{1}{N} \sum_{m=1}^N f(X^{(m)})]$, where $f(\cdot): \mathcal{X} \rightarrow \mathbb{R}$ is a measurable function. In order to minimize the cost of communication from the nodes to the centralized server, each node $n$ and the centralized server take $K^{(n)}$ samples from the prior distribution $\mathbf{y}^{(n)}_{[1]}, \dots, \mathbf{y}^{(n)}_{[K^{(n)}]} \sim p_{\theta}$. Then client $n$ performs the following steps:
\begin{enumerate}

    \item Define a new probability distribution over the indices $k=1, \dots, K^{(n)}$:

        \begin{align}
        \pi^{(n)}(k) = \frac{q_{\phi^{(n)}}(\mathbf{y}^{(n)}_{[k]}) / p_{\theta}(\mathbf{y}^{(n)}_{[k]})}{\sum_{l=1}^{K^{(n)}} q_{\phi^{(n)}}(\mathbf{y}^{(n)}_{[l]}) / p_{\theta}(\mathbf{y}^{(n)}_{[l]})}  
    \end{align}

    and over the samples $\mathbf{y}^{(n)}_{[1]}, \dots, \mathbf{y}^{(n)}_{[K^{(n)}]}$:
    \begin{align}
        \tilde{q}_{\pi^{(n)}}(\mathbf{y}) = \sum_{k=1}^{K^{(n)}}\pi^{(n)}(k) \cdot \mathbf{1}(\mathbf{y}_{[k]}^{(n)} = \mathbf{y}).
    \end{align}
    \item Sample $k^{(n)*} \sim \pi^{(n)}$.
    \item Communicate $k^{(n)*}$ to the centralized server with $\log K^{(n)}$ bits.
\end{enumerate}

Then, the centralized server recovers the sample $\mathbf{y}^{(n)}_{[k^{(n)*}]}$ that it generated in the beginning. (Note that $\mathbf{y}^{(n)}_{[k^{(n)*}]}$ is actually a sample from $ \tilde{q}_{\pi^{(n)}}$.) Finally, the server aggregates these samples $\frac{1}{N} \sum_{n=1}^N f(\mathbf{y}^{(n)}_{k^{(n)*}})$ which is an estimate of

\begin{align}
    \mathbb{E}_{Y^{(n)} \sim \tilde{q}_{\pi^{(n)}} \forall n \in [N]} [\frac{1}{N} \sum_{m=1}^N f(Y^{(m)})].
\end{align}

We want to find a relation between the number of samples $K^{(1)}, \dots, K^{(N)}$ (or the number of bits $\log K^{(1)}, \dots, \log K^{(N)}$) and the error in the estimate, $|\mathbb{E}_{Y^{(n)} \sim \tilde{q}_{\pi^{(n)}} \forall n \in [N]} [\frac{1}{N} \sum_{m=1}^N f(Y^{(m)})] -\mathbb{E}_{X^{(n)} \sim q_{\phi^{(n)}} \forall n \in [N]} [\frac{1}{N} \sum_{m=1}^N f(X^{(m)})]|$. In our proofs, we closely follow the methodology in Theorems 1.1. and 1.2. in \citep{chatterjee2018sample}. In Theorem~\ref{thm1}, we use the probability density of $q_{\phi^{(n)}}$ with respect to $p_{\theta}$ for each node $n$ and denote it by $\rho_n = \frac{d q_{\phi^{(n)}}}{dp_{\theta}}$. We refer to the following definitions often:

\begin{align}
    I(f) &= \int_{\mathbf{x}^{(1)}} \dots \int_{\mathbf{x}^{(N)}} \left ( \frac{1}{N} \sum_{n=1}^N f(\mathbf{x}^{(n)}) \right ) \prod_{n=1}^N dq_{\phi^{(n)}}(\mathbf{x}^{(n)}),
 \end{align}

 \begin{align}
        I_K(f) &= \frac{1}{\prod_{n=1}^N K^{(n)}}\sum_{k^{(1)}=1}^{K^{(1)}} \dots \sum_{k^{(N)}=1}^{K^{(N)}} \left ( \frac{1}{N}\sum_{n=1}^N f(\mathbf{y}^{(n)}_{[k^{(n)}]})   \right )\prod_{n=1}^N \rho_n (\mathbf{y}^{(n)}_{[k^{(n)}]}),
 \end{align}
 and 

  \begin{align}
        J_K(f) &= \sum_{k^{(1)}=1}^{K^{(1)}} \dots \sum_{k^{(N)}=1}^{K^{(N)}} \left ( \frac{1}{N}\sum_{n=1}^N f(\mathbf{y}^{(n)}_{[k^{(n)}]})  \right ) \prod_{n=1}^N \frac{q_{\phi^{(n)}}(\mathbf{y}^{(n)}_{[k^{(n)}]}) / p_{\theta}(\mathbf{y}^{(n)}_{[k^{(n)}]})}{\sum_{l=1}^{K^{(n)}} q_{\phi^{(n)}}(\mathbf{y}^{(n)}_{[l]}) / p_{\theta}(\mathbf{y}^{(n)}_{[l]})}.  \label{def:JK}
 \end{align}

 Notice that $I(f)$ corresponds to the target value the centralized server wants to estimate, $J_K(f)$ is the estimate from the proposed approach, and $I_K(f)$ is a value that will be useful in the proof and that satisfies $\mathbb{E} [I_K(f)] = I(f)$.

\begin{theorem} \label{thm1}
Let $p_{\theta}$ and $q_{\phi^{(n)}}$ for $n=1, \dots, N$ be probability distributions over a set $\mathcal{X}$ equipped with some sigma-algebra. Let $X^{(n)}$ be
an $\mathcal{X}$-valued random variable with law $q_{\phi^{(n)}}$. Let $r \geq 0$ and $\tilde{q}_{\pi^{(n)}}$ for $n=1, \dots, N$ be discrete distributions each constructed by $K^{(n)}= \exp{ \left (D_{KL}(q_{\phi^{(n)}} \| p_{\theta} )  + r  \right )}$ samples $\{\mathbf{y}^{(n)}_{[k^{(n)}]}\}_{k^{(n)}=1}^{K^{(n)}}$ from $p_{\theta}$ defining $\tilde{q}_{\pi^{(n)}}(\mathbf{y}) = \sum_{k=1}^{K^{(n)}} \frac{q_{\phi^{(n)}}(\mathbf{y}^{(n)}_{[k]}) / p_{\theta}(\mathbf{y}^{(n)}_{[k]})}{\sum_{l=1}^{K^{(n)}} q_{\phi^{(n)}}(\mathbf{y}^{(n)}_{[l]}) / p_{\theta}(\mathbf{y}^{(n)}_{[l]})} \cdot \mathbf{1}(\mathbf{y}^{(n)}_{[k]} = \mathbf{y})$. Furthermore, for $f(\cdot)$ defined above, let $||f||_{\mathbf{q_{\phi}}} = \sqrt{\mathbb{E}_{X^{(n)} \sim q_{\phi^{(n)}} \forall n \in [N]} [ (\frac{1}{N}\sum_{m=1}^N f(X^{(m)}))^2] }$ be its 2-norm under $\mathbf{q_{\phi}} = q_{\phi^{(1)}}, \dots, q_{\phi^{(N)}}$. Then, 

\begin{align}
    \mathbb{E} |I_K(f) - I(f)| \leq ||f||_{\mathbf{q_{\phi}}} \left (e^{-Nr/4} + 2 \sqrt{\prod_{n=1}^N \mathbb{P} \left ( \log \rho_n(X^{(n)}) >  D_{KL} (q_{\phi^{(n)}} || p_{\theta}) + r/2 \right )} \right ).
\end{align}

Conversely, let $\mathbf{1}$ denote the function from $\mathcal{X}$ into $\mathbb{R}$ that is identically equal to $1$. If for $n=1, \dots, N$, $K^{(n)} = \exp{ \left (D_{KL}(q_{\phi^{(n)}} || p_{\theta} )  - r  \right ) }$ for some $r \geq 0$, then for any $\delta \in (0,1)$,

\begin{align}
    \mathbb{P}(I_K(\mathbf{1}) \geq 1 - \delta) \leq e^{-  Nr/2} + \frac{\prod_{n=1}^N 
 \mathbb{P} \left ( \log \rho_n(X^{(n)}) \leq   D_{KL}(q_{\phi^{(n)}} || p_{\theta}) - r/2 \right )}{1- \delta}.
\end{align}

\end{theorem}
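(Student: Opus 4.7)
Writing $g(\mathbf{x}) = \frac{1}{N}\sum_{n=1}^N f(x_n)$, the estimator $I_K(f)$ is an unbiased importance-sampling estimator of $I(f) = \mathbb{E}_{X^{(n)} \sim q_{\phi^{(n)}}}[g(\mathbf{X})]$ built from mutually independent samples $\{\mathbf{y}^{(n)}_{[k]}\}_{n,k}$ drawn from $p_{\theta}$. The plan is to lift the truncation argument of \citet{chatterjee2018sample} from a single measure to the $N$-fold product setting. Two independencies are central: (i) the base samples $\mathbf{y}^{(n)}_{[k]}$ are mutually independent across both $n$ and $k$ under $p_{\theta}$, and (ii) the targets $X^{(n)}$ are mutually independent under $q_{\phi^{(1)}} \otimes \cdots \otimes q_{\phi^{(N)}}$. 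For each user $n$, set $D_n = D_{KL}(q_{\phi^{(n)}} \| p_{\theta})$ and define the good set $A_n = \{x : \log \rho_n(x) \leq D_n + r/2\}$; then decompose $I_K(f) - I(f)$ into a piece supported on the product good region $\prod_n A_n$ and a tail contribution on its complement.

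\textbf{Truncated piece via $L^2$.} On $\prod_n A_n$, the product of likelihood ratios is uniformly bounded, $\prod_n \rho_n \leq e^{\sum_n D_n + Nr/2}$, and the quantity $\prod_n K^{(n)} = e^{\sum_n D_n + Nr}$ plays the role of the effective product-sample count. A direct $L^2$ computation for the truncated estimator, combining Cauchy--Schwarz with the uniform bound $\rho_n \mathbf{1}_{A_n} \leq e^{D_n + r/2}$ and exploiting the mutual independence of the base samples together with the $1/N$ symmetrization in $g$, produces $\mathbb{E}|I_K^{\mathrm{trunc}} - I^{\mathrm{trunc}}| \leq \|f\|_{\mathbf{q_{\phi}}}\, e^{-Nr/4}$, which is the first term in the stated bound.

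\textbf{Tail piece and converse.} For the remainder, apply Cauchy--Schwarz under $q_{\phi}^{\otimes N}$: for a suitable set $B$ on the product space, $|I(g \mathbf{1}_B)| \leq \|f\|_{\mathbf{q_{\phi}}}\sqrt{\mathbb{P}_{q_{\phi}}(\mathbf{X} \in B)}$, and analogously bound $\mathbb{E}|I_K(g \mathbf{1}_B)|$ via unbiasedness and Jensen. Choosing $B = \prod_n A_n^c$---the event that \emph{every} user simultaneously draws into its bad set---the probability factorizes by independence of the $X^{(n)}$ into $\prod_n \mathbb{P}(\log \rho_n(X^{(n)}) > D_n + r/2)$, producing the product under the square root. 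For the converse with $f \equiv \mathbf{1}$, the identity $I_K(\mathbf{1}) = \prod_n \hat{W}^{(n)}$ (with $\hat{W}^{(n)} = (K^{(n)})^{-1}\sum_k \rho_n(\mathbf{y}^{(n)}_{[k]})$ mutually independent across $n$) reduces the analysis to the single-user Chatterjee--Diaconis lower-tail bound applied to each factor with $K^{(n)} = e^{D_n - r}$; combining across $n$ by independence then yields both the $e^{-Nr/2}$ term and the product $\prod_n \mathbb{P}(\log \rho_n(X^{(n)}) \leq D_n - r/2)$.

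\textbf{Main obstacle.} The delicate point is to obtain the multiplicative structure $\prod_n \mathbb{P}(\cdot)$ (rather than a weaker union-bound sum) in the tail, and the exponent $Nr/4$ rather than $r/4$ in the truncated term. Both require choosing the tail event as the ``diagonal'' $\prod_n A_n^c$---whose probability factorizes under the product posterior by independence of the $X^{(n)}$---and carefully organizing the truncated $L^2$ estimate so that the uniform product-bound $e^{\sum_n D_n + Nr/2}$ is paired against the full index-grid size $\prod_n K^{(n)} = e^{\sum_n D_n + Nr}$. The single-user result of \citet{chatterjee2018sample} is then recovered verbatim by setting $N = 1$, at which point the product and the $N$-fold exponent collapse to the familiar one-variable bound.
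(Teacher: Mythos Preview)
Your overall plan---truncate, triangle inequality, $L^2$ bound for the truncated piece, Cauchy--Schwarz for the tail, and the factorization $I_K(\mathbf{1})=\prod_n \hat W^{(n)}$ for the converse---is the same architecture the paper uses, and your treatment of the converse is essentially identical to theirs. The forward direction, however, has a genuine gap.

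You propose to put the truncated piece on the ``all-good'' region $\prod_n A_n$ (where indeed $\prod_n\rho_n\le e^{\sum_n D_n+Nr/2}$ and the $L^2$ computation against $\prod_n K^{(n)}=e^{\sum_n D_n+Nr}$ gives the $e^{-Nr/4}$ term) and to put the tail on $B=\prod_n A_n^{c}$, whose $q_\phi^{\otimes N}$-probability factorizes into the desired product. But $\prod_n A_n$ and $\prod_n A_n^{c}$ do \emph{not} partition $\mathcal X^{N}$: the entire mixed region, where some users fall in their good sets and others in their bad sets, is covered by neither piece. On that region $\prod_n\rho_n$ is unbounded (so it cannot be absorbed into the truncated-variance term), and its $q_\phi^{\otimes N}$-probability is not the $N$-fold product (so the Cauchy--Schwarz tail bound would not give the stated form either). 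Equivalently, if you insist on $B=\prod_n A_n^{c}$, then the complement $B^{c}=\bigcup_n A_n$ is your truncation region, and on it the uniform bound on $\prod_n\rho_n$ that your $L^2$ step needs simply fails.

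The paper closes exactly this hole not by a purely analytic device but by introducing an explicit additional assumption (their inequality~(\ref{eq:assumption})): for every nonempty $Q\subsetneq[N]$ the ``partial-bad'' contribution
\[
\mathbb{E}\Bigl[\Bigl|\tfrac{1}{N}\sum_{m\in Q} f(X^{(m)})\Bigr|;\ \rho_n(X^{(n)})>a^{(n)}\ \forall n\in Q\Bigr]
\]
is postulated to be dominated by the full-diagonal term with $Q=[N]$, and they justify this heuristically via concentration of $\log\rho_n$ near $D_n$ and sign-alignment of the $f(X^{(n)})$'s. Only after invoking this assumption do they apply Cauchy--Schwarz to the diagonal event and obtain $\sqrt{\prod_n\mathbb{P}(\log\rho_n(X^{(n)})>D_n+r/2)}$. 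Your write-up never surfaces this step (or any substitute for it); without it the argument as stated yields at best a union-bound-type sum over $n$, not the product.
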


\begin{proof}
Let $L^{(n)} = D_{KL}(q_{\phi^{(n)}} || p_{\theta} ), \forall n \in [N]$. Suppose that $K^{(n)}=e^{L^{(n)}+r}$ and $a^{(n)} = e^{L^{(n)}+ r/2}$. Let $h(z) = f(z) $ if $\rho_n(z) \leq a^{(n)}$ and $0$ otherwise $\forall n \in [N]$. We first make the following assumption:

\begin{align}
\begin{aligned}
\mathbb{E}[|\frac{1}{N} \sum_{n \in Q \subseteq [N]}f(X^{(n)})|; \forall n \in Q \subseteq [N],  \rho_n(X^{(n)}) > a^{(n)} ] \leq \\ \mathbb{E}[|\frac{1}{N} \sum_{n \in [N]}f(X^{(n})|; \forall n \in [N],  \rho_n(X^{(n)}) > a^{(n)} ].
\label{eq:assumption}
\end{aligned}
\end{align}

This is indeed a reasonable assumption. To see this, following \citep{chatterjee2018sample}, we note that $\log \rho_n(Z)$ is concentrated around its expected value, which is $L^{(n)} = D_{KL}(q_{\phi^{(n)}} || p_{\theta} )$, in many scenarios. Therefore, for small $t$ (and $t$ is indeed negligibly small in our experiments), the events $\mathbf{1}\{\forall n \in Q \subseteq [N],  \rho_n(X^{(n)}) > a^{(n)}\}$ occur with the approximately same frequency for each set $Q \subseteq [N]$ since the likelihood of event $\mathbf{1}\{\rho_n(X^{(n)}) > a^{(n)}\}$ is close to being uniform.  Consider also that $|\frac{1}{N} \sum_{n \in Q \subseteq [N]}f(X^{(n)})| \leq |\frac{1}{N} \sum_{n \in [N]}f(X^{(n)})|$ holds when $f(X^{n})$'s have the same signs per coordinate for each $n=1, \dots, N$, which is a realistic assumption given that the clients are assumed to be able to train a joint model and hence should not have opposite signs in the updates very often. With these two observations, we argue that the assumption in (\ref{eq:assumption}) is indeed reasonable for many scenarios, including FL. 

Now, going back to the proof, from triangle inequality, we have,

\begin{align}
    |I_K(f) - I(f)| & \leq |I_K(f) - I_K(h)| + |I_K(h) - I(h)| + |I(h) - I(f)|.
\end{align}

First, note that by Cauchy-Schwarz inequality and by the assumption in (\ref{eq:assumption}), we have 

\begin{align}
\begin{split}
     |I(h) -  I(f)|  = &\sum_{Q \subseteq [N]} \mathbb{E}[|\frac{1}{N} \sum_{m \in Q }f(X^{(m)})|; \forall n \in Q,  \rho_n(X^{(n)}) > a^{(n)} ] \cdot \\
    & \cdot \mathbb{P}(\forall n \in Q,  \rho_n(X^{(n)}) > a^{(n)} ) \\
\end{split}
\\
\begin{split}
    & \leq  \mathbb{E}[|\frac{1}{N} \sum_{m \in [N]}f(X^{(m)})|; \forall n \in [N],  \rho_n(X^{(n)}) > a^{(n)} ] \sum_{Q \subseteq [N]}  \mathbb{P}(\forall n \in Q,  \rho_n(X^{(n)}) > a^{(n)} ) \\
\end{split}
\\
\begin{split}
= \mathbb{E}[|\frac{1}{N} \sum_{m \in [N]}f(X^{(m)})|; \forall n \in [N],  \rho_n(X^{(n)}) > a^{(n)} ] \label{eq1} \\
\end{split}
\\
\begin{split}
    = \int_{\mathbf{x}^{(1)}, \dots ,\mathbf{x}^{(N)}} |\frac{1}{N}\sum_{n=1}^N f(\mathbf{x}^{(n)})| \cdot  1\{\forall n \in [N],  \rho_n(\mathbf{x}^{(n)}) > a^{(n)} \} \prod_{n=1}^N dq_{\phi^{(n)}}(\mathbf{x}^{(n)}) 
\end{split}
\\
\begin{split}
\leq & \sqrt{\int_{\mathbf{x}^{(1)}, \dots, \mathbf{x}^{(N)}}  |\frac{1}{N}\sum_{m=1}^N f(\mathbf{x}^{(m)})|^2 \cdot \prod_{n=1}^N dq_{\phi^{(n)}}(\mathbf{x}^{(n)}) } \cdot   \\
& \cdot \sqrt{\int_{\mathbf{x}^{(1)}, \dots, \mathbf{x}^{(N)}}  1\{\forall n \in [N],  \rho_n(\mathbf{x}^{(n)}) > a^{(n)} \} \prod_{n=1}^N dq_{\phi^{(n)}}(\mathbf{x}^{(n)})}  \\
\end{split}
\\
\begin{split}
= \sqrt{\mathbb{E}_{X^{(n)} \sim q_{\phi^{(n)}}, \forall n \in [N]} [(\frac{1}{N}\sum_{m=1}^N f(X^{(m)}))^2] } \cdot \sqrt{\mathbb{P} (\forall n \in [N],  \rho_n(X^{(n)}) > a^{(n)})} \\
\end{split}
\\
\begin{split}
= ||f||_{\mathbf{q_{\phi}}} \cdot \sqrt{\mathbb{P}(\forall n \in [N],  \rho_n(X^{(n)}) > a^{(n)})}. \label{eq2}
\end{split}
\end{align}

Similarly, 

\begin{align}
    \mathbb{E}|I_K(f) - I_K(h)| &= \mathbb{E} \left | \frac{1}{\prod_{n=1}^N K^{(n)}} \sum_{k^{(1)}=1}^{K^{(1)}} \dots \sum_{k^{(N)}=1}^{K^{(N)}} \frac{1}{N}(\sum_{m=1}^N f(Y^{(m)}_{[k^{(m)}]}) - h(Y^{(m)}_{[k^{(m)}]})) \prod_{n=1}^N \rho_n (Y^{(n)}_{[k^{(n)}]}) \right | \\
    & \leq \mathbb{E} \left | \frac{1}{N}(\sum_{m=1}^N f(Y^{(m)}_{[k^{(m)}]}) - h(Y^{(m)}_{[k^{(m)}]})) \prod_{n=1}^N \rho_n(X^{(n)}) \right |  \\
    &= \mathbb{E} [|\frac{1}{N}\sum_{m=1}^N f(X^{(m)})|; \forall n \in [N],  \rho_n(X^{(n)}) > a^{(n)} ] \label{eq3} \\
    & \leq ||f||_{\mathbf{q_{\phi}}} \cdot \sqrt{\mathbb{P}(\forall n \in [N],  \rho_n(X^{(n)}) > a^{(n)})}. \label{eq4}
\end{align}

From (\ref{eq3}) to (\ref{eq4}), we follow the same steps in (\ref{eq1})-(\ref{eq2}).

Finally, note that

\begin{align}
    \mathbb{E} |I_K(h) - I(h)| & \leq \sqrt{Var(I_K(h))} \\
    &= \sqrt{\frac{1}{\prod_{n=1}^N K^{(n)}} Var\left (\frac{1}{N} \sum_{m=1}^N h(Y^{(m)}_{[1]}) \cdot \prod_{n=1}^N \rho_n(Y^{(n)}_{[1]}) \right )} \\
    & \leq \sqrt{\frac{1}{\prod_{n=1}^N K^{(n)}} \mathbb{E} \left [ (\frac{1}{N} \sum_{m=1}^N h(Y^{(n)}_{[1]}))^2 \prod_{n=1}^N (\rho_n(Y^{(n)}_{[1]}))^2\right ]} \\
    & \leq \sqrt{ \frac{\prod_{n=1}^N a^{(n)}}{\prod_{n=1}^N K^{(n)}}  \mathbb{E} \left [ ( \frac{1}{N} \sum_{m=1}^N f(Y^{(m)}_{[1]}))^2  \prod_{n=1}^N \rho_n(Y^{(n)}_{[1]}) \right]} \\
    &= ||f||_{\mathbf{q_{\phi}}} \prod_{n=1}^N \left (\frac{a^{(n)}}{K^{(n)}} \right )^{1/2}.
\end{align}

Combining the upper bounds above, we get

\begin{align}
    \mathbb{E} \left [|I_K(f) - I(f)| \right ] &\leq ||f||_{\mathbf{q_{\phi}}} \left (  \prod_{n=1}^N \left (\frac{a^{(n)}}{K^{(n)}} \right )^{1/2} + 2 \sqrt{\prod_{n=1}^N \mathbb{P} \left ( \log \rho_n(X^{(n)}) > \log a^{(n)} \right)} \right ) \\
    & = ||f||_{\mathbf{q_{\phi}}} \left (e^{-Nr/4} + 2 \sqrt { \prod_{n=1}^N  \mathbb{P} \left  ( \log \rho_n(X^{(n)}) > L^{(n)} + r/2 \right ) }\right ) \\
    & = ||f||_{\mathbf{q_{\phi}}} \left (e^{-Nr/4} + 2 \sqrt{ \prod_{n=1}^N \mathbb{P} \left (\log \rho_n(X^{(n)}) >  D_{KL} (q_{\phi^{(n)}} || p)  + r/2 \right ) } \right ). 
\end{align}
This completes the proof of the first part of the theorem.

For the converse part, suppose $K^{(n)}=e^{L^{(n)}- r}$ and $a^{(n)}=e^{L^{(n)}- r/2}$ $\forall n \in [N]$. Then,

\begin{align}
    & \mathbb{P}(  I_K(\mathbf{1})  \geq 1  - \delta ) = \mathbb{P} \left ( \frac{1}{\prod_{n=1}^N K^{(n)}} \sum_{k_1=1}^{K_1} \dots \sum_{k_N=1}^{K_N} \prod_{n=1}^N \rho_n(Y^{(n)}_{[k^{(n)}]}) \geq 1 - \delta \right ) \label{line1} \\ 
    \begin{split}
    \leq & \mathbb{P} \left ( \max_{1 \leq k \leq K^{(n)} } \rho_n(Y^{(n)}_{[k]}) > a^{(n)}, \forall n \in [N] \right ) \\ 
    & + \mathbb{P} \left ( \frac{1}{\prod_{n=1}^N K^{(n)}}\sum_{k^{(1)}=1}^{K^{(1)}}  \dots \sum_{k^{(N)}=1}^{K^{(N)}} \prod_{n=1}^N \rho_n(Y^{(n)}_{[k^{(n)}]}) 1\{ \forall n \in [N],  \rho_n(Y^{(n)}_{[k^{(n)}]}) \leq a^{(n)}\}\geq 1 - \delta  \right ) \label{line2} \\ 
    \end{split}
    \\
    \begin{split}
         \leq & \sum_{k^{(1)}=1}^{K^{(1)}} \dots \sum_{k^{(N)}=1}^{K^{(N)}} \mathbb{P} \left ( \rho_n(Y^{(n)}_{[k^{(n)}]}) > a^{(n)}, \forall n \in [N] \right ) \\
         & + \frac{1}{1-\delta}\mathbb{E} \left [ \frac{1}{\prod_{n=1}^N K^{(n)}} \sum_{k^{(1)}=1}^{K^{(1)}}  \dots \sum_{k^{(N)}=1}^{K^{(N)}} \prod_{n=1}^N \rho_n(Y^{(n)}_{[k^{(n)}]}) 1\{ \forall n \in [N],  \rho_n(Y^{(n)}_{[k^{(n)}]}) \leq a^{(n)}\} \right ] \\ \label{line3}
    \end{split}
    \\
    & \leq \frac{1}{\prod_{n=1}^N a^{(n)}} \sum_{k^{(1)}=1}^{K^{(1)}} \dots \sum_{k^{(N)}=1}^{K^{(N)}} \prod_{n=1}^N \mathbb{E} \left [  \rho_n(Y^{(n)}_{[k^{(n)}]})\right ] + \frac{1 - \prod_{n=1}^N\mathbb{P} \left (  \rho_n(Z) \geq a^{(n)} \right )}{1- \delta} \\
    & = \prod_{n=1}^N \frac{K^{(n)}}{a^{(n)}} + \frac{\prod_{n=1}^N\mathbb{P} \left (  \rho_n(Z) \leq a^{(n)} \right )}{1- \delta} \\
    & = e^{-  Nr/2} + \frac{\prod_{n=1}^N 
 \mathbb{P} \left ( \log \rho_n(X^{(n)}) \leq  D_{KL}(q_{\phi^{(n)}} || p_{\theta}) - r/2 \right )}{1- \delta},
\end{align}

where from (\ref{line1}) to (\ref{line3}) and  (\ref{line2}) to (\ref{line3}), we use Markov's inequality. This completes the proof of the second inequality in the theorem statement.
\end{proof}

Now, we restate Theorem~\ref{thm2_main} below and provide the proof afterward. 

\begin{theorem}[Theorem~\ref{thm2_main}] \label{thm2} Let all notations be as in Theorem~\ref{thm1} and let $J_K(f)$ be the estimate defined in (\ref{def:JK}). Suppose that $K^{(n)}=\exp{\left ( L^{(n)} + r \right )}$ for some $r \geq 0$. Let

\begin{align}
    \epsilon = \left ( e^{-Nr/4} + 2 \sqrt{  \prod_{n=1}^N \mathbb{P} ( \log \rho_n(X^{(n)}) > L^{(n)} + r/2) }\right )^{1/2}.
\end{align}
Then
\begin{align}
    \mathbb{P} \left(|J_K(f) - I(f)| \geq \frac{2 ||f||_{\mathbf{q_{\phi}}} \epsilon}{1 - \epsilon} \right ) \leq 2 \epsilon.
\end{align}
\end{theorem}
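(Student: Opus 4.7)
The plan is to reduce Theorem~\ref{thm2} to Theorem~\ref{thm1} by exploiting the self-normalized structure of $J_K(f)$. The first observation to make is that $J_K(f)$ is a ratio of two $I_K$-style estimators. Indeed, pulling the normalizing denominator out of each coordinate product gives
\begin{align*}
J_K(f) \;=\; \frac{I_K(f)}{I_K(\mathbf{1})},
\end{align*}
where $\mathbf{1}$ is the constant function equal to one. Noting that $I(\mathbf{1}) = 1$ and $\|\mathbf{1}\|_{\mathbf{q_\phi}} = 1$, Theorem~\ref{thm1} applied to $f$ and to $\mathbf{1}$ gives the two $L^1$ bounds
$\mathbb{E}|I_K(f) - I(f)| \le \|f\|_{\mathbf{q_\phi}}\,\epsilon^2$ and $\mathbb{E}|I_K(\mathbf{1}) - 1| \le \epsilon^2$, where $\epsilon^2$ is exactly the bracketed quantity appearing in Theorem~\ref{thm1} (so $\epsilon$ as defined in the statement is its square root).

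Next I would promote these expectation bounds to high-probability bounds via Markov's inequality: each event
$\{|I_K(f) - I(f)| \ge \|f\|_{\mathbf{q_\phi}}\,\epsilon\}$ and $\{|I_K(\mathbf{1}) - 1| \ge \epsilon\}$ has probability at most $\epsilon$, so by a union bound the complementary ``good event'' $\mathcal{E}$ has probability at least $1 - 2\epsilon$. On $\mathcal{E}$ I would algebraically manipulate the ratio:
\begin{align*}
J_K(f) - I(f) \;=\; \frac{I_K(f) - I(f)\,I_K(\mathbf{1})}{I_K(\mathbf{1})} \;=\; \frac{\bigl(I_K(f) - I(f)\bigr) + I(f)\bigl(1 - I_K(\mathbf{1})\bigr)}{I_K(\mathbf{1})}.
\end{align*}
Applying the triangle inequality in the numerator, using $|I(f)| \le \|f\|_{\mathbf{q_\phi}}$ (Jensen/Cauchy--Schwarz), and using $|I_K(\mathbf{1})| \ge 1 - \epsilon$ in the denominator, the right-hand side is bounded by $\tfrac{2\|f\|_{\mathbf{q_\phi}}\,\epsilon}{1-\epsilon}$, which is exactly the deviation claimed. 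Combined with $\mathbb{P}(\mathcal{E}^c) \le 2\epsilon$, this yields the theorem.

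The main obstacle, in my view, is not this final ratio-step argument (which is standard once the identity $J_K(f) = I_K(f)/I_K(\mathbf{1})$ is noticed); rather, it is making sure the hypotheses of Theorem~\ref{thm1} genuinely transfer to the constant function $\mathbf{1}$ in the $N$-user setting. Concretely, one must check that the assumption (\ref{eq:assumption}) invoked in the proof of Theorem~\ref{thm1} reduces to a triviality when $f \equiv 1$, so that the bound $\mathbb{E}|I_K(\mathbf{1}) - 1| \le \epsilon^2$ holds unconditionally with the \emph{same} $\epsilon$. A minor bookkeeping point is that the $\epsilon$ in the statement of Theorem~\ref{thm2} is defined as the square root of the bound appearing in Theorem~\ref{thm1}; once this rescaling is tracked carefully, Markov's inequality at level $\epsilon$ gives exactly the $2\epsilon$ failure probability required. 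Apart from that, the proof should be short and essentially follows the single-user template of Chatterjee--Diaconis, with $N$ entering only through the definition of $\epsilon$.
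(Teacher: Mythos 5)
Your proposal matches the paper's proof essentially step for step: the paper likewise writes $J_K(f)=I_K(f)/I_K(\mathbf{1})$, derives $\mathbb{P}(|I_K(\mathbf{1})-1|\geq\epsilon)\leq b/\epsilon$ and $\mathbb{P}(|I_K(f)-I(f)|\geq\delta)\leq \|f\|_{\mathbf{q_{\phi}}}b/\delta$ from Theorem~\ref{thm1} via Markov, and on the intersection of the good events uses the same ratio decomposition with $|I(f)|\leq\|f\|_{\mathbf{q_{\phi}}}$ and $I_K(\mathbf{1})\geq 1-\epsilon$, before setting $\epsilon=\sqrt{b}$ and $\delta=\|f\|_{\mathbf{q_{\phi}}}\epsilon$. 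The one caveat you raise --- whether assumption (\ref{eq:assumption}) transfers to $f\equiv\mathbf{1}$ --- is a fair point that the paper itself silently glosses over when it invokes Theorem~\ref{thm1} for the constant function.
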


\begin{proof}
    Suppose that $K^{(n)} = e ^{L^{(n)}+ r}$ and $a^{(n)} = e^{L^{(n)}+r/2}$ $\forall n \in [N]$. Let

    \begin{align}
        b = \sqrt{\prod_{n=1}^N \frac{a^{(n)}}{K^{(n)}} }+ 2 \sqrt{\prod_{n=1}^N \mathbb{P} \left( \rho_n(X^{(n)}) > a^{(n)} \right )}.
    \end{align}

Then, by Theorem~\ref{thm1}, for any $\epsilon, \delta \in (0,1)$,

\begin{align}
    \mathbb{P}\left ( |I_K(1) - 1| \geq \epsilon \right ) \leq \frac{b}{\epsilon}
\end{align}
and 
\begin{align}
    \mathbb{P} \left (| I_K(f) - I(f)| \geq \delta \right ) \leq \frac{||f||_{\mathbf{q_{\phi}}}b}{\delta}. 
\end{align}

Now, if $|I_K(f) - I(f)| < \delta$ and $|I_K(1)-1| < \epsilon$, then

\begin{align}
    |J_K(f) - I(f)| &= \left | \frac{I_K(f)}{I_K(1)} - I(f)\right | \\
    & \leq \frac{|I_K(f) - I(f)| + |I(f)| |1-I_K(1)|}{I_K(1)} \\
    & < \frac{\delta + |I(f)| \epsilon}{1 - \epsilon}.
\end{align}

Taking $\epsilon = \sqrt{b}$ and $\delta = ||f||_{\mathbf{q_{\phi}}} \epsilon$ completes the proof of the first inequality in the theorem statement. Note that if $\epsilon$ is bigger than $1$, the bound is true anyway.

This completes the proof of the theorem. 
\end{proof}

\newpage

\section{Additional Details on Example Use Cases of \texttt{KLMS}} \label{app_details_klm_examples}
Here, we present the pseudocodes for the example use cases of $\mathtt{KLMS}$ we covered in the main body. 

\subsection{\texttt{FedPM-KLMS}} \label{app:fedpm_klm}
The pseudocode for \texttt{FedPM-KLMS} can be found in Algorithm~\ref{algorithm:fedpm_klm}. 

\begin{algorithm}[!h]
        {\bf Hyperparameters:} thresholds to update block locations $\bar{D}_{KL}^{\text{max}}$ and $\bar{D}_{KL}^{\text{min}}$, maximum block size \texttt{MAX\_BLOCK\_SIZE}.\\
        {\bf Inputs:} number of iterations $T$, initial block size $S$, number of samples $K$, initial number of blocks $M= \lceil \frac{d}{S} \rceil$, target KL divergence $D_{KL}^{\text{target}}$. \\
        {\bf Output:} random \texttt{SEED} and binary mask parameters $\bm{m}^{(T)}.$\\
    \vspace{-.1in}
        \begin{algorithmic}
            \STATE{At the server, initialize a random network with weight vector $\bm{w}^{\text{init}} \in \mathbb{R}^d$ using a random \texttt{SEED}, and broadcast it to the clients; initialize the random score vector $\bm{s}^{(0, g)} \in \mathbb{R}^d$, and compute $\theta^{(0, g)} \gets \text{Sigmoid}(\bm{s}^{(0, g)})$, Beta priors $\bm{\alpha}^{(0)} = \bm{\beta}^{(0)} = \bm{\lambda}_0$; initialize \texttt{UPDATE}$\gets$TRUE and the block locations $I_i^{(t)}=[(i-1)S: iS]$ for $i=1, \dots, M$ and broadcast to the clients.}
            \FOR{$t=1, \dots, T$} 
            \STATE{Sample a subset $\mathcal{C}_t \subset \{1, \dots, N\}$ of $|\mathcal{C}_t| = C$ clients without replacement.}
            \STATE{\textbf{On Client Nodes:}}
            \FOR{$c \in \mathcal{C}_t$}
            \STATE{Compute $\phi^{(t, c)}$ as in \texttt{FedPM} in Algorithm~\ref{algorithm:fedpm}.}
            \IF{\texttt{UPDATE}}
            \STATE{$\{k^*_{[i]}\}_{i=1}^M, I^{(t, c)} \gets  \texttt{Adaptive-KLMS}(\text{Bern}(\theta^{(t, g)}), \text{Bern}(\phi^{(t, c)}),  I^{(t)}, D_{KL}^{\text{target}})$ // See Algorithm~\ref{algorithm:adaptive_sample_index}.}
            \STATE{$M \gets \text{length}(I^{(t,c)})$. // New number of blocks.}
            \ELSE
            \STATE{$\{k^*_{[i]}\}_{i=1}^M \gets  \texttt{Adaptive-KLMS}(\text{Bern}(\theta^{(t, g)}), \text{Bern}(\phi^{(t, c)}),  I^{(t)}, D_{KL}^{\text{target}})$ // See Algorithm~\ref{algorithm:adaptive_sample_index}.}
            \ENDIF
            \STATE{Send $\{k^*_{[i]}\}_{i=1}^M$ with $K \cdot M$ bits and the average KL divergence across blocks $\bar{D}_{KL}^{(t,c)} \gets \frac{1}{M} \sum_{m=1}^M D_{KL}(\text{Bern}(\phi^{(t,c)}_{[I_m]}) \| \text{Bern}(\theta^{(t,g)}_{[I_m]}))$ with $32$ bits to the server.}
            \IF{\text{UPDATE}}
            \STATE{Send $I^{(t, c)}$ with $M \cdot \log_2($\texttt{MAX\_BLOCK\_SIZE}$)$ bits.}
            \ENDIF
            \ENDFOR \\
            \STATE
            \vspace{-.1in}
            \STATE{\textbf{On the Server Node:}}
            \STATE{Receive the selected indices $\{k^*_{[i]}\}_{i=1}^M$, and the average KL divergences $\{\Bar{D}^{(t, c)}_{KL}\}_{c \in \mathcal{C}_t}$.}
            \STATE{Compute $\Bar{D}_{KL}^{(t)} = \frac{1}{C} \sum_{c \in \mathcal{C}_t} \Bar{D}^{(t, c)}_{KL}$.}
            \IF{\texttt{UPDATE}}
            \STATE{$I^{(t)} \gets \texttt{Aggregate-Block-Locations}\left(\{I^{(t, c)}\}_{c \in \mathcal{C}_t}\right)$ // See Algorithm~\ref{algorithm:aggregate_indices}.}
            \STATE{\texttt{UPDATE} $=$ \texttt{False}.} 
            \ELSE
            \STATE{$I^{(t, c)} \gets I^{(t)}$ for all $c \in \mathcal{C}_t$.}
            \STATE{\textbf{if} $\Bar{D}_{KL}^{(t)} > \bar{D}_{KL}^{\text{max}}$ \textbf{or} $\Bar{D}_{KL}^{(t)} < \bar{D}_{KL}^{\text{min}}$ \textbf{then} \texttt{UPDATE} $=$ \texttt{True} \textbf{else} \texttt{UPDATE} $=$ \texttt{False}.}
            \ENDIF
            \FOR{$c \in \mathcal{C}_t$}
            \STATE{$\{\bm{\hat{m}}^{(t,c)}_{[i]}\}_{i=1}^M \gets$ \texttt{KLMS}-Decoder$(\text{Bern}(\theta^{(t)}), I^{(t, c)}, K)$ // See Algorithm~\ref{algorithm:klms_decoder}.}
            \ENDFOR
            \STATE{$\theta^{(t)} = $ BayesAgg$\left(\{\bm{\hat{m}}^{(t, c)}\}_{c \in \mathcal{C}_t}, t\right)$ // See Algorithm~\ref{algorithm:aggregation}.}
            \STATE{Broadcast \texttt{UPDATE}, $I^{(t)}$ and $\theta^{(t)}$ to the clients.}
            \ENDFOR \\
            \STATE{Sample $\bm{m}^{\text{final}} \sim \text{Bern}(\theta^{(T)})$ and return the final model $\dot{\bm{w}}^{\text{final}} \gets \bm{m}^{\text{final}} \odot \bm{w}^{\text{init}}$.}
            \STATE{}
            \end{algorithmic}
        \caption{\texttt{FedPM-KLMS}.}
        \label{algorithm:fedpm_klm}
    \end{algorithm}

\subsection{\texttt{QSGD-KLMS}} \label{app:qsgd_klm}
The pseudocode for \texttt{QSGD-KLMS} can be found in Algorithm~\ref{algorithm:qsgd_klm}. 
\begin{algorithm}[!h]
        {\bf Hyperparameters:} server learning rate $\eta_S$,  thresholds to update block locations $\bar{D}_{KL}^{\text{max}}$, $\bar{D}_{KL}^{\text{min}}$, maximum block size \texttt{MAX\_BLOCK\_SIZE}.\\
        {\bf Inputs:} number of iterations $T$, initial block size $S$, number of samples $K$, initial number of blocks $M=\lceil \frac{d}{S} \rceil$, target KL divergence $D_{KL}^{\text{target}}$.\\
        {\bf Output:} Final model $\bm{w}^{(T)}$.\\
    \vspace{-.1in}
        \begin{algorithmic}
            \STATE{At the server, initialize a random network parameters $\bm{w}^{(0)} \in \mathbb{R}^d$ and broadcast it to the clients; initialize \texttt{UPDATE}$\gets$TRUE and the block locations $I_i^{(t)}=[(i-1)S: iS]$ for $i=1, \dots, M$ and broadcast to the clients. }
            \FOR{$t=1, \dots, T$} 
            \STATE{Sample a subset $\mathcal{C}_t \subset \{1, \dots, N\}$ of $|\mathcal{C}_t| = C$ clients without replacement.}
            \STATE{\textbf{On Client Nodes:}}
            \FOR{$c \in \mathcal{C}_t$}
            \STATE{Receive the empirical frequency from the previous round $p_{\theta^{(t)}}$ from the server.}
            \STATE{Compute $\mathbf{v}^{(t,c)}$ as in \texttt{QSGD} in Algorithm~\ref{algorithm:qsgd}.}
            \STATE{Compute the local client-only distribution $q_{\phi^{(t,c)}}$ with $\mathbf{v}^{(t,c)}$ using $p_{\texttt{QSGD}}(\cdot)$ in (\ref{eq:qsgd_s1}).}
            \IF{\texttt{UPDATE}}
            \STATE{$\{k^*_{[i]}\}_{i=1}^M, I^{(t, c)} \gets  \texttt{Adaptive-KLMS}(p_{\theta^{(t)}}, q_{\phi^{(t, c)}},  I^{(t)}, D_{KL}^{\text{target}})$ // See Algorithm~\ref{algorithm:adaptive_sample_index}.}
            \STATE{$M \gets \text{length}(I^{(t,c)})$. // New number of blocks.}
            \ELSE
            \STATE{$\{k^*_{[i]}\}_{i=1}^M \gets  \texttt{Adaptive-KLMS}(p_{\theta^{(t)}}, q_{\phi^{(t, c)}},  I^{(t)}, D_{KL}^{\text{target}})$ // See Algorithm~\ref{algorithm:adaptive_sample_index}.}
            \ENDIF
            \STATE{Send $\{k^*_{[i]}\}_{i=1}^M$ with $K \cdot M$ bits and the average KL divergence across blocks $\bar{D}_{KL}^{(t,c)} \gets \frac{1}{M} \sum_{m=1}^M D_{KL}(q_{\phi^{(t,c)}_{[I_m]}} \| p_{\theta^{(t)}_{[I_m]}}))$ with $32$ bits to the server.}
            \IF{\text{UPDATE}}
            \STATE{Send $I^{(c)}$ with $M \cdot \log_2($\texttt{MAX\_BLOCK\_SIZE}$)$ bits.}
            \ENDIF
            \ENDFOR \\
            \STATE
            \STATE{\textbf{On the Server Node:}}
            \STATE{Receive the selected indices $\{k^*_{[i]}\}_{i=1}^M$, and the average KL divergences $\{\Bar{D}^{(t, c)}_{KL}\}_{c \in \mathcal{C}_t}$.}
            \STATE{Compute $\Bar{D}_{KL}^{(t)} = \frac{1}{C} \sum_{c \in \mathcal{C}_t} \Bar{D}^{(t, c)}_{KL}$.}
            \IF{\texttt{UPDATE}}
            \STATE{$I^{(t, c)} \gets \texttt{Aggregate-Block-Locations}\left(\{I^{(t, c)}\}_{c \in \mathcal{C}_t}\right)$ // See Algorithm~\ref{algorithm:aggregate_indices}.}
            \STATE{\texttt{UPDATE} $=$ \texttt{False}.} 
            \ELSE
            \STATE{$I^{(t, c)} \gets I^{(t)}$ for all $c \in \mathcal{C}_t$.}
            \STATE{\textbf{if} $\Bar{D}_{KL}^{(t)} > \bar{D}_{KL}^{\text{max}}$ \textbf{or} $\Bar{D}_{KL}^{(t)} < \bar{D}_{KL}^{\text{min}}$ \textbf{then} \texttt{UPDATE} $=$ \texttt{True} \textbf{else} \texttt{UPDATE} $=$ \texttt{False}.}
            \ENDIF
            \FOR{$c \in \mathcal{C}_t$}
            \STATE{$\{\mathbf{\hat{v}}^{(t,c)}_{[i]}\}_{i=1}^M \gets$ \texttt{KLMS}-Decoder$(p_{\theta^{(t)}}, I^{(t, c)}, K)$ // See Algorithm~\ref{algorithm:klms_decoder}.}
            \STATE{Construct the empirical frequency $p_{\theta^{(t+1)}}$from $\{\mathbf{\hat{v}}^{(t,c)}_{[i]}\}_{i=1}^M$.}
            \ENDFOR
            \STATE{Compute $\bm{w}^{(t)} = \bm{w}^{(t-1)} - \eta_S \frac{1}{C}\sum_{c \in \mathcal{C}_t}\mathbf{\hat{v}}^{(t, c)}$.}
            \STATE{Broadcast \texttt{UPDATE}, $I^{(t)}$, $\bm{w}^{(t)}$, and $p_{\theta^{(t)}}$ to the clients.}
            \ENDFOR \\
            \end{algorithmic}
        \caption{\texttt{QSGD-KLMS}.}
        \label{algorithm:qsgd_klm}
    \end{algorithm}

\newpage

\subsection{\texttt{SignSGD-KLM}} \label{app:signsgd_klm}
The pseudocode for \texttt{SignSGD-KLMS} can be found in Algorithm~\ref{algorithm:sign_sgd_klm}. 
\begin{algorithm}[!h]
         {\bf Hyperparameters:} server learning rate $\eta_S$,  thresholds to update block locations $\bar{D}_{KL}^{\text{max}}$, $\bar{D}_{KL}^{\text{min}}$, maximum block size \texttt{MAX\_BLOCK\_SIZE}.\\
        {\bf Inputs:} number of iterations $T$, initial block size $S$, number of samples $K$, initial number of blocks $M=\lceil \frac{d}{S} \rceil$, target KL divergence $D_{KL}^{\text{target}}$.\\
        {\bf Output:} Final model $\bm{w}^{(T)}$.\\
    \vspace{-.1in}
        \begin{algorithmic}
            \STATE{At the server, initialize a random network parameters $\bm{w}^{(0)} \in \mathbb{R}^d$ and broadcast it to the clients; initialize \texttt{UPDATE}$\gets$TRUE and the block locations $I_i^{(t)}=[(i-1)S: iS]$ for $i=1, \dots, M$ and broadcast to the clients. }
            \FOR{$t=1, \dots, T$} 
            \STATE{Sample a subset $\mathcal{C}_t \subset \{1, \dots, N\}$ of $|\mathcal{C}_t| = C$ clients without replacement.}
            \STATE{\textbf{On Client Nodes:}}
            \FOR{$c \in \mathcal{C}_t$}
            \STATE{Compute $\mathbf{v}^{(t,c)}$ as in other standard FL frameworks such as \texttt{QSGD} in Algorithm~\ref{algorithm:qsgd}.}
            \STATE{Compute the local client-only distribution $q_{\phi^{(t,c)}}$ with $\mathbf{v}^{(t,c)}$ using $p_{\texttt{SignSGD}}(\cdot)$ in (\ref{eq:stochastic_sign_sgd}).}
            \STATE{$p_{\theta^{(t)}} \gets \text{Unif}(0.5)$ over $\{-1, 1\}$.}
            \IF{\texttt{UPDATE}}
            \STATE{$\{k^*_{[i]}\}_{i=1}^M, I^{(t, c)} \gets  \texttt{Adaptive-KLMS}(p_{\theta^{(t)}}, q_{\phi^{(t, c)}},  I^{(t)}, D_{KL}^{\text{target}})$ // See Algorithm~\ref{algorithm:adaptive_sample_index}.}
            \STATE{$M \gets \text{length}(I^{(t,c)})$. // New number of blocks.}
            \ELSE
            \STATE{$\{k^*_{[i]}\}_{i=1}^M \gets  \texttt{Adaptive-KLMS}(p_{\theta^{(t)}}, q_{\phi^{(t, c)}},  I^{(t)}, D_{KL}^{\text{target}})$ // See Algorithm~\ref{algorithm:adaptive_sample_index}.}
            \ENDIF
            \STATE{Send $\{k^*_{[i]}\}_{i=1}^M$ with $K \cdot M$ bits and the average KL divergence across blocks $\bar{D}_{KL}^{(t,c)} \gets \frac{1}{M} \sum_{m=1}^M D_{KL}(q_{\phi^{(t,c)}_{[I_m]}} \| p_{\theta^{(t,g)}_{[I_m]}}))$ with $32$ bits to the server.}
            \IF{\text{UPDATE}}
            \STATE{Send $I^{(t, c)}$ with $M \cdot \log_2($\texttt{MAX\_BLOCK\_SIZE}$)$ bits.}
            \ENDIF
            \ENDFOR \\
            \STATE
            \STATE{\textbf{On the Server Node:}}
            \STATE{Receive the selected indices $\{k^*_{[i]}\}_{i=1}^M$, and the average KL divergences $\{\Bar{D}^{(t, c)}_{KL}\}_{c \in \mathcal{C}_t}$.}
            \STATE{Compute $\Bar{D}_{KL}^{(t)} = \frac{1}{C} \sum_{c \in \mathcal{C}_t} \Bar{D}^{(t, c)}_{KL}$.}
            \IF{\texttt{UPDATE}}
            \STATE{$I^{(t)} \gets \texttt{Aggregate-Block-Locations}\left(\{I^{(t, c)}\}_{c \in \mathcal{C}_t}\right)$ // See Algorithm~\ref{algorithm:aggregate_indices}.}
            \STATE{\texttt{UPDATE} $=$ \texttt{False}.} 
            \ELSE
            \STATE{$I^{(t, c)} \gets I^{(t)}$ for all $c \in \mathcal{C}_t$.}
            \STATE{\textbf{if} $\Bar{D}_{KL}^{(t)} > \bar{D}_{KL}^{\text{max}}$ \textbf{or} $\Bar{D}_{KL}^{(t)} < \bar{D}_{KL}^{\text{min}}$ \textbf{then} \texttt{UPDATE} $=$ \texttt{True} \textbf{else} \texttt{UPDATE} $=$ \texttt{False}.}
            \ENDIF
            \FOR{$c \in \mathcal{C}_t$}
            \STATE{$\{\mathbf{\hat{v}}^{(t,c)}_{[i]}\}_{i=1}^M \gets$ \texttt{KLMS}-Decoder$(p_{\theta^{(t)}}, I^{(t, c)}, K)$ // See Algorithm~\ref{algorithm:klms_decoder}.}
            \ENDFOR
            \STATE{Compute $\bm{w}^{(t)} = \bm{w}^{(t-1)} - \eta_S \frac{1}{C}\sum_{c \in \mathcal{C}_t}\mathbf{\hat{v}}^{(t, c)}$.}
            \STATE{Broadcast \texttt{UPDATE}, $I^{(t)}$ and $\bm{w}^{(t)}$ to the clients.}
            \ENDFOR \\
            \end{algorithmic}
        \caption{\texttt{SignSGD-KLMS}.}
        \label{algorithm:sign_sgd_klm}
    \end{algorithm}

\newpage 

\subsection{\texttt{SGLD-KLMS}} \label{app:sgld_klm}
The pseudocode for \texttt{SGLD-KLMS} can be found in Algorithm~\ref{algorithm:sgld_klm}.

\begin{algorithm}[!h]
        {\bf Hyperparameters:} server learning rate $\eta_S$, minibatch size $B$, thresholds to update block locations $\bar{D}_{KL}^{\text{max}}$, $\bar{D}_{KL}^{\text{min}}$, maximum block size \texttt{MAX\_BLOCK\_SIZE}.\\
        {\bf Inputs:} number of iterations $T$, initial block size $S$, number of samples $K$, initial number of blocks $M=\lceil \frac{d}{S} \rceil$, target KL divergence $D_{KL}^{\text{target}}$.\\
        {\bf Output:} samples $\left\{\theta^{(t)}\right\}_{t=1}^T$.\\
    \vspace{-.1in}
        \begin{algorithmic}
            \STATE{At the server, initialize a random network with weight vector $\theta^{(0)} \in \mathbb{R}^d$ and broadcast it to the clients; initialize \texttt{UPDATE}$\gets$TRUE and the block locations $I_i^{(t)}=[(i-1)S: iS]$ for $i=1, \dots, M$ and broadcast to the clients.}
            \FOR{$t=1, \dots, T$} 
            \STATE{Sample a subset $\mathcal{C}_t \subset \{1, \dots, N\}$ of $|\mathcal{C}_t|=C$ clients without replacement.}
            \STATE{\textbf{On Client Nodes:}}
            \FOR{$c \in \mathcal{C}_t$}
            \STATE{Receive $\theta^{(t-1)}$ from the server and set $\phi^{(t, c)} \gets \theta^{(t-1)}$.}
            \STATE{Compute a stochastic gradient of the potential $H(\phi^{(t, c)})$ as in \texttt{QLSD} in Algorithm~\ref{algorithm:qlsd}.}
            \STATE{Set $p_{\theta^{(t)}} \gets \mathcal{N}\left(0, \sqrt{\frac{2}{\gamma C^2}} \bm{I}_d\right)$.}
            \STATE{Set $q_{\phi^{(t, c)}} \gets \mathcal{N}\left(H(\phi^{(t, c)}), \sqrt{\frac{2}{\gamma C^2}} \bm{I}_d\right)$.}
            \IF{\texttt{UPDATE}}
            \STATE{$\{k^*_{[i]}\}_{i=1}^M, I^{(t, c)} \gets  \texttt{Adaptive-KLMS}(p_{\theta^{(t)}}, q_{\phi^{(t, c)}},  I^{(t)}, D_{KL}^{\text{target}})$ // See Algorithm~\ref{algorithm:adaptive_sample_index}.}
            \STATE{$M \gets \text{length}(I^{(t,c)})$. // New number of blocks.}
            \ELSE
            \STATE{$\{k^*_{[i]}\}_{i=1}^M \gets  \texttt{Adaptive-KLMS}(p_{\theta^{(t)}}, q_{\phi^{(t, c)}},  I^{(t)}, D_{KL}^{\text{target}})$ // See Algorithm~\ref{algorithm:adaptive_sample_index}.}
            \ENDIF
            \STATE{Send $\{k^*_{[i]}\}_{i=1}^M$ with $K \cdot M$ bits and the average KL divergence across blocks $\bar{D}_{KL}^{(t,c)} \gets \frac{1}{M} \sum_{m=1}^M D_{KL}(q_{\phi^{(t,c)}_{[I_m]}} \| p_{\theta^{(t,g)}_{[I_m]}}))$ with $32$ bits to the server.}
            \IF{\text{UPDATE}}
            \STATE{Send $I^{(t, c)}$ with $M \cdot \log_2($\texttt{MAX\_BLOCK\_SIZE}$)$ bits.}
            \ENDIF
            \ENDFOR \\
            \STATE
            \STATE{\textbf{On the Server Node:}}
            \STATE{Receive the selected indices $\{k^*_{[i]}\}_{i=1}^M$, and the average KL divergences $\{\Bar{D}^{(t, c)}_{KL}\}_{c \in \mathcal{C}_t}$.}
            \STATE{Compute $\Bar{D}_{KL}^{(t)} = \frac{1}{C} \sum_{c \in \mathcal{C}_t} \Bar{D}^{(t, c)}_{KL}$.}
            \IF{\texttt{UPDATE}}
            \STATE{$I^{(t)} \gets \texttt{Aggregate-Block-Locations}\left(\{I^{(t, c)}\}_{c \in \mathcal{C}_t}\right)$ // See Algorithm~\ref{algorithm:aggregate_indices}.}
            \STATE{\texttt{UPDATE} $=$ \texttt{False}.} 
            \ELSE
            \STATE{$I^{(t, c)} \gets I^{(t)}$ for all $c \in \mathcal{C}_t$.}
            \STATE{\textbf{if} $\Bar{D}_{KL}^{(t)} > \bar{D}_{KL}^{\text{max}}$ \textbf{or} $\Bar{D}_{KL}^{(t)} < \bar{D}_{KL}^{\text{min}}$ \textbf{then} \texttt{UPDATE} $=$ \texttt{True} \textbf{else} \texttt{UPDATE} $=$ \texttt{False}.}
            \ENDIF
            \FOR{$c \in \mathcal{C}_t$}
            \STATE{$\{\hat{H}(\phi^{(t, c)}_{[i]}\}_{i=1}^M \gets$ \texttt{KLMS}-Decoder$(p_{\theta^{(t)}}, I^{(t, c)}, K)$ // See Algorithm~\ref{algorithm:klms_decoder}.}
            \ENDFOR
            \STATE{Compute $\theta^{(t)} = \theta^{(t-1)} - \eta_S \frac{1}{C}\sum_{c \in \mathcal{C}_t} \hat{H}(\phi^{(t, c)})$.}
            \STATE{Broadcast \texttt{UPDATE}, $I^{(t)}$ and $\theta^{(t)}$ to the clients.}
            \ENDFOR \\
            \end{algorithmic}
        \caption{\texttt{SGLD-KLMS}.}
        \label{algorithm:sgld_klm}
    \end{algorithm}

\clearpage
\section{Additional Experimental Details}
\label{sec:additional_details_app}
In Tables~\ref{tab:architectures}, ~\ref{tab:resnet18}, and \ref{table:lenet_arch}, we provide the architectures for all the models used in our experiments, namely \texttt{CONV4}, \texttt{CONV6}, ResNet-18, and LeNet. In the non-Bayesian experiments, clients performed three local epochs with a batch size of 128 and a local learning rate of 0.1; while in the Bayesian experiments, they performed one local epoch. We conducted our experiments on NVIDIA Titan X GPUs on an internal cluster server, using 1 GPU per one run.

\setlength{\tabcolsep}{3pt}
\begin{table*}[!h]
\centering
\caption{Architectures for $\mathtt{CONV}$$\mathtt{4}$ and $\mathtt{CONV}$$\mathtt{6}$ models used in the experiments. 
}
\begin{tabular}{ccc}
\toprule
\textbf{Model}&  $\mathtt{CONV}$-$\mathtt{4}$  & $\mathtt{CONV}$-$\mathtt{6}$   \\ \midrule
\centered{ \\ Convolutional \\ Layers}& \centered{ \\ 64, 64, pool \\ 128, 128, pool}
& \centered{ 64, 64, pool \\ 128, 128, pool \\ 256, 256, pool } 
\\ \midrule
\centered{Fully-Connected \\ Layers}& \centered{256, 256, 10}
& \centered{256, 256, 10} 
\\
\bottomrule
\\
\end{tabular}
\label{tab:architectures}
\end{table*}

\begin{table}[h!]
\caption{ResNet-18 architecture.}
\label{tab:resnet18}
\centering
\begin{tabular}{c|c}
\hline
\textbf{Name}& \textbf{Component}\\
\hline
conv1 & $3\times3$ conv, 64 filters. stride 1, BatchNorm \\
\hline
Residual Block 1 & 
$
\begin{bmatrix}
    3 \times 3 \text{ conv, } 64 \text{ filters} \\
    3 \times 3 \text{ conv, } 64 \text{ filters}
\end{bmatrix}
\times 2$ \\
\hline
Residual Block 2 & 
$
\begin{bmatrix}
    3 \times 3 \text{ conv, } 128 \text{ filters} \\
    3 \times 3 \text{ conv, } 128 \text{ filters}
\end{bmatrix}
\times 2$ \\
\hline
Residual Block 3 & $
\begin{bmatrix}
    3 \times 3 \text{ conv, } 256 \text{ filters} \\
    3 \times 3 \text{ conv, } 256 \text{ filters}
\end{bmatrix}
\times 2$ \\
\hline
Residual Block 4 & $
\begin{bmatrix}
    3 \times 3 \text{ conv, } 512 \text{ filters} \\
    3 \times 3 \text{ conv, } 512 \text{ filters}
\end{bmatrix}
\times 2$ \\
\hline
Output Layer & $4 \times 4$ average pool stride 1, fully-connected, softmax \\
\hline
\end{tabular}
\end{table}

\begin{table}[h!]
\centering
\caption{LeNet architecture for MNIST experiments.}
\begin{tabular}{c|c}
\hline
\textbf{Name}& \textbf{Component}\\
\hline
conv1 & [$5 \times 5$ conv, 20 filters, stride 1], ReLU, $2 \times 2$ max pool \\
\hline
conv2 & [$5 \times 5$ conv, 50 filters, stride 1], ReLU, $2 \times 2$ max pool \\
\hline
Linear & Linear $800 \rightarrow 500$, ReLU \\
\hline
Output Layer & Linear $500 \rightarrow 10$ \\
\hline
\end{tabular}
\label{table:lenet_arch}
\end{table}

During non-i.i.d. data split, we choose the size of each client's dataset $|\mathcal{D}^{(n)}| = D_n$ by first uniformly sampling an integer $j_n$ from $\{10, 11, \dots, 100\}$. Then, a coefficient $\frac{j_n}{\sum_n j_j}$ is computed, representing the size of the local dataset $D_n$ as a fraction of the full training dataset size. Moreover, we impose a maximum number of different labels, or classes, $c_{\text{max}}$, that each client can see. This way, highly unbalanced local datasets are generated.

\clearpage
\section{Additional Experimental Results}
\label{sec:additional_results_app}

\subsection{\texttt{KLMS} on a Toy Model} \label{sec:sampling_experiments}

We provide additional insights on \texttt{KLMS} employed in a distributed setup similar to that of FL. Specifically, we design a set of experiments in which the server keeps a global distribution $p = \mathcal{N}(0, 1)$, and $N$ clients need to communicate samples according to their local client-only distributions $\{ q^{(n)}\}_{n=1}^N = \{ \mathcal{N}\left(\mu^{(n)}, 1\right)\}_{n=1}^N$, which are induced by a global and unknown distribution $q = \mathcal{N}\left(\mu, 1\right)$. Each client $n$ applies \texttt{KLMS} (see Figure~\ref{klms_diagram}) to communicate a sample $x^{(n)}$ from $q^{(n)}$ using as coding distribution the global distribution $p$. The server then computes $\hat{\mu} = \frac{1}{N} \sum_{n=1}^N x^{(n)}$ to estimate $\mu$. We study the effect of $N$, i.e., the number of clients communicating their samples, on the estimation of $\mu$ in different scenarios by varying the rate adopted by the clients (Appendix~\ref{sub:estimation_r_n}), and the complexity of the problem (Appendix~\ref{sub:estimation_r_n_het}).

\subsubsection{The Effect of the Overhead $r$}
\label{sub:estimation_r_n}

In this example, we simulate an i.i.d. data split by providing all the clients with the same local client-only distribution $q^{(n)}= \mathcal{N}\left( 0.8, 1 \right)$ $\forall n \in [N]$. We analyze the bias in the estimation of $\mu$ by computing a Monte Carlo average of the discrepancy in (\ref{eq:discrepancy}) (see Figure~\ref{fig:estimation_r}-(\textbf{right})), together with its empirical standard deviation (see Figure~\ref{fig:estimation_r}-(\textbf{left})). From Figure~\ref{fig:estimation_r}, we can observe that, as conjectured, the standard deviation of the gap decreases when $N$ increases, meaning that the estimation is more accurate around its mean value, which is also better for larger values of $N$. Also, as expected, a larger value of the overhead $r$ induces better accuracy.  

\begin{figure}[h!]
     \centering
     \begin{subfigure}[b]{0.48\textwidth}
         \centering
         \includegraphics[width=\textwidth]{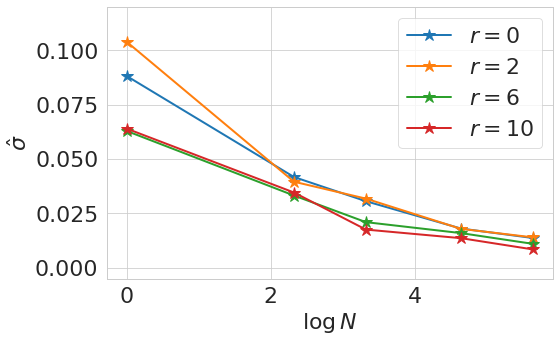}
     \end{subfigure}
     \hfill
     \begin{subfigure}[b]{0.46\textwidth}
         \centering
         \includegraphics[width=\textwidth]{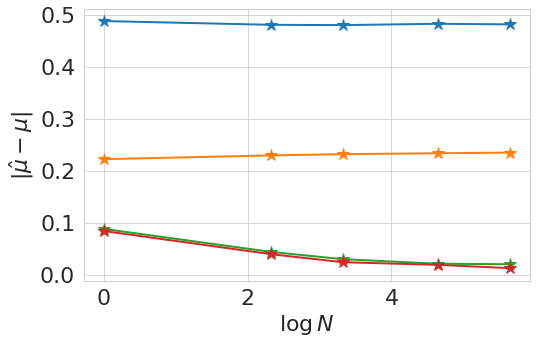}
     \end{subfigure}
        \caption{Estimation gap statistics for different values of $r$, as a function of the number of participating clients $N$. (\textbf{left}) The empirical standard deviation of the estimation gap, computed over $100$ runs. (\textbf{right}) Estimation gap between $\mu$ and $\hat{\mu}$ averaged over $100$ runs.}
        \label{fig:estimation_r}
\end{figure}

\subsubsection{The Effect of Non-i.i.d. Data Split}
\label{sub:estimation_r_n_het}

In this other set of experiments, we simulate a non-i.i.d. data split by inducing, starting from the same global distribution $p$, different local client-only distributions, simulating drifts in updates statistics due to data heterogeneity. Specifically, we set again $\mu = 0.8$, and then, $\forall n \in [N]$, $\mu^{(n)} = 0.8 + u^{(n)}$, where $u^{(n)} \sim \text{Unif}([-\eta, \eta])$, for $\eta \in \{ 0.05, 0.1, 0.25, 0.4\}$. In all experiments, $r = 6$. As we can see from Figure~\ref{fig:estimation_het}, when $N$ is very small ($\sim 1$), then high level of heterogeneity in the update statistics can indeed lead to poor estimation accuracy. However, for reasonable values of $N$, this effect is considerably mitigated, suggesting that for real-world applications of FL, where the number of devices participating to each round can be very large, \texttt{KLMS} can still improve state-of-the-art compression schemes by large margin, as reported in the results of Section~\ref{sec:non_bayesian} and Appendix~\ref{sec:additional_results_app_more_details}.

\begin{figure}[h!]
     \centering
     \begin{subfigure}[b]{0.48\textwidth}
         \centering
         \includegraphics[width=\textwidth]{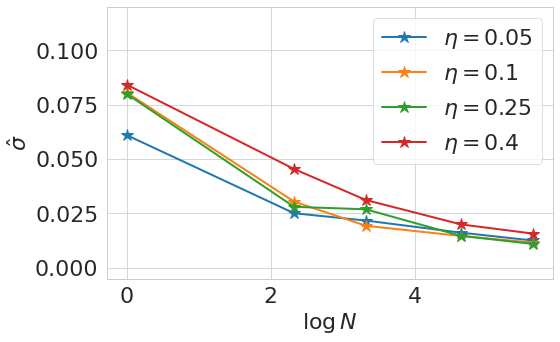}
     \end{subfigure}
     \hfill
     \begin{subfigure}[b]{0.48\textwidth}
         \centering
         \includegraphics[width=\textwidth]{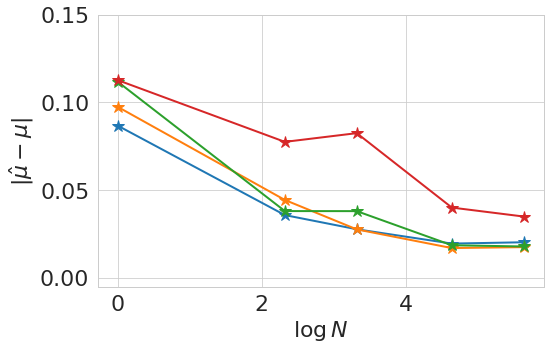}
     \end{subfigure}
        \caption{Estimation gap statistics for different values of $\eta$, as a function of the number of participating clients $N$. (\textbf{left}) The empirical standard deviation of the estimation gap, computed over $100$ runs. (\textbf{right}) Estimation gap between $\mu$ and $\hat{\mu}$ averaged over $100$ runs.}
        \label{fig:estimation_het}
\end{figure}

\subsection{Additional Results with Non-i.i.d. CIFAR-10}
\label{sec:additional_results_app_more_details}
In Figure~\ref{fig:acc_bitrate_noniid_nonbayesian_c2_appendix}, we give the results on \texttt{CONV6} and \texttt{ResNet-18} on non-i.i.d. CIFAR-10 with $c_{\text{max}}=2$ and CIFAR-100 with $c_{\text{max}}=20$, respectively. In both experiments, $ 10$ clients out of $100$ clients participate in each round. It is seen that similar accuracy and bitrate improvements are observed to the non-i.i.d. results in Table~\ref{tab:acc_bitrate_noniid_nonbayesian}.
\begin{figure}[h!]
    \centering
    \includegraphics[width=\textwidth]{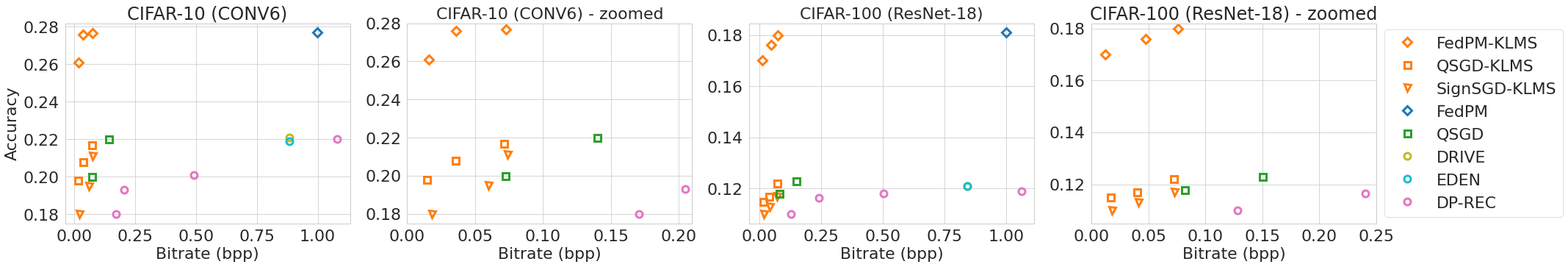}
    \caption{Comparison of \texttt{FedPM-KLM}, \texttt{QSGD-KLM}, and \texttt{SignSGD-KLM} with \texttt{FedPM}~\citep{isik2023sparse}, \texttt{QSGD}~\citep{alistarh2017qsgd}, \texttt{DRIVE}~\citep{vargaftik2021drive}, \texttt{EDEN}~\citep{vargaftik2022eden}, and \texttt{DP-REC}~\citep{liu2021fedprune} with non i.i.d. split and $10$ out of $100$ clients participating every round.
    }
\vspace{-0.1in}
    \label{fig:acc_bitrate_noniid_nonbayesian_c2_appendix}
\end{figure}

\subsection{Stack Overflow Experiments}
Table~\ref{tab:stackoverflow} shows additional results on the Stack Overflow dataset~\citep{caldas2018leaf} constructed with real posts, where \texttt{KLMS} reduces the bitrate by $16$ times. 

 \begin{table}[h!]
    \centering
\caption{Results on Stack Overflow (real data split).}
\begin{tabular}{c|c|c|cc|cc|cc}
 \toprule
         & \texttt{DRIVE} & \texttt{EDEN} & \texttt{SignSGD} & \texttt{SignSGD-KLMS} & \texttt{FedPM} & \texttt{FedPM-KLMS} & \texttt{QSGD} & \texttt{QSGD-KLMS} \\
 \midrule
  Accuracy  & $0.216$ & $0.216$ & $0.186$ & $\bm{0.211}$&  $0.240$ & $\bm{0.240}$ & $0.210$ & $\bm{0.224}$\\ 
  Bitrate   & $0.980$ & $0.980$ & $1.000$ & $\bm{0.016}$& $0.910$ & $\bm{0.015}$ & $0.120$ & $\bm{0.016}$\\
\bottomrule
\end{tabular}
    \label{tab:stackoverflow}
\end{table}

\subsection{Bayesian FL Experiments with Non-i.i.d. Data and Partial Client Participation}
Table~\ref{tab:bayesian_fl_noniid} shows additional Bayesian FL results with non-iid data split and partial client participation. We cover a variety of combinations of data splits and partial/full participation and observe similar gains as Table~\ref{tab:acc_bitrate_noniid_nonbayesian}.

\begin{table}[h!]
\centering
\caption{Bayesian FL results ($c_{\text{max}}=4$ when non-iid.)}

\begin{tabular}{ccc|cc|cc}
 \toprule
& \multicolumn{2}{c}{$\mathbf{\rho=10/10}$\textbf{, non-iid}} &\multicolumn{2}{c}{$\mathbf{\rho=10/50}$\textbf{, iid}} &\multicolumn{2}{c}{$\mathbf{\rho=10/50}$\textbf{, non-iid}}\\
   & \texttt{QLSD} & \texttt{QLSD-KLMS} & \texttt{QLSD} & \texttt{QLSD-KLMS} & \texttt{QLSD} & \texttt{QLSD-KLMS} \\
 \midrule
  Accuracy  & $0.875$ & $\bm{0.922}$ & $0.868$ & $\textbf{0.922}$ & $0.854$ & $\bm{0.920}$\\ 
  Bitrate   & $0.48$ & $\bm{0.08}$ & $0.50$ & $\bm{0.07}$ & $0.51$ & $\bm{0.06}$\\ 
\bottomrule
\end{tabular} \label{tab:bayesian_fl_noniid}
\vspace{-4mm} 
\end{table}

\subsection{Confidence Intervals}
Finally, we report the confidence intervals for all the experimental results in the paper in Tables~\ref{tab:iid_cifar10_confidence}, \ref{tab:iid_cifar100_confidence}, \ref{tab:iid_mnist_confidence}, \ref{tab:iid_emnist_confidence}, \ref{tab:non_iid_2_class_cifar10_confidence}, \ref{tab:non_iid_4_class_cifar10_confidence}, \ref{tab:non_iid_20_class_cifar100_confidence}, and \ref{tab:non_iid_40_class_cifar100_confidence} corresponding to Tables~\ref{tab:acc_bitrate_iid_nonbayesian} and~\ref{tab:acc_bitrate_noniid_nonbayesian}, and Figure~\ref{fig:acc_bitrate_noniid_nonbayesian_c2_appendix}.

\begin{table}[h!]
\centering
\caption{Average bitrate $\pm \sigma$ vs final accuracy $\pm \sigma$ in i.i.d. split CIFAR-10 with full client participation. The training duration was set to $t_{\text{max}} = 400$ rounds.}
\begin{tabular}{ ccc }
 \toprule
\textbf{Framework} &  \textbf{Bitrate} & \textbf{Accuracy} \\
 \midrule
 \midrule
\texttt{FedPM-KLMS} (ours) & 0.070 $\pm$ 0.0001 & 0.787 $\pm$ 0.0012\\
\texttt{FedPM-KLMS} (ours) & 0.004 $\pm$ 0.0001 & 0.786 $\pm$ 0.0010\\
\texttt{FedPM-KLMS} (ours) & 0.014 $\pm$ 0.0001 & 0.786 $\pm$ 0.0012 \\
\texttt{QSGD-KLMS} (ours) & 0.071 $\pm$ 0.0001 & 0.765 $\pm$ 0.0011 \\
\texttt{QSGD-KLMS} (ours) & 0.0355 $\pm$ 0.0001 & 0.761 $\pm$ 0.0012 \\
\texttt{QSGD-KLMS} (ours) & 0.0142 $\pm$ 0.0001 & 0.755 $\pm$ 0.0010 \\
\texttt{SignSGD-KLMS} (ours) & 0.072 $\pm$ 0.0002 & 0.745 $\pm$ 0.0008 \\
\texttt{SignSGD-KLMS} (ours) & 0.040 $\pm$ 0.0002 & 0.745 $\pm$ 0.0008 \\
\texttt{SignSGD-KLMS} (ours) & 0.015 $\pm$ 0.0001 & 0.739 $\pm$ 0.0008 \\
\midrule
\texttt{FedPM}~\citep{isik2023sparse} & 0.845 $\pm$ 0.0001 & 0.787 $\pm$ 0.0011 \\
\texttt{QSGD}~\citep{alistarh2017qsgd} & 0.140 $\pm$ 0.0000 & 0.766 $\pm$ 0.0012 \\
\texttt{QSGD}~\citep{alistarh2017qsgd} & 0.072 $\pm$ 0.0000 & 0.753 $\pm$ 0.0013 \\
\texttt{SignSGD}~\citep{bernstein2018signsgd} & 0.993 $\pm$ 0.0012 & 0.705 $\pm$ 0.0021 \\
\texttt{TernGrad}~\citep{wen2017terngrad} & 1.100 $\pm$ 0.0001 & 0.680 $\pm$ 0.0016 \\
\texttt{DRIVE}~\citep{vargaftik2021drive} & 0.890 $\pm$ 0.0000 & 0.760 $\pm$ 0.0010 \\
\texttt{EDEN}~\citep{vargaftik2022eden} & 0.890 $\pm$ 0.0000 & 0.760 $\pm$ 0.0010 \\
\texttt{FedMask}~\citep{li2021fedmask} & 1.000 $\pm$ 0.0001 & 0.620 $\pm$ 0.0017  \\
\texttt{DP-REC}~\citep{triastcyn2021dp} & 1.12 $\pm$ 0.0001 & 0.720 $\pm$ 0.0011 \\
\texttt{DP-REC}~\citep{triastcyn2021dp} & 0.451 $\pm$ 0.0001 & 0.690 $\pm$ 0.0012 \\
\texttt{DP-REC}~\citep{triastcyn2021dp} & 0.188 $\pm$ 0.0001 & 0.640 $\pm$ 0.0011 \\
\texttt{DP-REC}~\citep{triastcyn2021dp} & 0.124 $\pm$ 0.0001 & 0.622 $\pm$ 0.0013 \\
\bottomrule
\end{tabular}
\label{tab:iid_cifar10_confidence}
\end{table}

\begin{table}[h!]
\centering
\caption{Average bitrate $\pm \sigma$ vs final accuracy $\pm \sigma$ in i.i.d. split CIFAR-100 with full client participation. The training duration was set to $t_{\text{max}} = 400$ rounds.}
\begin{tabular}{ ccc }
 \toprule
 \textbf{Framework} &  \textbf{Bitrate} & \textbf{Accuracy} \\
 \midrule
 \midrule
\texttt{FedPM-KLMS} (ours) & 0.072 $\pm$ 0.0001 & 0.469 $\pm$ 0.0010 \\
\texttt{FedPM-KLMS} (ours) & 0.040 $\pm$ 0.0001 & 0.461 $\pm$ 0.0011 \\
\texttt{FedPM-KLMS} (ours) & 0.018 $\pm$ 0.0001 & 0.455 $\pm$ 0.0010 \\
\texttt{QSGD-KLMS} (ours) & 0.074 $\pm$ 0.0001 & 0.327 $\pm$ 0.0010 \\
\texttt{QSGD-KLMS} (ours) & 0.043 $\pm$ 0.0001 & 0.319 $\pm$ 0.0012 \\
\texttt{QSGD-KLMS} (ours) & 0.020 $\pm$ 0.0001 & 0.320 $\pm$ 0.0010 \\
\texttt{SignSGD-KLMS} (ours) & 0.073 $\pm$ 0.0001 & 0.260 $\pm$ 0.0014 \\
\texttt{SignSGD-KLMS} (ours) & 0.041 $\pm$ 0.0001 & 0.259 $\pm$ 0.0014 \\
\texttt{SignSGD-KLMS} (ours) & 0.018 $\pm$ 0.0001 & 0.250 $\pm$ 0.0014 \\
\midrule
\texttt{FedPM}~\citep{isik2023sparse} & 0.880 $\pm$ 0.0001 & 0.470 $\pm$ 0.0010 \\
\texttt{QSGD}~\citep{alistarh2017qsgd} & 0.150 $\pm$ 0.0000 & 0.335 $\pm$ 0.0011 \\
\texttt{QSGD}~\citep{alistarh2017qsgd} & 0.082 $\pm$ 0.0000 & 0.330 $\pm$ 0.0011 \\
\texttt{SignSGD}~\citep{bernstein2018signsgd} & 0.999 $\pm$ 0.0002 & 0.230 $\pm$ 0.0019 \\
\texttt{TernGrad}~\citep{wen2017terngrad} & 1.070 $\pm$ 0.0001 & 0.220 $\pm$ 0.0015 \\
\texttt{DRIVE}~\citep{vargaftik2021drive} & 0.540 $\pm$ 0.0000 & 0.320 $\pm$ 0.0011 \\
\texttt{EDEN}~\citep{vargaftik2022eden} & 0.540 $\pm$ 0.0000 & 0.320 $\pm$ 0.0010 \\
\texttt{FedMask}~\citep{li2021fedmask} & 1.000 $\pm$ 0.0001 & 0.180 $\pm$ 0.0014 \\
\texttt{DP-REC}~\citep{triastcyn2021dp} & 1.06 $\pm$ 0.0001 & 0.280 $\pm$ 0.0012 \\
\texttt{DP-REC}~\citep{triastcyn2021dp} & 0.503 $\pm$ 0.0001 & 0.240 $\pm$ 0.0012 \\
\texttt{DP-REC}~\citep{triastcyn2021dp} & 0.240 $\pm$ 0.0001 & 0.220 $\pm$ 0.0012 \\
\texttt{DP-REC}~\citep{triastcyn2021dp} & 0.128 $\pm$ 0.0001 & 0.170 $\pm$ 0.0012 \\
\bottomrule
\end{tabular}
\label{tab:iid_cifar100_confidence}
\end{table}

\begin{table}[h!]
\centering
\caption{Average bitrate $\pm \sigma$ vs final accuracy $\pm \sigma$ in i.i.d. split MNIST with full client participation. The training duration was set to $t_{\text{max}} = 200$ rounds.}
\begin{tabular}{ ccc }
 \toprule
\textbf{Framework} &  \textbf{Bitrate} & \textbf{Accuracy} \\
 \midrule
 \midrule
\texttt{FedPM-KLMS} (ours) & 0.067 $\pm$ 0.0001 & 0.9945 $\pm$ 0.0001 \\
\texttt{FedPM-KLMS} (ours) & 0.041 $\pm$ 0.0001 & 0.9945 $\pm$ 0.0001 \\
\texttt{FedPM-KLMS} (ours) & 0.014 $\pm$ 0.0001 & 0.9943 $\pm$ 0.0001 \\
\texttt{QSGD-KLMS} (ours) & 0.071 $\pm$ 0.0001 & 0.9940 $\pm$ 0.0001  \\
\texttt{QSGD-KLMS} (ours) & 0.041 $\pm$ 0.0001 & 0.9938 $\pm$ 0.0001  \\
\texttt{QSGD-KLMS} (ours) & 0.019 $\pm$ 0.0001 & 0.9935 $\pm$ 0.0001  \\
\texttt{SignSGD-KLMS} (ours) & 0.0720 $\pm$ 0.0001 & 0.9932 $\pm$ 0.0002 \\
\texttt{SignSGD-KLMS} (ours) & 0.0415 $\pm$ 0.0001 & 0.9930 $\pm$ 0.0002 \\
\texttt{SignSGD-KLMS} (ours) & 0.0230 $\pm$ 0.0001 & 0.9918 $\pm$ 0.0001 \\
\midrule
\texttt{FedPM}~\citep{isik2023sparse} & 0.99 $\pm$ 0.0001 & 0.995 $\pm$ 0.0001 \\
\texttt{QSGD}~\citep{alistarh2017qsgd} & 0.13 $\pm$ 0.0000 & 0.994 $\pm$ 0.0001 \\
\texttt{QSGD}~\citep{alistarh2017qsgd} & 0.080 $\pm$ 0.0000 & 0.994 $\pm$ 0.0001 \\
\texttt{SignSGD}~\citep{bernstein2018signsgd} & 0.999 $\pm$ 0.0012 & 0.990 $\pm$ 0.0004 \\
\texttt{TernGrad}~\citep{wen2017terngrad} & 1.05 $\pm$ 0.0001 & 0.980 $\pm$ 0.0003 \\
\texttt{DRIVE}~\citep{vargaftik2021drive} & 0.91 $\pm$ 0.0000 & 0.994 $\pm$ 0.0001 \\
\texttt{EDEN}~\citep{vargaftik2022eden} &  0.91 $\pm$ 0.0000 & 0.994 $\pm$ 0.0001 \\
\texttt{FedMask}~\citep{li2021fedmask} & 1.0 $\pm$ 0.0001 & 0.991 $\pm$ 0.0003 \\
\texttt{DP-REC}~\citep{triastcyn2021dp} & 0.996 $\pm$ 0.0001 & 0.991 $\pm$ 0.0001  \\
\texttt{DP-REC}~\citep{triastcyn2021dp} & 0.542 $\pm$ 0.0001 & 0.989 $\pm$ 0.0001  \\
\texttt{DP-REC}~\citep{triastcyn2021dp} & 0.191 $\pm$ 0.0001 & 0.988 $\pm$ 0.0001  \\
\texttt{DP-REC}~\citep{triastcyn2021dp} & 0.125 $\pm$ 0.0001 & 0.985 $\pm$ 0.0001  \\
\bottomrule
\end{tabular}
\label{tab:iid_mnist_confidence}
\end{table}

\begin{table}[h!]
\centering
\caption{Average bitrate $\pm \sigma$ vs final accuracy $\pm \sigma$ in i.i.d. split EMNIST with full client participation. The training duration was set to $t_{\text{max}} = 200$ rounds.}
\begin{tabular}{ ccc }
 \toprule
\textbf{Framework} &  \textbf{Bitrate} & \textbf{Accuracy} \\
 \midrule
 \midrule
\texttt{FedPM-KLMS} (ours) & 0.068  $\pm$ 0.0001 & 0.889 $\pm$ 0.0001 \\
\texttt{FedPM-KLMS} (ours) & 0.034 $\pm$ 0.0001 & 0.888 $\pm$ 0.0001 \\
\texttt{FedPM-KLMS} (ours) & 0.017 $\pm$ 0.0001 & 0.885 $\pm$ 0.0001 \\
\texttt{QSGD-KLMS} (ours) & 0.072 $\pm$ 0.0001 & 0.884 $\pm$ 0.0001 \\
\texttt{QSGD-KLMS} (ours) & 0.042 $\pm$ 0.0001 & 0.884 $\pm$ 0.0001\\
\texttt{QSGD-KLMS} (ours) & 0.022 $\pm$ 0.0001 & 0.883 $\pm$ 0.0001 \\
\texttt{SignSGD-KLMS} (ours) & 0.072 $\pm$ 0.0001 & 0.881 $\pm$ 0.0003 \\
\texttt{SignSGD-KLMS} (ours) & 0.044 $\pm$ 0.0001 & 0.880 $\pm$ 0.0003 \\
\texttt{SignSGD-KLMS} (ours) & 0.025 $\pm$ 0.0001 & 0.875 $\pm$ 0.0003 \\
\midrule
\texttt{FedPM}~\citep{isik2023sparse} & 0.890 $\pm$ 0.0001 & 0.890 $\pm$ 0.0001 \\
\texttt{QSGD}~\citep{alistarh2017qsgd} & 0.150 $\pm$ 0.0000 & 0.884 $\pm$ 0.0001 \\
\texttt{QSGD}~\citep{alistarh2017qsgd} & 0.086 $\pm$ 0.0000 & 0.882 $\pm$ 0.0001 \\
\texttt{SignSGD}~\citep{bernstein2018signsgd} & 1.0 $\pm$ 0.0001 & 0.873 $\pm$ 0.0005 \\
\texttt{TernGrad}~\citep{wen2017terngrad} & 1.1 $\pm$ 0.0001 & 0.870 $\pm$ 0.0005 \\
\texttt{DRIVE}~\citep{vargaftik2021drive} & 0.9 $\pm$ 0.0001 & 0.8835 $\pm$ 0.0001 \\
\texttt{EDEN}~\citep{vargaftik2022eden} & 0.9 $\pm$ 0.0001 & 0.8835 $\pm$ 0.0001 \\
\texttt{FedMask}~\citep{li2021fedmask} & 1.0 $\pm$ 0.0001 & 0.862 $\pm$ 0.0005 \\
\texttt{DP-REC}~\citep{triastcyn2021dp} & 1.100 $\pm$ 0.0001 & 0.885 $\pm$ 0.0001 \\
\texttt{DP-REC}~\citep{triastcyn2021dp} & 0.488 $\pm$ 0.0001 & 0.880 $\pm$ 0.0001 \\
\texttt{DP-REC}~\citep{triastcyn2021dp} & 0.196 $\pm$ 0.0001 & 0.873 $\pm$ 0.0001 \\
\texttt{DP-REC}~\citep{triastcyn2021dp} & 0.119 $\pm$ 0.0001 & 0.861 $\pm$ 0.0001 \\
\bottomrule
\end{tabular}
\label{tab:iid_emnist_confidence}
\end{table}

\begin{table}[h!]
\centering
\caption{Average bitrate $\pm \sigma$ vs final accuracy $\pm \sigma$ in non-IID split CIFAR-10 with $c_{\text{max}} = 2$, and partial participation with $10$ out of $100$ clients participating every round. The training duration was set to $t_{\text{max}} = 200$ rounds.}
\begin{tabular}{ ccc }
 \toprule
\textbf{Framework} &  \textbf{Bitrate} & \textbf{Accuracy} \\
 \midrule
 \midrule
\texttt{FedPM-KLMS} (ours) & 0.073 $\pm$ 0.0001 &  0.277 $\pm$ 0.0005 \\
\texttt{FedPM-KLMS} (ours) & 0.036 $\pm$ 0.0001 &  0.276 $\pm$ 0.0005 \\
\texttt{FedPM-KLMS} (ours) & 0.0161 $\pm$ 0.0001 &  0.261 $\pm$ 0.0004 \\
\texttt{QSGD-KLMS} (ours) & 0.071 $\pm$ 0.0001 & 0.277 $\pm$ 0.0005 \\
\texttt{QSGD-KLMS} (ours) & 0.036 $\pm$ 0.0001 & 0.208 $\pm$ 0.0005 \\
\texttt{QSGD-KLMS} (ours) & 0.014 $\pm$ 0.0001 & 0.198 $\pm$ 0.0005 \\
\texttt{SignSGD-KLMS} (ours) & 0.074 $\pm$ 0.0001 & 0.211 $\pm$ 0.0009 \\
\texttt{SignSGD-KLMS} (ours) & 0.060 $\pm$ 0.0001 & 0.195 $\pm$ 0.0008 \\
\texttt{SignSGD-KLMS} (ours) & 0.018 $\pm$ 0.0001 & 0.180 $\pm$ 0.0009 \\
\midrule
\texttt{FedPM}~\citep{isik2023sparse} & 0.997 $\pm$ 0.0001 & 0.277 $\pm$ 0.0006 \\
\texttt{QSGD}~\citep{alistarh2017qsgd} & 0.140 $\pm$ 0.0000 & 0.220 $\pm$ 0.0005 \\
\texttt{QSGD}~\citep{alistarh2017qsgd} & 0.072 $\pm$ 0.0000 & 0.200 $\pm$ 0.0005 \\
\texttt{DRIVE}~\citep{vargaftik2021drive} & 0.885 $\pm$ 0.0000 & 0.221 $\pm$ 0.0005 \\
\texttt{EDEN}~\citep{vargaftik2022eden} & 0.885 $\pm$ 0.0000 & 0.219 $\pm$ 0.0004 \\
\texttt{DP-REC}~\citep{triastcyn2021dp} & 1.080  $\pm$ 0.0001 & 0.220 $\pm$ 0.0007 \\
\texttt{DP-REC}~\citep{triastcyn2021dp} & 0.490  $\pm$ 0.0001 & 0.201 $\pm$ 0.0006 \\
\texttt{DP-REC}~\citep{triastcyn2021dp} & 0.205  $\pm$ 0.0001 & 0.193 $\pm$ 0.0006 \\
\texttt{DP-REC}~\citep{triastcyn2021dp} & 0.171 $\pm$ 0.0001 & 0.180 $\pm$ 0.0006 \\
\bottomrule
\end{tabular}
\label{tab:non_iid_2_class_cifar10_confidence}
\end{table}

\begin{table}[h!]
\centering
\caption{Average bitrate $\pm \sigma$ vs final accuracy $\pm \sigma$ in non-IID split CIFAR-10 with $c_{\text{max}} = 4$, and partial participation with $20$ out of $100$ clients participating every round. The training duration was set to $t_{\text{max}} = 200$ rounds.}
\begin{tabular}{ ccc }
 \toprule
\textbf{Framework} &  \textbf{Bitrate} & \textbf{Accuracy} \\
 \midrule
 \midrule
\texttt{FedPM-KLMS} (ours) & 0.073 $\pm$ 0.0001 & 0.612 $\pm$ 0.0010 \\
\texttt{FedPM-KLMS} (ours) & 0.036 $\pm$ 0.0001 & 0.606 $\pm$ 0.0010 \\
\texttt{FedPM-KLMS} (ours) & 0.016 $\pm$ 0.0001 & 0.599 $\pm$ 0.0010 \\
\texttt{QSGD-KLMS} (ours) & 0.071 $\pm$ 0.0001 & 0.552 $\pm$ 0.0010 \\
\texttt{QSGD-KLMS} (ours) & 0.036 $\pm$ 0.0001 & 0.549 $\pm$ 0.0011 \\
\texttt{QSGD-KLMS} (ours) & 0.014 $\pm$ 0.0001 & 0.545 $\pm$ 0.0010 \\
\texttt{SignSGD-KLMS} (ours) & 0.074 $\pm$ 0.0001 & 0.530 $\pm$ 0.0013 \\
\texttt{SignSGD-KLMS} (ours) & 0.060 $\pm$ 0.0001 & 0.522 $\pm$ 0.0013 \\
\texttt{SignSGD-KLMS} (ours) & 0.018 $\pm$ 0.0001 & 0.518 $\pm$ 0.0013 \\
\midrule
\texttt{FedPM}~\citep{isik2023sparse} & 0.993 $\pm$ 0.0001 & 0.612 $\pm$ 0.0009 \\
\texttt{QSGD}~\citep{alistarh2017qsgd} & 0.140 $\pm$ 0.0000 & 0.552 $\pm$ 0.0010 \\
\texttt{QSGD}~\citep{alistarh2017qsgd} & 0.072 $\pm$ 0.0000 & 0.531 $\pm$ 0.0010 \\
\texttt{DRIVE}~\citep{vargaftik2021drive} & 0.888 $\pm$ 0.0000 & 0.526 $\pm$ 0.0010 \\
\texttt{EDEN}~\citep{vargaftik2022eden} & 0.888 $\pm$ 0.0000 & 0.528 $\pm$ 0.0010 \\
\texttt{DP-REC}~\citep{triastcyn2021dp} &  1.080 $\pm$ 0.0001 & 0.530 $\pm$ 0.0012 \\
\texttt{DP-REC}~\citep{triastcyn2021dp} &  0.490 $\pm$ 0.0001 & 0.521 $\pm$ 0.0012 \\
\texttt{DP-REC}~\citep{triastcyn2021dp} &  0.205 $\pm$ 0.0001 & 0.519 $\pm$ 0.0012 \\
\texttt{DP-REC}~\citep{triastcyn2021dp} &  0.171 $\pm$ 0.0001 & 0.506 $\pm$ 0.0012 \\
\bottomrule
\end{tabular}
\label{tab:non_iid_4_class_cifar10_confidence}
\end{table}

\begin{table}[h!]
\centering
\caption{Average bitrate $\pm \sigma$ vs final accuracy $\pm \sigma$ in non-IID split CIFAR-100 with $c_{\text{max}} = 20$, and partial participation with $10$ out of $100$ clients participating every round. The training duration was set to $t_{\text{max}} = 200$ rounds.}
\begin{tabular}{ ccc }
 \toprule
\textbf{Framework} &  \textbf{Bitrate} & \textbf{Accuracy} \\
 \midrule
 \midrule
\texttt{FedPM-KLMS} (ours) & 0.076 $\pm$ 0.0001 & 0.180 $\pm$ 0.0012 \\
\texttt{FedPM-KLMS} (ours) & 0.048 $\pm$ 0.00101 & 0.176 $\pm$ 0.0011 \\
\texttt{FedPM-KLMS} (ours) & 0.012 $\pm$ 0.0001 & 0.170 $\pm$ 0.0011 \\
\texttt{QSGD-KLMS} (ours) & 0.072 $\pm$ 0.0001 & 0.122 $\pm$ 0.0012 \\
\texttt{QSGD-KLMS} (ours) & 0.040 $\pm$ 0.0001 & 0.117 $\pm$ 0.0012 \\
\texttt{QSGD-KLMS} (ours) & 0.017 $\pm$ 0.0001 & 0.115 $\pm$ 0.0012 \\
\texttt{SignSGD-KLMS} (ours) & 0.073 $\pm$ 0.0001 & 0.117 $\pm$ 0.0014 \\
\texttt{SignSGD-KLMS} (ours) & 0.041 $\pm$ 0.0001 & 0.113 $\pm$ 0.0014 \\
\texttt{SignSGD-KLMS} (ours) & 0.018 $\pm$ 0.0001 & 0.110 $\pm$ 0.0013 \\
\midrule
\texttt{FedPM}~\citep{isik2023sparse} & 0.999 $\pm$ 0.0001 & 0.181 $\pm$ 0.0011 \\
\texttt{QSGD}~\citep{alistarh2017qsgd} &  0.150 $\pm$ 0.0000 & 0.123 $\pm$ 0.0012 \\
\texttt{QSGD}~\citep{alistarh2017qsgd} &  0.082 $\pm$ 0.0000 & 0.118 $\pm$ 0.0012 \\
\texttt{DRIVE}~\citep{vargaftik2021drive} & 0.840 $\pm$ 0.0000 & 0.121 $\pm$ 0.0012 \\
\texttt{EDEN}~\citep{vargaftik2022eden} & 0.840 $\pm$ 0.0000  & 0.121 $\pm$ 0.0012 \\
\texttt{DP-REC}~\citep{triastcyn2021dp} & 1.060 $\pm$ 0.0001 & 0.119  $\pm$ 0.0012\\
\texttt{DP-REC}~\citep{triastcyn2021dp} & 0.503 $\pm$ 0.0001 & 0.118 $\pm$ 0.0013 \\
\texttt{DP-REC}~\citep{triastcyn2021dp} & 0.240  $\pm$ 0.0001 & 0.117 $\pm$ 0.0013\\
\texttt{DP-REC}~\citep{triastcyn2021dp} & 0.128 $\pm$ 0.0001 & 0.110  $\pm$ 0.0013 \\
\bottomrule
\end{tabular}
\label{tab:non_iid_20_class_cifar100_confidence}
\end{table}

\begin{table}[h!]
\centering
\caption{Average bitrate $\pm \sigma$ vs final accuracy $\pm \sigma$ in non-IID split CIFAR-100 with $c_{\text{max}} = 40$, and partial participation with $20$ out of $100$ clients participating every round. The training duration was set to $t_{\text{max}} = 200$ rounds.}
\begin{tabular}{ ccc }
 \toprule
\textbf{Framework} &  \textbf{Bitrate} & \textbf{Accuracy} \\
 \midrule
 \midrule
\texttt{FedPM-KLMS} (ours) & 0.074 $\pm$ 0.0001 & 0.488 $\pm$ 0.0013 \\
\texttt{FedPM-KLMS} (ours) & 0.048 $\pm$ 0.0001 & 0.484 $\pm$ 0.0013 \\
\texttt{FedPM-KLMS} (ours) & 0.012 $\pm$ 0.0001 & 0.480 $\pm$ 0.0013 \\
\texttt{QSGD-KLMS} (ours) & 0.072 $\pm$ 0.0001 & 0.428 $\pm$ 0.0013 \\
\texttt{QSGD-KLMS} (ours) & 0.040 $\pm$ 0.0001 & 0.424 $\pm$ 0.0013 \\
\texttt{QSGD-KLMS} (ours) & 0.017 $\pm$ 0.0001 & 0.419 $\pm$ 0.0013 \\
\texttt{SignSGD-KLMS} (ours) & 0.072 $\pm$ 0.0001 & 0.421 $\pm$ 0.0016 \\
\texttt{SignSGD-KLMS} (ours) & 0.044 $\pm$ 0.0001 & 0.419 $\pm$ 0.0016 \\
\texttt{SignSGD-KLMS} (ours) & 0.020 $\pm$ 0.0001 & 0.415 $\pm$ 0.0016 \\
\midrule
\texttt{FedPM}~\citep{isik2023sparse} & 0.980 $\pm$ 0.0001 & 0.488 $\pm$ 0.0012 \\
\texttt{QSGD}~\citep{alistarh2017qsgd} & 0.150 $\pm$ 0.0000 & 0.429 $\pm$ 0.0013  \\
\texttt{QSGD}~\citep{alistarh2017qsgd} & 0.082 $\pm$ 0.0000 & 0.424 $\pm$ 0.0013  \\
\texttt{DRIVE}~\citep{vargaftik2021drive} & 0.81 $\pm$ 0.0000 & 0.424 $\pm$ 0.0013 \\
\texttt{EDEN}~\citep{vargaftik2022eden} &  0.81 $\pm$ 0.0000 & 0.425 $\pm$ 0.0013 \\
\texttt{DP-REC}~\citep{triastcyn2021dp} & 1.00 $\pm$ 0.0001  & 0.424 $\pm$ 0.0014 \\
\texttt{DP-REC}~\citep{triastcyn2021dp} & 0.49 $\pm$ 0.0001  & 0.422 $\pm$ 0.0014 \\
\texttt{DP-REC}~\citep{triastcyn2021dp} & 0.27 $\pm$ 0.0001  & 0.412 $\pm$ 0.0014 \\
\texttt{DP-REC}~\citep{triastcyn2021dp} & 0.13 $\pm$ 0.0001  & 0.408 $\pm$ 0.0014 \\
\bottomrule
\end{tabular}
\label{tab:non_iid_40_class_cifar100_confidence}
\end{table}

\end{document}